\def\eqref#1{equation~\ref{#1}}
\def\1{\bm{1}}
\def\vtheta{{\bm{\theta}}}
\def\vg{{\bm{g}}}
\def\vu{{\bm{u}}}
\def\vv{{\bm{v}}}
\def\mE{{\bm{E}}}
\def\mL{{\bm{L}}}
\def\mR{{\bm{R}}}
\def\mW{{\bm{W}}}
\def\mX{{\bm{X}}}
\def\mY{{\bm{Y}}}
\DeclareMathAlphabet{\mathsfit}{\encodingdefault}{\sfdefault}{m}{sl}
\SetMathAlphabet{\mathsfit}{bold}{\encodingdefault}{\sfdefault}{bx}{n}
\def\gQ{{\mathcal{Q}}}
\def\sI{{\mathbb{I}}}
\def\rebuttal{\textcolor{black}}
\definecolor{c6}{HTML}{95Baa6}
\definecolor{c7}{HTML}{4b8dbc}
\definecolor{c5}{HTML}{C00000}
\newcommand{\red}[1]{\textcolor{c5}{#1}}
\newcommand{\gray}[1]{\textcolor{gray}{#1}}
\newcommand{\bx}[0]{{\boldsymbol{x}}}
\newcommand{\bepsilon}[0]{{\boldsymbol{\epsilon}}}
\definecolor{mycolor}{HTML}{a7caea} 
\definecolor{mygreen}{HTML}{95BAA6}
\definecolor{myblue}{HTML}{4b8dbc}
\definecolor{myyellow}{HTML}{E2CD89}
\newcommand{\yl}[1]{\textcolor{myyellow}{#1}}
\newcommand{\bl}[1]{\textcolor{myblue}{#1}}
\renewcommand{\emph}[1]{\textit{#1}}
\algnewcommand{\LineComment}[1]{\State \textcolor{gray}{$\triangleright$ #1}}
\algnewcommand{\Comment}[1]{\textcolor{gray}{$\triangleright$ #1}}
\newtcolorbox{tocbox}{
  enhanced,
  breakable,
  colback=white,
  colframe=black,
  boxrule=0.4pt, 
  left=8pt,right=8pt,top=8pt,bottom=8pt,
}
\theoremstyle{plain}
\newtheorem{theorem}{Theorem}[section]
\theoremstyle{definition}
\newtheorem{assumption}[theorem]{Assumption}
\theoremstyle{remark}
\newtheorem{remark}[theorem]{Remark}
\definecolor{citepcolor}{HTML}{0071BC}
\definecolor{linkcolor}{HTML}{ED1C24}
\definecolor{mydarkgreen}{rgb}{0.02,0.6,0.02}
\newcommand{\darkgreen}[1]{\textcolor{mydarkgreen}{#1}}
\def\mE{{\mathbf{E}}}
\def\mW{{\mathbf{W}}}
\def\mX{{\mathbf{X}}}
\def\mY{{\mathbf{Y}}}
\def\mL{{\mathbf{L}}}
\def\mR{{\mathbf{R}}}
\def\vgamma{{\bm{\gamma}}}
\title{QVGen: Pushing the Limit of \underline{Q}uantized \underline{V}ideo \underline{Gen}erative Models}
\author{Yushi Huang\textsuperscript{1, 3}\thanks{Work done during internships at SenseTime Research.}\quad
Ruihao Gong\textsuperscript{2, 3}\footnote[2]{}\quad
Jing Liu\textsuperscript{4}\quad
Yifu Ding\textsuperscript{2}\quad
Chengtao Lv\textsuperscript{3, 5}\footnote[1]{}\\
\textbf{Haotong Qin\textsuperscript{6}\quad
Jun Zhang\textsuperscript{1}\thanks{Corresponding authors.}}\\
\textsuperscript{1}Hong Kong University of Science and Technology\quad
\textsuperscript{2}Beihang University\quad
\textsuperscript{3}SenseTime Research\\
\textsuperscript{4}Monash University\quad
\textsuperscript{5}Nanyang Technological University\quad
\textsuperscript{6}ETH Zürich \\
\footnotesize{\texttt{\{huangyushi1, gongruihao, lvchengtao\}@sensetime.com\quad liujing\_95@outlook.com}}\\
\footnotesize{\texttt{yifuding@buaa.edu.cn\quad haotong.qin@pbl.ee.ethz.ch\quad 
eejzhang@ust.hk}}}
\begin{document}

\addtocontents{toc}{\protect\setcounter{tocdepth}{-10}}

\maketitle

\begin{abstract}
Video diffusion models (DMs) have enabled high-quality video synthesis. Yet, their substantial computational and memory demands pose serious challenges to real-world deployment, even on high-end GPUs. As a commonly adopted solution, quantization has achieved notable successes in reducing cost for image DMs, while its direct application to video DMs remains ineffective.  In this paper, we present \textit{QVGen}, a novel quantization-aware training (QAT) framework tailored for high-performance and inference-efficient video DMs under extremely low-bit quantization (\emph{i.e.}, $4$-bit or below). We begin with a theoretical analysis demonstrating that reducing the gradient norm is essential to facilitate convergence for QAT. To this end, we introduce auxiliary modules ($\Phi$) to mitigate large quantization errors, leading to significantly enhanced convergence. To eliminate the inference overhead of $\Phi$, we propose a \textit{rank-decay} strategy that progressively eliminates $\Phi$. Specifically, we repeatedly employ singular value decomposition (SVD) and a proposed rank-based regularization $\vgamma$ to identify and decay low-contributing components. This strategy retains performance while zeroing out additional inference overhead. Extensive experiments across $4$ state-of-the-art (SOTA) video DMs, with parameter sizes ranging from $1.3\text{B}{\sim}14\text{B}$, show that QVGen is \textit{the first} to reach full-precision comparable quality under $4$-bit settings. Moreover, it significantly outperforms existing methods. For instance, our $3$-bit CogVideoX-$2$B achieves improvements of $+25.28$ in Dynamic Degree and $+8.43$ in Scene Consistency on VBench. Code and models are available at \url{https://github.com/ModelTC/QVGen}. 

\end{abstract}

\section{Introduction}
Recently, advancements in artificial intelligence-generated content (AIGC) have led to significant breakthroughs in text~\citep{touvron2023llamaopenefficientfoundation, deepseekai2025deepseekv3technicalreport}, image~\citep{xie2025sana, flux2024}, and video synthesis~\citep{wanteam2025wanopenadvancedlargescale, kong2025hunyuanvideosystematicframeworklarge}. The development of video generative models, driven by the powerful diffusion transformer (DiT)~\citep{peebles2023scalable} architecture, has been particularly notable. Leading video diffusion models (DMs), such as closed-source OpenAI Sora~\citep{openai2024video} and Kling~\citep{kling2024}, and open-source \texttt{Wan} ~\citep{wanteam2025wanopenadvancedlargescale} and CogVideoX~\citep{yang2025cogvideoxtexttovideodiffusionmodels}, can successfully model \textit{motion dynamics}, \textit{semantic scenes}, \emph{etc.} Despite their impressive performance, these models demand high computational resources and substantial peak memory, especially when generating long videos at a high resolution. For example, \texttt{Wan} $14$B requires more than $30$ minutes and $50$GB of GPU memory to generate a $10$-second $720p$ resolution video clip on a single $H100$ GPU. 
\begin{figure}[!ht]
   \centering
   \setlength{\abovecaptionskip}{0.2cm}
   \begin{minipage}[b]{\linewidth}
       \begin{minipage}[b]{0.498\linewidth}
            \centering
            \begin{subfigure}[tp!]{\textwidth}
            \centering
            \subcaption{\texttt{BF16}}
            \includegraphics[width=\linewidth]{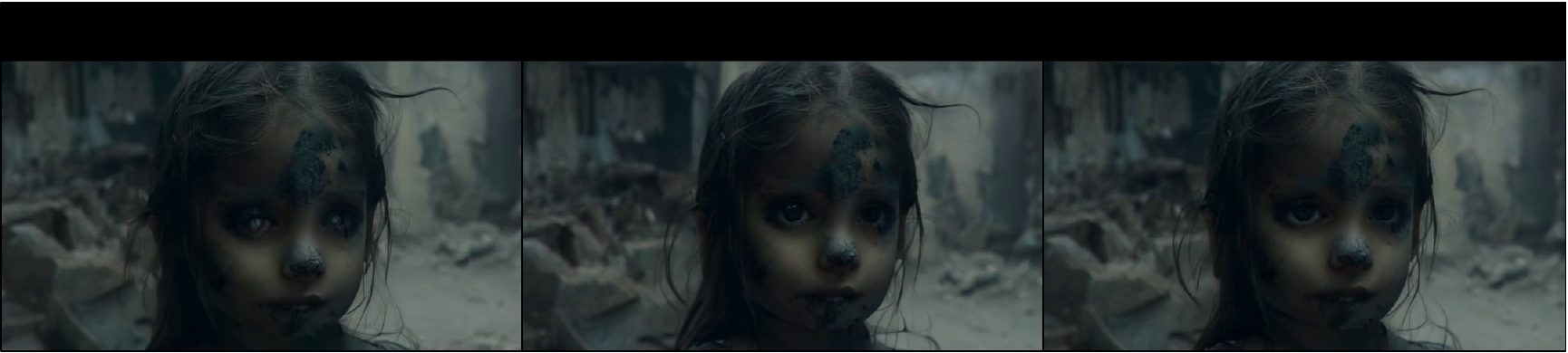}
            \end{subfigure}
       \end{minipage}\hfill
       \begin{minipage}[b]{0.498\linewidth}
            \centering
            \begin{subfigure}[tp!]{\linewidth}
            \centering
            \subcaption{\texttt{W4A4} QVGen (Ours)}
            \includegraphics[width=\linewidth]{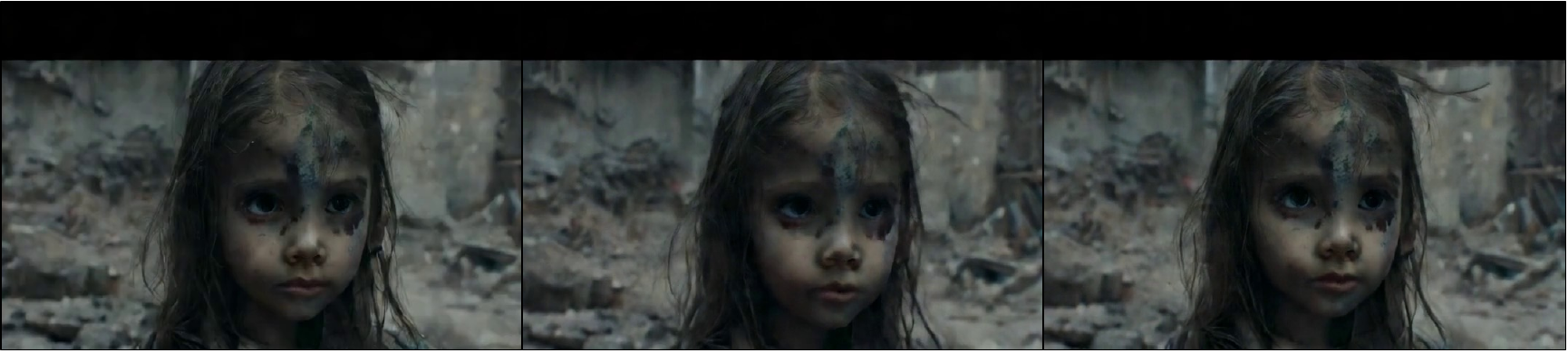}
            \end{subfigure}
       \end{minipage}

       \begin{minipage}[b]{0.498\linewidth}
            \centering
            \begin{subfigure}[tp!]{\textwidth}
            \centering
            \subcaption{\texttt{W4A4} EfficientDM~\citep{he2024efficientdm}}
            \includegraphics[width=\linewidth]{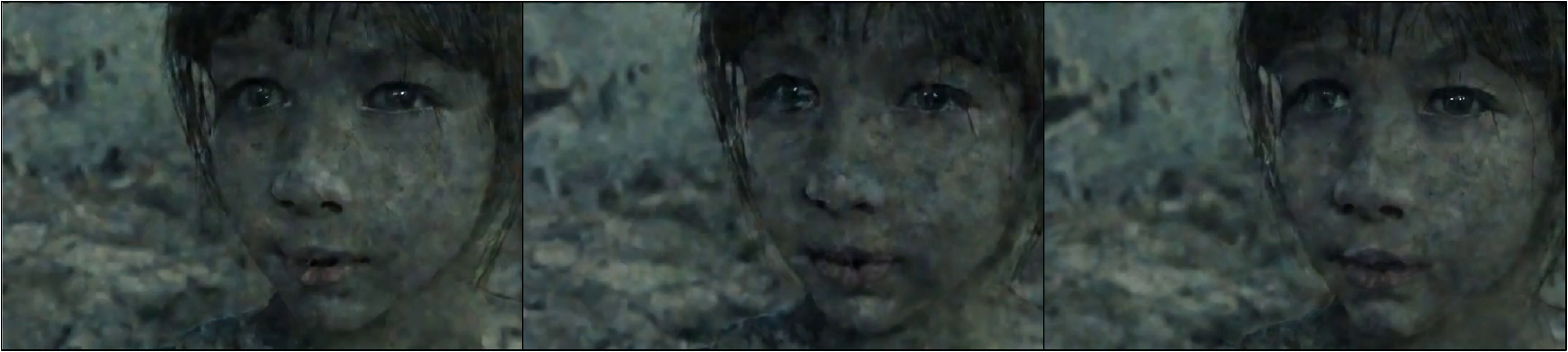}
            \end{subfigure}
       \end{minipage}\hfill
       \begin{minipage}[b]{0.498\linewidth}
            \centering
            \begin{subfigure}[tp!]{\linewidth}
            \centering
            \subcaption{\texttt{W4A4} Q-DM~\citep{li2023qdm}}
            \includegraphics[width=\linewidth]{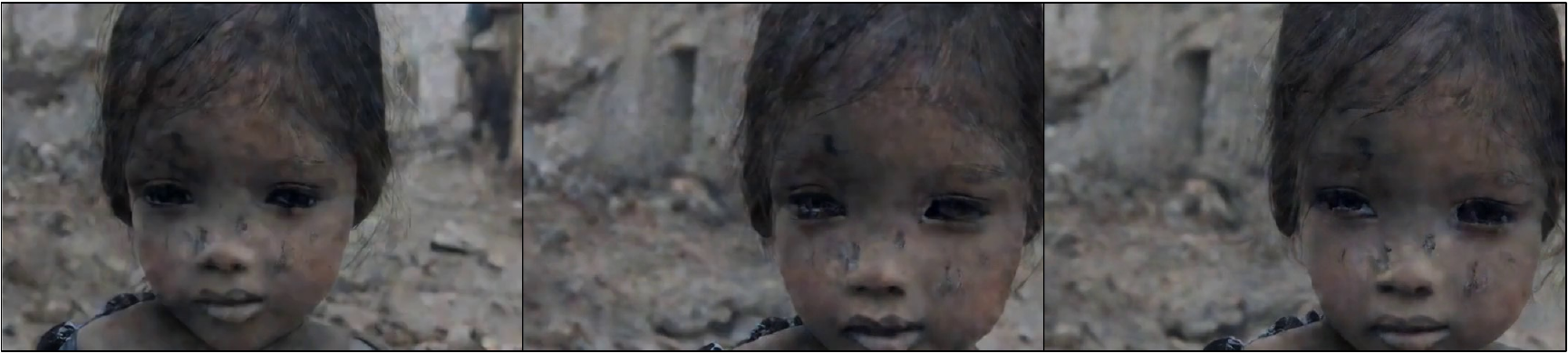}
            \end{subfigure}
       \end{minipage}

       \begin{minipage}[b]{0.498\linewidth}
            \centering
            \begin{subfigure}[tp!]{\textwidth}
            \centering
            \subcaption{\texttt{W4A4} LSQ~\citep{esser2020learnedstepsizequantization}}
            \includegraphics[width=\linewidth]{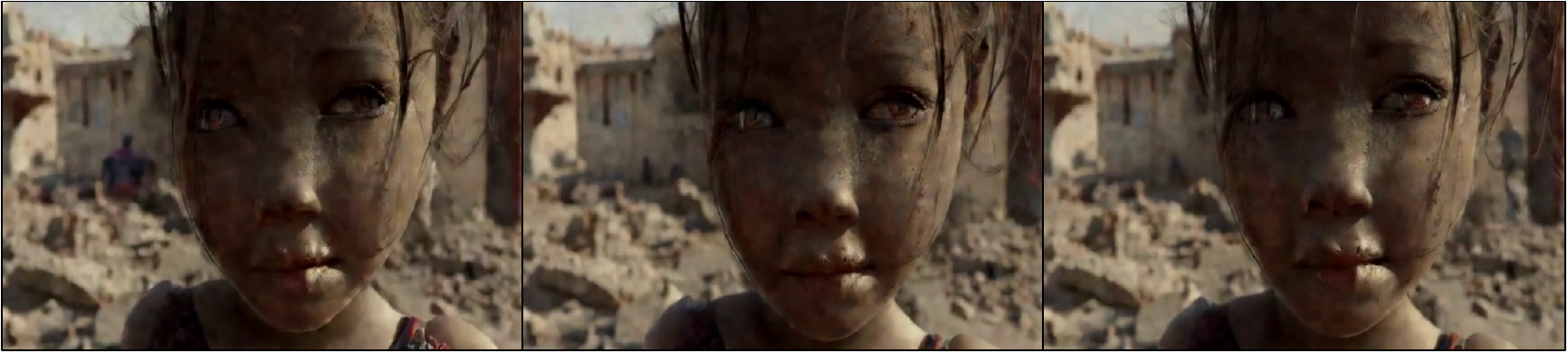}
            \end{subfigure}
       \end{minipage}\hfill
       \begin{minipage}[b]{0.498\linewidth}
            \centering
            \begin{subfigure}[tp!]{\linewidth}
            \centering
            \subcaption{\texttt{W4A6} SVDQuant~\citep{li2025svdquant}}
            \includegraphics[width=\linewidth]{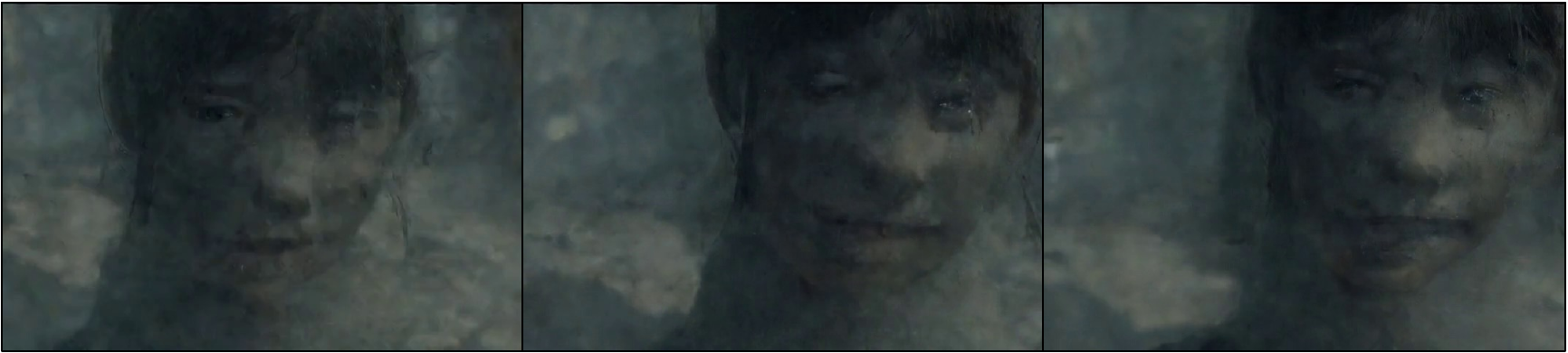}
            \end{subfigure}
       \end{minipage}
   \end{minipage}
   \begin{spacing}{0.7}
       {\tiny Text prompt: \textit{``In the haunting backdrop of a war-torn city, where ruins and crumbled walls tell a story of devastation, a poignant close-up frames a young girl. Her face is smudged with ash, a silent testament to the chaos around her. Her eyes glistening with a mix of sorrow and resilience, capturing the raw emotion of a world that has lost its innocence to the ravages of conflict.''}}
    \end{spacing}
   \caption{\label{fig:comp} Comparison of samples generated by CogVideoX-$2$B~\citep{yang2025cogvideoxtexttovideodiffusionmodels} with a fixed random seed. ``\texttt{W$x$A$y$}'' denotes ``$x$''-bit \textit{per-channel} weight and ``$y$''-bit \textit{per-token} activation quantization. Our approach far outperforms previous PTQ (\emph{i.e.}, (f)) and QAT (\emph{i.e.}, (c)-(e)) methods. To be noted, methods (c)-(f) have achieved noticeable performance for $4$-bit image DMs. More visual results can be found in Sec.~\ref{sec:vis}.}
   \vspace{-0.26in}
\end{figure}
Even worse, deploying such models is infeasible on most customer-grade PCs, let alone resource-constrained edge devices. As a result, their practical applications across various platforms face considerable challenges.

In light of these problems, model quantization, which maps high-precision (\emph{e.g.}, \texttt{FP16}/\texttt{BF16}) data to low-precision (\emph{e.g.}, \texttt{INT8}/\texttt{INT4}) formats, stands out as a compelling solution. For instance, employing $4$-bit models with fast kernel implementation can achieve a significant $3\times$ speedup ratio with about $4\times$ model size reduction compared with floating-point models on NVIDIA RTX$4090$ GPUs~\citep{li2025svdquant}. However, quantizing video DMs is more challenging than quantizing image DMs, and it has not received adequate attention.  As shown in Fig.~\ref{fig:comp}, applying prior high-performing approaches to quantize a video DM into ultra-low bits ($\leq$ $4$ bits) is ineffective. In contrast to post-training quantization (PTQ), quantization-aware training (QAT) can obtain superior performance through training quantized weights. Nevertheless, it still leads to severe video quality degradation, as demonstrated by Fig.~\ref{fig:comp} (a) \emph{vs.} (c)-(e). This highlights the need for an improved QAT framework to preserve video DMs' exceptional performance under $4$-bit or lower quantization. 

In this work, we present a novel QAT framework, termed \textit{QVGen}. It aims to improve the convergence without additional inference costs of low-bit \underline{\textit{Q}}uantized DMs for \underline{\textit{V}}ideo \underline{\textit{Gen}}eration.  

Specifically, we first provide a theoretical analysis showing that minimizing the gradient norm $\|\vg_t\|_2$ is the key to improving the convergence of QAT for video DMs. Motivated by this finding, we introduce auxiliary modules $\Phi$ for the quantized video DM to mitigate quantization errors. These modules effectively help narrow the discrepancy between the discrete quantized and full-precision models, leading to stable optimization and largely reduced $\|\vg_t\|_2$. The quantized DM thus achieves better convergence. Our observation also implies that the significant performance drops (Fig.~\ref{fig:comp}) of the existing SOTA QAT method~\citep{li2023qdm} may result from its high $\|\vg_t\|_2$ (Fig.~\ref{fig:loss}). 

Moreover, to adopt $\Phi$ for improving QAT while avoiding its substantial inference overhead, we progressively remove $\Phi$ during training. Upon further analysis, we have found that the amount of small singular values in $\mW_\Phi$ (the weight of $\Phi$) increases throughout the training process. This indicates that the quantity of low-contributing components in $\mW_\Phi$, which are related to small singular values~\citep{zhang2015acceleratingdeepconvolutionalnetworks, yang2020learninglowrankdeepneural}, grows during QAT. As a result, an increasing number of these components can be removed with minimal impact on training. Leveraging this insight, we introduce a \textit{rank-decay} strategy to progressively shrink $\mW_\Phi$. To be more specific, singular value decomposition (SVD) is first applied to recognize $\mW_\Phi$'s low-impact components. Then, a rank-based regularization $\vgamma$ is utilized to gradually decay these components to $\varnothing$.  Such processes (\emph{i.e.}, decompose and then decay) are repeated until $\mW_\Phi$ is fully eliminated, which also means that $\Phi$ is removed. In terms of results, this strategy incurs minimal performance impact while getting rid of the extra inference overhead.

To summarize, our contributions are as follows:

    $\blacktriangleright$ We introduce a general-purpose QAT paradigm, called QVGen. To our knowledge, this is \textit{the first} QAT method for video generation and achieves effective $3$-bit and $4$-bit quantization.\\
    $\blacktriangleright$ To optimize extremely low-bit QAT, we enhance a quantized DM with auxiliary modules ($\Phi$) to reduce the gradient norm. Our theoretical and empirical analysis validates the effectiveness of this method in improving convergence. \\
     $\blacktriangleright$ To eliminate the significant inference overhead introduced by $\Phi$, we propose a \textit{rank-decay} strategy that progressively shrinks $\Phi$. It iteratively performs SVD and applies a rank-based regularization $\vgamma$ to obtain and decay low-impact components of $\Phi$, respectively. As a result, this method incurs minimal impact on performance. \\
    $\blacktriangleright$ Extensive experiments across advancing CogVideoX and \texttt{Wan} families  demonstrate the SOTA performance of QVGen. Notably, our \texttt{W4A4} model is \textit{the first} to show full-precision comparable performance. In addition, we apply QVGen to \texttt{Wan} $14$B, one of the largest open-source SOTA models, and observe negligible performance drops on VBench-2.0. \\

\section{Preliminaries}\label{sec:pre}
\noindent\textbf{Video diffusion modeling.} The video DM~\citep{ho2022video, opensora} extends image diffusion frameworks~\citep{li2023qdm, Song2021DenoisingDI} into the temporal domain by learning dynamic inter-frame dependencies. Let $\bx_0 \in \mathbb{R}^{f\times h\times w\times c}$ be a latent video variable, where $f$ denotes the count of video frames, each of size $h\times w$ with $c$ channels. DMs are trained to denoise samples generated by adding random Gaussian noise $\bepsilon{\sim}\mathcal{N}(\mathbf{0}, \mathbf{I})$ to $\bx_0$:
\begin{equation}
\begin{array}{ll}
        \bx_\tau=\alpha_\tau\bx_0+\sigma_\tau\bepsilon,
    \end{array}
    \label{eq:add_noise}
\end{equation}
where $\alpha_\tau, \sigma_\tau>0$ are scalar values that collectively control the signal-to-noise ratio (SNR) according to a given noise schedule~\citep{song2021scorebasedgenerativemodelingstochastic} at timestep $\tau\in [1,\ldots,N]$\footnote{$N$ denotes the maximum timestep.}. One typical training objective (\emph{i.e.}, predict the noise~\citep{ho2020denoisingdiffusionprobabilisticmodels}) of a denoiser $\bepsilon_\theta$ with parameter $\theta$ can be formulated as follows:
\begin{equation}
    \begin{array}{ll}
        \mathcal{L}(\theta)=\mathbb{E}_{\bx_0, \bepsilon, \mathcal{C}, \tau}[\|\bepsilon-\bepsilon_\theta(\bx_\tau, \mathcal{C}, \tau)\|^2_F],
        \end{array}
\end{equation}
where $\mathcal{C}$ represents conditional guidance, like texts or images, and $\|\cdot\|_F$ denotes the Frobenius norm. Additionally, v-prediction~\citep{salimans2022progressivedistillationfastsampling} (\emph{i.e.}, predict $\frac{d\bx_\tau}{d\tau}$) is also a prevailing option~\citep{yang2025cogvideoxtexttovideodiffusionmodels, wanteam2025wanopenadvancedlargescale, kong2025hunyuanvideosystematicframeworklarge} as the target. During inference, we can employ $\bepsilon_\theta$ with various sampling methods~\citep{lu2022dpmsolverfastodesolver, zheng2023dpmsolverv3improveddiffusionode} progressively denoising from a random Gaussian noise $\bx_{N}{\sim}\mathcal{N}(\mathbf{0}, \mathbf{I})$ to a clean video variable. The raw video is obtained by decoding the variable via a video variational auto-encoder (VAE)~\citep{yang2025cogvideoxtexttovideodiffusionmodels}.

\noindent\textbf{Quantization.} The current video DM based on the diffusion transformer (DiT)~\citep{peebles2023scalable} architecture primarily consists of linear layers. Given an input $\mX \in \mathbb{R}^{m \times k}$, a full-precision linear layer with weight $\mW \in \mathbb{R}^{n \times m}$ and the layer's quantized version can be formulated as:
\begin{equation}
    \begin{array}{ll}
    \mY = \mW \mX, \hat{\mY} = \gQ_b(\mW) \gQ_b(\mX),
    \end{array}
\end{equation}
where $\mY\in\mathbb{R}^{n\times k}$ and $\hat{\mY}\in\mathbb{R}^{n\times k}$~\footnote{Here, $n$ denotes the output channel, $m$ signifies the token dimension, and $k$ is the token number. We omit the batch dimension for clarity.} represent the outputs of the full-precision (\emph{e.g.}, \texttt{FP16}/\texttt{BF16}) and quantized linear layers, respectively. $\gQ_b(\cdot)$ denotes the function of $b$-bit quantization. In this paper, we adopt asymmetric uniform quantization. For example, $\gQ_b(\mX)$ can be represented as:
\begin{equation}
    \begin{array}{ll}
        \gQ_b(\mX)=(\mathrm{clip}(\lfloor\frac{\mX}{s}\rceil + z, 0, 2^b-1) - z)\times s, \mathrm{where}~s= \frac{\max(\mX)-\min(\mX)}{2^b-1}, z=-\lfloor \frac{\min(\mX)}{s}\rceil.
    \end{array}
    \label{eq:quant}
\end{equation}
Here, quantization parameters $s$ and $z$ denote the \textit{scaler} and \textit{zero shift}, respectively. $\mathrm{clip}(\cdot,\cdot,\cdot)$ bounds the integer values into $[0, 2^b-1]$. To ensure the differentiability of the rounding function $\lfloor\cdot\rceil$ for QAT, straight-through estimator (STE)~\citep{bengio2013estimatingpropagatinggradientsstochastic} is widely applied as:
\begin{equation}
    \begin{array}{ll}
         \frac{\partial\gQ_b(\mX)}{\partial\mX}=\sI_{0\leq\lfloor\frac{\mX}{s}\rceil + z\leq 2^b-1}.
    \end{array}
    \label{eq:ste}
\end{equation}
Similar to existing works~\citep{li2023qdm, zheng2025Binarydmaccurateweightbinarization}, we employ the full-precision model as the teacher to guide the training of the quantized model in a knowledge distillation-based (KD-based) manner. Therefore, the training loss can be defined as:
\begin{equation}
    \mathcal{L}=\mathbb{E}_{\bx_0, \mathcal{C}, \tau}[\|\hat\bepsilon_\theta(\bx_\tau, \mathcal{C}, \tau)-\bepsilon_\theta(\bx_\tau, \mathcal{C}, \tau)\|_F^2].
    \label{eq:loss}
\end{equation}
where $\hat\bepsilon_\theta$ denotes the quantized denoiser of a video DM.

\section{QVGen}

Considering the substantial video-quality drops (see Fig.~\ref{fig:comp}) observed in existing QAT methods~\citep{li2023qdm, he2024efficientdm, esser2020learnedstepsizequantization}, we believe that the quantized video DM suffers from poor convergence. In the following subsections, we propose QVGen (see Fig.~\ref{fig:overview}) to address this issue while maintaining inference efficiency.

\begin{figure}[!ht]
   \centering
     \includegraphics[width=\textwidth]{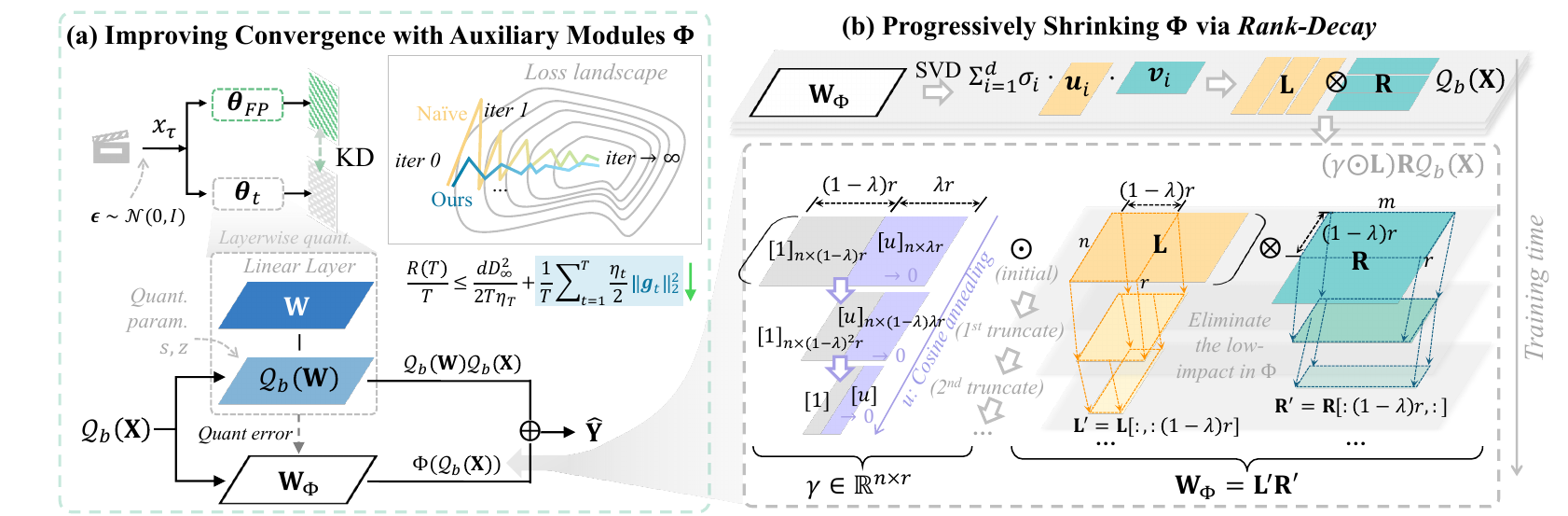}
     \vspace{-0.1in}
     \caption{Overview of the proposed QVGen. (a) This framework integrates auxiliary modules $\Phi$ to improve training convergence (Sec.~\ref{sec:module}). (b) To maintain performance while eliminating inference overhead induced by $\Phi$, we design a \textit{rank-decay} schedule that progressively shrinks the entire $\Phi$ to $\varnothing$ through \textit{iteratively applying} the following two strategies (Sec.~\ref{sec:shrink}): (\textit{i}) SVD to identify the low-impact components in $\Phi$; (\textit{ii}) A rank-based regularization $\vgamma$ to decay the identified components to $\varnothing$. A detailed procedure can be found in Sec.~\ref{sec:algo}.}
    \label{fig:overview}
    \vspace{-0.1in}
\end{figure}

\subsection{Improving Convergence with Auxiliary Modules \texorpdfstring{$\Phi$}{Phi}}\label{sec:module}
To begin with, we analyze the convergence of a quantized video DM using the regret, which is widely used in analyses of deep learning optimizers~\citep{kingma2017adammethodstochasticoptimization, luo2019adaptivegradientmethodsdynamic}. It is defined as: 
\begin{equation}
    \begin{array}{ll}
         R(T)=\sum^T_{t=1}f_t(\vtheta_t)-f_t(\vtheta^*),
    \end{array}
\end{equation}
where $T$ signifies the total number of training iterations and $f_t(\cdot)$ is the unknown cost function at iteration $t$. Here, $\vtheta_t$ represents the parameters of the quantized video DM at training step $t$, constrained within a convex compact set $\mathbb{S}^d$, while $\vtheta^*=\arg\,\min_{\vtheta\in\mathbb{S}^d}\sum^T_{t=1}f_t(\vtheta)$ is the optimal parameters. In QAT, $\vtheta_t$ is updated by gradient descent, with the learning rate $\eta_t$ and gradient $\vg_t$, as: 
\begin{equation}
    \begin{array}{ll}
         \vtheta_{t+1}=\vtheta_t-\eta_t\vg_t.
    \end{array}
\end{equation}
\begin{theorem} \label{theorem:regret} Assume that $f_t$ is convex\footnote{This may not hold for deep networks. \rebuttal{Therefore, we also provide a nonconvex convergence analysis in Sec.~\ref{sec:nonconvex}.}} and $\forall \vtheta_i, \vtheta_j\in\mathbb{S}^d, \|\vtheta_i-\vtheta_j\|_{\infty}\leq D_{\infty}$. Then the average regret is upper-bounded as: $\frac{R(T)}{T} \leq \frac{dD_{\infty}^2}{2T\eta_T^m}+\frac{1}{T}\sum^T_{t=1}\frac{\eta_t^M}{2}\|\vg_t\|_2^2$\footnote{$\eta_t^M$ and $\eta_t^m$ are the maximum and minimum values of $\eta_t$, respectively.}.
\end{theorem}
\vspace{-0.1in}
A smaller value of $\frac{R(T)}{T}$ implies a closer convergence to the optimum. Thm.~\ref{theorem:regret} (with a proof provided in Sec.~\ref{sec:proof}) suggests that for a large $T$ (\emph{i.e.}, $\frac{dD_{\infty}^2}{2T\eta_T^m}$ becomes negligible), minimizing $\|\vg_t\|_2$ is critical for improving convergence behavior of QAT. 
A lower $\|\vg_t\|_2$ is typically observed in more stable training processes~\citep{takase2024spikemorestabilizingpretraining, xie2024overlookedpitfallsweightdecay}. Therefore, to reduce $\|\vg_t\|_2$, we aim to stabilize the QAT process by mitigating aggressive training losses (\emph{e.g.}, \textit{loss spikes})~\citep{kumar2025zclipadaptivespikemitigation, li2024lossspiketrainingneural}. Specifically, we introduce a learnable auxiliary module $\Phi$ to enhance each quantized linear layer of a video DM. This trainable module aims to mitigate severe quantization-induced errors during QAT, thereby preventing aggressive training losses. The forward computation of such a $\Phi$-equipped layer becomes:
\begin{equation}
    \begin{array}{ll}
        \hat{\mY} = \gQ_b(\mW) \gQ_b(\mX)+\Phi(\gQ_b(\mX)),
    \end{array}
    \label{eq:forward}
\end{equation}
where $\Phi(\gQ_b(\mX)) = \mW_{\Phi} \gQ_b(\mX)$. Here, $\mW_{\Phi}$ is initialized before QAT by the weight quantization error, defined as $\mW - \gQ_b(\mW)$. More initialization approaches for $\mW_{\Phi}$ can be found in Sec.~\ref{sec:init-phi}.

\begin{wrapfigure}{r}{0.55\linewidth}
    \vspace{-0.1in}
   \centering
   \setlength{\abovecaptionskip}{0.2cm}
   \begin{minipage}[b]{\linewidth}
       \begin{minipage}[b]{0.5\linewidth}
            \centering
            \begin{subfigure}[tp!]{\textwidth}
            \centering
            \subcaption{CogVideoX-$2$B~\citep{yang2025cogvideoxtexttovideodiffusionmodels}}
            \includegraphics[width=\linewidth]{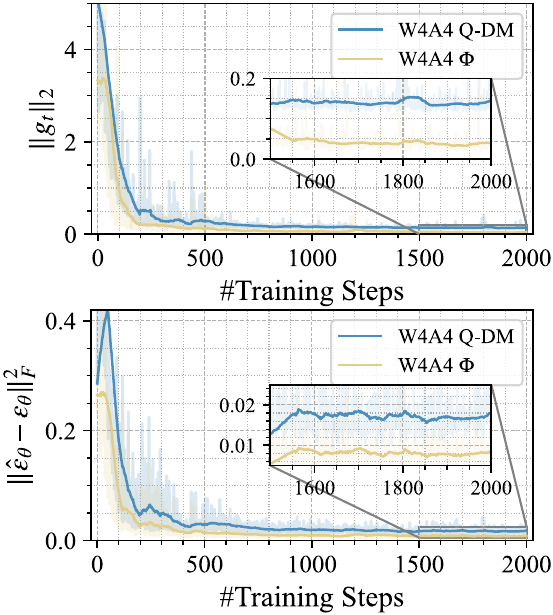}
            \end{subfigure}
       \end{minipage}\hfill
       \begin{minipage}[b]{0.5\linewidth}
            \centering
            \begin{subfigure}[tp!]{\linewidth}
            \centering
            \subcaption{\texttt{Wan} $1.3$B~\citep{wanteam2025wanopenadvancedlargescale}}
            \includegraphics[width=\linewidth]{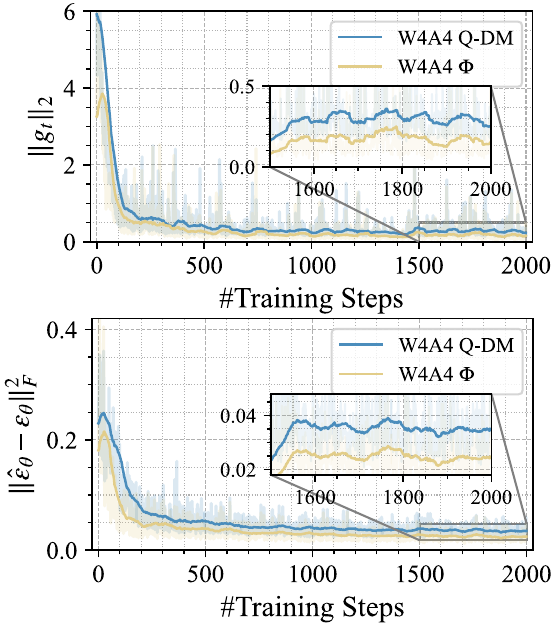}
            \end{subfigure}
       \end{minipage}
   \end{minipage}
   \vspace{-0.1in}
   \caption{\label{fig:loss} (\textit{Upper}) $\|\vg_t\|_2$ \emph{vs.} \#steps and (\textit{Lower}) training loss (\emph{i.e.}, Eq. (\ref{eq:loss})) \emph{vs.} \#steps across different video DMs and $4$-bit QAT methods.``$\Phi$'' denotes our approach in Sec.~\ref{sec:module}.}
   \vspace{-0.1in}
\end{wrapfigure}
To validate the effectiveness of $\Phi$, we conduct experiments for CogVideoX $2$B~\citep{yang2025cogvideoxtexttovideodiffusionmodels} and \texttt{Wan} $1.3$B~\citep{wanteam2025wanopenadvancedlargescale}. Compared with the previous SOTA QAT method Q-DM~\citep{li2023qdm}, the proposed approach exhibits consistently lower $\|\vg_t\|_2$ and reduced training loss, as depicted in Fig.~\ref{fig:loss}. This aligns well with both the theoretical and the empirical analysis discussed earlier. Therefore, incorporating $\Phi$ in QAT effectively reduces the gradient norm and leads to better convergence for QAT. In addition, as evidenced by Fig.~\ref{fig:loss}, the substantial performance degradation of Q-DM (\emph{e.g.},  depicted in Fig.~\ref{fig:comp}) for the video generation task could be attributed to its relatively large $\|\vg_t\|_2$. Besides, we provide further analyses of $\|\vg_t\|_2$ in video generation QAT in Sec.~\ref{sec:ana-theore-video}.

\subsection{Progressively Shrinking \texorpdfstring{$\Phi$}{Phi} via \textit{Rank-Decay}}\label{sec:shrink}
However, during inference, the auxiliary module $\Phi$ introduces non-negligible overhead. Concretely, $\Phi$ incurs additional matrix multiplications between $b$-bit activations $\gQ_b(\mX)$ and full-precision weights $\mW_{\Phi}$. This is inapplicable to low-bit multiplication kernels and thus hinders inference acceleration. In addition, the storage of full-precision $\mW_{\Phi}$ for each $\Phi$ leads to significant memory overhead, exceeding that of the quantized diffusion model by several fold.

To improve QAT while eliminating the inference overhead, we propose to progressively remove $\Phi$ throughout the training process. This allows the model to benefit from $\Phi$ during QAT, while ultimately yielding a standard quantized model~\citep{li2023qdm, esser2020learnedstepsizequantization} with no extra inference cost. To achieve this goal, a straightforward solution is to decay all parameters of $\Phi$ directly. However, we have noticed that it is ineffective and suboptimal (see Tab.~\ref{tab:reg-half}). This observation calls for a fine-grained decay strategy. 

\begin{figure}[!ht]
\vspace{-0.1in}
   \centering
    \setlength{\abovecaptionskip}{0.2cm}
   \begin{minipage}[b]{\linewidth}
       \begin{minipage}[b]{0.48\linewidth}
            \centering
            \begin{subfigure}[tp!]{\textwidth}
            \centering
            \subcaption{CogVideoX-$2$B~\citep{yang2025cogvideoxtexttovideodiffusionmodels}}
            \includegraphics[width=\linewidth]{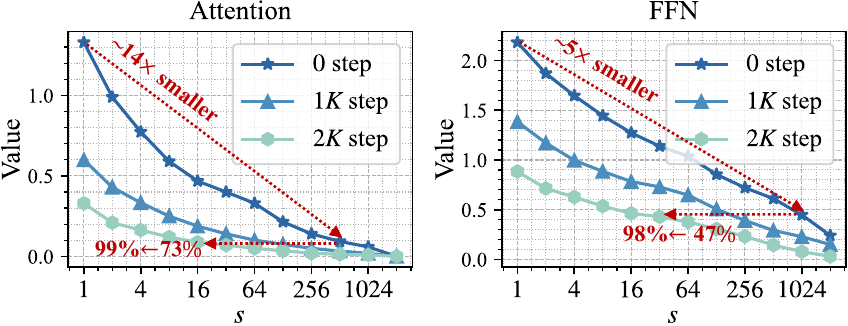}
            \end{subfigure}
       \end{minipage}\hfill
       \begin{minipage}[b]{0.48\linewidth}
            \centering
            \begin{subfigure}[tp!]{\linewidth}
            \centering
            \subcaption{\texttt{Wan} $1.3$B~\citep{wanteam2025wanopenadvancedlargescale}}
            \includegraphics[width=\linewidth]{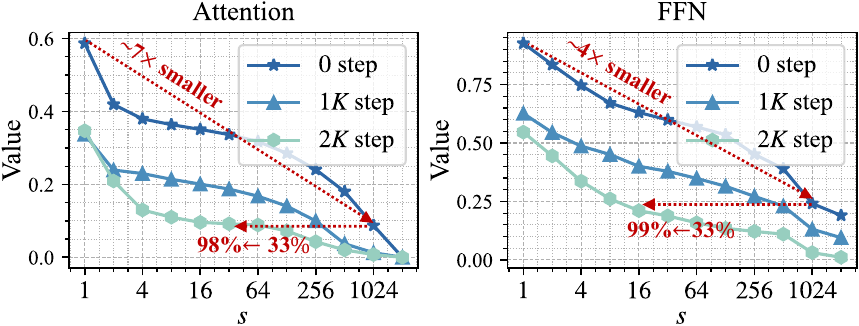}
            \end{subfigure}
       \end{minipage}
   \end{minipage}
   \vspace{-0.08in}
     \caption{Singular value variation in $\mW_{\Phi}$ across training iterations for $4$-bit video DMs. We visualize the average of the singular values $\{\sigma_s\}_{s=1,2,\ldots, 2^{10}} \cup \{\sigma_d\}$ across layers of all Attention blocks~\citep{vaswani2023attentionneed} and feed-forward networks (FFNs), respectively. ``$0$ step'' denotes the initialization state before QAT.}
    \label{fig:singular}
    \vspace{-0.1in}
\end{figure}

Therefore, we begin to investigate the contribution of fine-grained components in $\Phi$. Specifically, we apply singular value decomposition (SVD)~\citep{li2025svdquant,li2023loftqlorafinetuningawarequantizationlarge} to $\mW_{\Phi}$ at various training steps:
\begin{equation}
    \begin{array}{ll}
    \mW_{\Phi} = \sum_{s=1}^{d} \sigma_s \vu_s \vv_s^{\top},
    \end{array}
\end{equation}
where $d = \min\{n, m\}$ and $\sigma_1 \geq \sigma_2 \geq \ldots \geq \sigma_d$ are the singular values. The vectors $\vu_s$ and $\vv_s$ denote the left and right singular vectors associated with $\sigma_s$, respectively. By tracking the evolution of the average $\sigma_s$, we observe two key findings (exemplified by Fig.~\ref{fig:singular} (a) Attention): \uline{(\textit{i}) $\mW_{\Phi}$ contains a substantial number of small singular values. For example, approximately $73\%$ ($0$-th step) of the average $\sigma_s$ are ${\sim}14\times$ smaller than the largest one $\sigma_1$; (\textit{ii}) The presence of these small $\sigma_s$ becomes increasingly pronounced as QAT progresses, with the proportion rising to $99\%$ ($2K$-th step).}

These findings suggest that an increasing number of orthonormal directions $\{\vu_s, \vv_s\}$ contribute little, as their associated singular values $\sigma_s$ are small~\citep{zhang2015acceleratingdeepconvolutionalnetworks, yang2020learninglowrankdeepneural}. We posit that the main weight $\mathbf{W}$ gradually learns to absorb quantization errors. As training proceeds, $\mathbf{W}_{\Phi}$ collapses to low rank and plays an ever‑diminishing role in error compensation. This, in turn, implies that only an increasingly low-rank portion of $\Phi$ (\emph{i.e.}, $\Sigma^{d'}_{s=1}\sigma_s\vu_s\vv_s^\top$, where $d'<d$) is needed, and the remaining components can be decayed without noticeably affecting performance. Motivated by this, we propose a novel \textit{rank-decay} schedule that progressively shrinks $\Phi$ by repeatedly identifying and eliminating the above-mentioned low-impact parts. First, to attain these parts, we reformulate the computation of $\Phi$ as:
\begin{equation}
    \begin{array}{ll}
         \Phi(\gQ_b(\mX)) = \mL\mR\gQ_b(\mX),
         \label{eq:svd-phi}
    \end{array}
\end{equation}
where $\mL=[\sqrt{\sigma_1}\vu_1,\ldots,\sqrt{\sigma_r}\vu_r]\in\mathbb{R}^{n\times r}$ and $\mR=[\sqrt{\sigma_1}\vv_1,\ldots,\sqrt{\sigma_r}\vv_r]^\top\in\mathbb{R}^{r\times m}$ for a given rank $r$. In practice, we set $r \ll d$ to reduce training costs, as $\mW_\Phi$ already exhibits a non-negligible number of small singular values before QAT. Consequently, $\sqrt{\sigma_s}\vu_s$ and $\sqrt{\sigma_s}\vv_s$ are the components in $\mW_{\Phi}$ represent the $s$-th level of contribution. Then, with a rank-based regularization $\vgamma$ applied, the forward computation of a quantized linear layer during training is modified as:
\begin{equation}
    \begin{array}{ll}
         \hat{\mY} = \gQ_b(\mW) \gQ_b(\mX)+(\vgamma\odot\mL)\mR\gQ_b(\mX),
         \label{eq:rank-decay}
    \end{array}
\end{equation}
where $\vgamma$ is defined as:
\begin{equation}
    \begin{array}{ll}
    \vgamma=\mathrm{concat}([1]_{n\times (1-\lambda)r}, [u]_{n\times \lambda r}) \in \mathbb{R}^{n\times r}.
    \label{eq:gamma}
    \end{array}
\end{equation}
Here, $u$ follows a cosine annealing schedule that decays from $1$ to $0$, $\lambda\in (0, 1]$ represents the shrinking ratio, and $\odot$ denotes element-wise multiplication. Eq.~(\ref{eq:rank-decay}) and Eq.~(\ref{eq:gamma}) allow us to progressively eliminate the low-impact components of $\Phi$ (\emph{i.e.}, $[\sqrt{\sigma_{(1-\lambda)r+1}}\vu_{(1-\lambda)r+1},\ldots,\sqrt{\sigma_r}\vu_r]$ and $[\sqrt{\sigma_{(1-\lambda)r+1}}\vv_{(1-\lambda)r+1},\ldots,\sqrt{\sigma_r}\vv_r]^\top$). Once $u$ reaches $0$, we truncate $\{\mL, \mR\}$ to $\{\mL', \mR'\}$, and rewrite $\mW_\Phi$ as:
\begin{equation}
    \begin{array}{ll}
    \begin{cases}
        \mL' =\mL[:,:(1-\lambda)r] \\
         \mR' =\mR[:(1-\lambda)r,:] 
    \end{cases} \Rightarrow \mW_\Phi=\mL'\mR'.
    \end{array}
    \label{eq:rewrite}
\end{equation}
In Eq.~(\ref{eq:rewrite}), the rank of $\mW_\Phi$ is shrunk from $r$ to $(1-\lambda)r$. During the subsequent training phase, the above procedures (\emph{i.e.}, both decomposition and decay) are iteratively applied. Ultimately, we fully eliminate $\Phi$ by reducing $r$ to $0$, which incurs negligible impact on model performance (see Tab.~\ref{tab:component-half}). It is worth noting that we set $\lambda = \frac{1}{2}$ for Eq.~(\ref{eq:gamma}) in this work, based on its effectiveness demonstrated in Tab.~\ref{tab:reg-half}. Overall, the overview of the \textit{rank-decay} schedule is exhibited in Fig.~\ref{fig:overview} (b).

\section{Experiments}
\subsection{Implementation Details}\label{sec:implementation}
\noindent\textbf{Models.} We conduct experiments on open-source SOTA video DMs, including CogVideoX-$2$B and 1.5-$5$B~\citep{yang2025cogvideoxtexttovideodiffusionmodels}, and \texttt{Wan} $1.3$B and $14$B~\citep{wanteam2025wanopenadvancedlargescale}. Classifier-free guidance (\texttt{CFG})~\citep{ho2022classifierfreediffusionguidance} is used for all models, and the frame number of generated videos is fixed to $49$ for CogVideoX-$2$B and $81$ for the others.

\noindent\textbf{Baselines.} We adopt previous powerful PTQ and QAT methods as baselines: ViDiT-Q~\citep{zhao2025viditq}, SVDQuant~\citep{li2025svdquant}, LSQ~\citep{esser2020learnedstepsizequantization}, Q-DM~\citep{li2023qdm}, and EfficientDM~\citep{he2024efficientdm}. Since these methods were designed for image DMs or convolutional neural networks (CNNs), we adapt these works to video DMs using their open-source code (if available) or the implementation details provided in the corresponding papers. Without specific clarification, static \textit{per-channel} weight quantization with dynamic \textit{per-token} activation quantization, a common practice in the community~\citep{zhao2025viditq, liu2023llmqatdatafreequantizationaware}, is used for all linear layers.

\noindent\textbf{Training.} We employ $16K$ captioned videos from \texttt{OpenVidHQ-4M}~\citep{nan2025openvid1mlargescalehighqualitydataset} as the training dataset. The AdamW~\citep{loshchilov2019decoupledweightdecayregularization} optimizer is utilized with a weight decay of $10^{-4}$. We employ a cosine annealing schedule to adjust the learning rate over training. During QAT, we train \texttt{Wan} $14$B~\citep{wanteam2025wanopenadvancedlargescale} and CogVideoX1.5-$5$B~\citep{yang2025cogvideoxtexttovideodiffusionmodels} for $16$ epochs on $32{\times}H100$ GPUs and $16{\times}H100$ GPUs, respectively. For the other DMs, we employ $8$ training epochs on $8{\times}H100$ GPUs. Additionally, we allocate the same training iterations for each decay phase (\emph{i.e.}, shrinking the remaining $r$ to $(1-\lambda)r$). The same settings are applied to all QAT baselines. 

\noindent\textbf{Evaluation.} We select $8$ dimensions in VBench~\citep{huang2024vbench} with unaugmented prompts to comprehensively evaluate the performance following previous studies~\citep{zhao2025viditq, ren2024consisti2venhancingvisualconsistency}. Moreover, for huge ($\geq$ $5$B parameters) DMs, we additionally report the results on VBench-2.0~\citep{zheng2025vbench20advancingvideogeneration} with augmented prompts to measure the adherence of videos to \textit{physical laws}, \textit{reasoning}, \emph{etc.} More detailed experimental setups can be found in Sec.~\ref{sec:more-imple}.

\begin{table}[!ht]\setlength{\tabcolsep}{2pt}
 \renewcommand{\arraystretch}{1.1}
  \centering
  \caption{Performance comparison across different quantization methods on VBench~\citep{huang2024vbench}. ``\dag'' indicates PTQ methods and ``*'' signifies QAT methods. ``Full Prec.'' denotes the \texttt{BF16} model. ``$\clubsuit$'' represents that we apply fine-grained \textit{per-group} weight-activation quantization with a group size of $64$ and keep some linear layers unquantized, which is the same as the official settings of SVDQuant~\citep{li2025svdquant} (details can be found in Sec.~\ref{sec:more-imple}). \rebuttal{``Full Fine-tuning'' denotes we fine-tune the model with the same data as QVGen.} Best and second-best results are highlighted in \textbf{bold} and \underline{underline} formats, respectively.} 
  \vspace{-0.1in}
  \resizebox{0.9\linewidth}{!}{
  \begin{tabu}[t!]{l|c|cccccccc}
\toprule
\multirow{2}{*}{Method} & \multirow{2}{*}{\makecell{\#Bits \\(W/A)}} & \multirow{2}{*}{\makecell{Imaging\\Quality}$\uparrow$} &  \multirow{2}{*}{\makecell{Aesthetic\\Quality}$\uparrow$} & \multirow{2}{*}{\makecell{Motion\\Smoothness}$\uparrow$} & \multirow{2}{*}{\makecell{Dynamic \\Degree}$\uparrow$} & \multirow{2}{*}{\makecell{Background\\Consistency}$\uparrow$}  & \multirow{2}{*}{\makecell{Subject \\Consistency}$\uparrow$} & \multirow{2}{*}{\makecell{Scene\\Consistency}$\uparrow$} & \multirow{2}{*}{\makecell{Overall\\Consistency}$\uparrow$}\\
 & & & & & & \\
\midrule
\multicolumn{10}{c}{\cellcolor[gray]{0.92}CogVideoX-$2$B ($\texttt{CFG}=6.0, 480p, \texttt{fps}=8$)} \\
  
\midrule
Full Prec. & 16/16 & 59.15  &  54.49 & 97.43 & 67.78 & 94.79 & 92.82 & 36.24 & 25.06 \\
Full Fine-tuning & 16/16 & 61.34 &56.53 &98.59 &65.39 &93.84 &93.43 &34.99 &25.50  \\
 
\midrule
ViDiT-Q~\citep{zhao2025viditq}$^\dag$ & 4/6 & 54.72 & 43.01 & 92.18 & 43.22 & 90.76 & 81.02 & 26.25 & 20.41\\
SVDQuant~\citep{li2025svdquant}$^\dag$ & 4/6 & 58.27 &47.06 &95.28 &40.83 &92.41 &87.45 &27.69 &21.34 \\
\midrule
SVDQuant~\citep{li2025svdquant}$^{\dag\clubsuit}$ & 4/4 & 51.60 &49.40 &97.69 &42.22 &94.03 &91.78 &25.67 &22.89 \\
LSQ~\citep{Esser2020LEARNED}$^*$ & 4/4 & \underline{58.73} &\underline{54.20} &97.57 &45.00 &92.97 &\underline{92.41} &24.06 &23.17 \\
Q-DM~\citep{li2023qdm}$^*$ & 4/4 & 54.96 &52.71 &98.00 &\underline{48.61} &93.82 &91.86 &\underline{28.02} &23.87  \\
EfficientDM~\citep{he2024efficientdm}$^*$ & 4/4 & 55.96 &51.97 &\underline{98.03} &46.67 &\underline{94.10} &91.70 &27.76 &\underline{24.28}  \\
\rowcolor{mycolor!30}QVGen (Ours)$^*$ & 4/4 & \textbf{60.16} &\textbf{54.61} &\textbf{98.06} &\textbf{67.22} &\textbf{94.38} &\textbf{93.01} &\textbf{31.42} &\textbf{24.61}  \\

\midrule
LSQ~\citep{Esser2020LEARNED}$^*$ & 3/3 & \underline{56.46} &40.35 &97.98 &0.56 &\underline{94.08} &\underline{89.18} &4.80 &13.80  \\
Q-DM~\citep{li2023qdm}$^*$ & 3/3 & 50.88 &40.41 &\underline{98.03} &5.56 &93.93 &87.75 &7.33 &15.98  \\
EfficientDM~\citep{he2024efficientdm}$^*$ & 3/3 &  52.86 &\underline{44.58} &97.13 &\underline{28.61} &93.15 &88.26 &\underline{15.42} &\underline{20.42}  \\
\rowcolor{mycolor!30}QVGen (Ours)$^*$ & 3/3 & \textbf{58.36} &\textbf{50.54} &\textbf{98.37} &\textbf{53.89} &\textbf{94.55} &\textbf{90.50} &\textbf{23.85} &\textbf{22.92}  \\
\midrule

\multicolumn{10}{c}{\cellcolor[gray]{0.92}\texttt{Wan} $1.3$B ($\texttt{CFG}=5.0, 480p, \texttt{fps}=16$)} \\

\midrule
Full Prec. & 16/16 & 64.30 &58.21 &97.37 &70.28 &95.94 &93.84 &28.05 &24.67  \\
Full Fine-tuning & 16/16 & 64.59 &58.85 &97.46 &83.61 &94.30 &93.68 &27.55 &24.86  \\
 
\midrule
ViDiT-Q~\citep{zhao2025viditq}$^\dag$ & 4/6 & 56.24 & 50.18 & 94.81 & 52.43 & 89.67 & 82.53 & 13.45 & 19.58\\
SVDQuant~\citep{li2025svdquant}$^\dag$ & 4/6 & 58.16 &51.27 &97.05 &49.44 &93.74 &91.71 &14.18 &23.26 \\
\midrule
SVDQuant~\citep{li2025svdquant}$^{\dag\clubsuit}$ & 4/4 & 57.57 &46.30 &94.21 &72.22 &93.16 &77.96 &12.73 &21.91  \\
LSQ~\citep{Esser2020LEARNED}$^*$ & 4/4 &  59.11 &49.09 & \textbf{98.35} &71.11 &92.66 &91.67 &10.38 &18.83  \\
Q-DM~\citep{li2023qdm}$^*$ & 4/4 & 60.40 &52.50 &97.22 & \underline{76.67} &93.37 &89.26 & \underline{13.28} & \underline{21.63}  \\
EfficientDM~\citep{he2024efficientdm}$^*$ & 4/4 &  \underline{60.70} & \underline{53.57} &96.18 &56.39 &\underline{93.74} & \underline{91.70} &11.77 &21.19  \\
\rowcolor{mycolor!30}QVGen (Ours)$^*$ & 4/4 &  \textbf{63.08} & \textbf{54.67} &\underline{98.25} &\textbf{77.78} &\textbf{94.08} &\textbf{92.57} &\textbf{15.32} & \textbf{23.01}  \\

\midrule
LSQ~\citep{Esser2020LEARNED}$^*$ & 3/3 &  \underline{58.80} &\underline{46.86} &\underline{98.22} &23.61 &91.86 &\underline{89.42} &0.89 &15.51  \\
Q-DM~\citep{li2023qdm}$^*$ & 3/3 &  56.19 &44.95 &95.13 &\underline{76.94} &92.09 &83.82 &\underline{1.79} &\underline{16.89}  \\
EfficientDM~\citep{he2024efficientdm}$^*$ & 3/3 &  42.32 &33.52 &96.50 &70.28 &\underline{92.10} &74.79 &0.04 &11.38  \\
\rowcolor{mycolor!30}QVGen (Ours)$^*$ & 3/3 & \textbf{67.35} &\textbf{49.71} &\textbf{98.93} &\textbf{84.14} &\textbf{93.62} &\textbf{92.25} &\textbf{5.71} &\textbf{20.11}  \\

\bottomrule
\end{tabu}
}
    \label{tab:compare-vbench}
    \vspace{-0.25in}
\end{table}
\subsection{Performance Analysis}\label{sec:performacne}
\textbf{Comparison with baselines.} We report VBench score comparisons in Tab.~\ref{tab:compare-vbench}. In \texttt{W4A4} quantization, recent QAT methods~\citep{esser2020learnedstepsizequantization, he2024efficientdm, li2023qdm} show non-negligible performance degradation. With \texttt{W3A3}, performance drops become more pronounced. In contrast, the proposed QVGen achieves substantial performance recovery in $3$-bit models and comparable results to full-precision models in $4$-bit quantization. Specifically, it shows higher scores or less than a $2\%$ decrease in all metrics for \texttt{W4A4} CogVideoX-$2$B~\citep{yang2025cogvideoxtexttovideodiffusionmodels}, except Scene Consistency.  For PTQ baselines~\citep{zhao2025viditq, li2025svdquant}, all fail to generate meaningful content in \texttt{W4A4} \textit{per-channel} and \textit{per-token} settings. Therefore, we apply \texttt{W4A6} quantization for these methods and also utilize fine-grained \textit{per-group} \texttt{W4A4} quantization for SVDQuant~\citep{li2025svdquant}. In these cases, \texttt{W4A4} QVGen outperforms them by a large margin, particularly with $8.37$ and $14.61$ higher Aesthetic Quality and Subject Consistency for \texttt{Wan} $1.3$B compared to \texttt{W4A4} SVDQuant. In addition to these findings, we observe that for \texttt{Wan} $1.3$B, Dynamic Degree recovers easily during QAT, even surpassing that of the \texttt{BF16} model. However, for CogVideoX-$2$B, this metric significantly drops. Moreover, Scene Consistency is the most challenging metric to maintain across models and methods.  \rebuttal{Detailed training loss curves across QAT methods and a trial of combining SVDQuant~\citep{li2025svdquant} and QVGen can be found in Sec.~\ref{sec:training-loss} and Sec.~\ref{sec:svd_qvgen}, respectively.}

Beyond the quantitative results, we provide qualitative results in Fig.~\ref{fig:comp} and Sec.~\ref{sec:vis}, where QVGen markedly improves visual quality over prior methods. Although a clear gap remains between $3$-bit and $4$-bit outputs, our approach shifts the Pareto frontier toward higher accuracy at a smaller model size than existing techniques (\emph{e.g.}, $3$-bit QVGen for CogVideoX achieves a superior Aesthetic Quality than previous $4$-bit methods in Tab.~\ref{tab:compare-vbench}). We view this initial exploration as setting a direction and providing valuable insight for future work on $2$‑bit quantization. 

Besides, we provide more comparisons with additional metrics in Sec.~\ref{sec:addition-metrics} and comparisons under relatively higher bit-width in Sec.~\ref{sec:high-bit}. We also apply our approach to image generation in Sec.~\ref{sec:image-gen}.

\begin{table}[!ht]\setlength{\tabcolsep}{2pt}
    \vspace{-0.2in}
 \renewcommand{\arraystretch}{1.1}
  \centering
  \caption{Performance for huge video DMs on VBench. Comparison with baselines can be found in Sec.~\ref{sec:comp-huge}.} 
  \vspace{-0.1in}
  \resizebox{0.8\linewidth}{!}{
  \begin{tabu}[t!]{l|c|cccccccc}
\toprule
\multirow{2}{*}{Method} & \multirow{2}{*}{\makecell{\#Bits \\(W/A)}} & \multirow{2}{*}{\makecell{Imaging\\Quality}$\uparrow$} &  \multirow{2}{*}{\makecell{Aesthetic\\Quality}$\uparrow$} & \multirow{2}{*}{\makecell{Motion\\Smoothness}$\uparrow$} & \multirow{2}{*}{\makecell{Dynamic \\Degree}$\uparrow$} & \multirow{2}{*}{\makecell{Background\\Consistency}$\uparrow$}  & \multirow{2}{*}{\makecell{Subject \\Consistency}$\uparrow$} & \multirow{2}{*}{\makecell{Scene\\Consistency}$\uparrow$} & \multirow{2}{*}{\makecell{Overall\\Consistency}$\uparrow$}\\
 & & & & & & \\
\midrule
\multicolumn{10}{c}{\cellcolor[gray]{0.92}CogVideoX1.5-$5$B ($\texttt{CFG}=6.0, 720p, \texttt{fps}=16$)} \\
  
\midrule
Full Prec. & 16/16 & 66.25 &59.49 &98.42 &59.72 &96.57 &95.28 &39.14 &26.18  \\
 
\midrule
\rowcolor{mycolor!30}QVGen (Ours) & 4/4 & 66.76 &59.52 &98.38 &64.44 &95.83 &94.88 &28.47 &24.45  \\
\midrule
\rowcolor{mycolor!30}QVGen (Ours) & 3/3 &  54.44 &35.85 &97.23 &58.89 &96.48 &90.17 &13.27 &17.15  \\

\midrule

\multicolumn{10}{c}{\cellcolor[gray]{0.92}\texttt{Wan} $14$B ($\texttt{CFG}=5.0, 720p, \texttt{fps}=16$)} \\

\midrule
Full Prec. & 16/16 & 67.89 &61.54 &97.32 &70.56 &96.31 &94.08 &33.91 &26.17  \\
 
\midrule
\rowcolor{mycolor!30}QVGen (Ours) & 4/4 & 66.87 &59.41 &97.71 &76.11 &96.50 &94.45 &19.84 &25.70  \\
\midrule
\rowcolor{mycolor!30}QVGen (Ours) & 3/3 & 48.70 &29.73 &99.05 &93.33 &97.34 &94.71 &$\,\,\,$2.81 &13.97  \\

\bottomrule
\end{tabu}
}
    \label{tab:huge}
     \vspace{-0.2in}
\end{table}

\textbf{Results for huge DMs.} To demonstrate the scalability of our method, we further test two huge video DMs, including CogVideoX1.5-$5$B and \texttt{Wan} $14$B, at $720p$ resolution. As illustrated in Tab.~\ref{tab:huge}, our $3$-bit and $4$-bit models follow the same pattern seen in smaller models (Tab.~\ref{tab:compare-vbench}). However, $3$-bit quantization incurs much larger drops on demanding metrics such as Scene and Overall Consistency, 
\begin{wrapfigure}{r}{0.68\linewidth}
   \centering
   \vspace{-0.1in}
   \setlength{\abovecaptionskip}{0.2cm}
   \begin{minipage}[b]{\linewidth}
       \begin{minipage}[b]{0.5\linewidth}
            \centering
            \begin{subfigure}[tp!]{\textwidth}
            \centering
            \subcaption{CogVideoX1.5-$5$B~\citep{yang2025cogvideoxtexttovideodiffusionmodels}}
            \includegraphics[width=\linewidth]{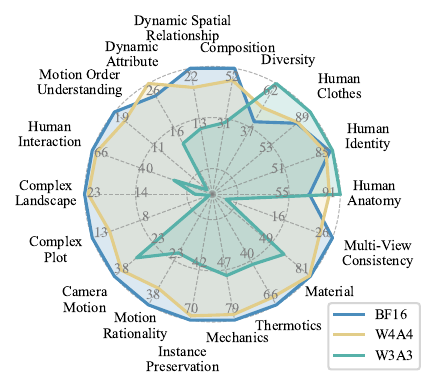}
            \end{subfigure}
       \end{minipage}\hfill
       \begin{minipage}[b]{0.5\linewidth}
            \centering
            \begin{subfigure}[tp!]{\linewidth}
            \centering
            \subcaption{\texttt{Wan} $14$B~\citep{wanteam2025wanopenadvancedlargescale}}
            \includegraphics[width=\linewidth]{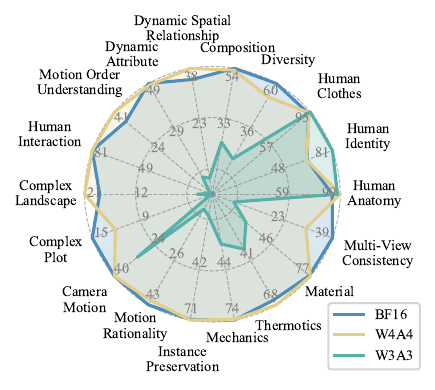}
            \end{subfigure}
       \end{minipage}
   \end{minipage}
   \vspace{-0.05in}
   \caption{\label{fig:vbench2} Performance for huge video DMs on VBench-2.0~\citep{zheng2025vbench20advancingvideogeneration}. Our $4$-bit models exhibit a minimal drop of ${\sim}1\%$ in total score.}
   \vspace{-0.5in}
\end{wrapfigure}
underscoring the challenge of pushing these larger models to $3$ bits. In Fig.~\ref{fig:vbench2}, we further assess the models with VBench-2.0; the \texttt{W4A4} DMs incur only negligible overall performance loss.

\subsection{Ablation Studies}\label{sec:ablation}
To demonstrate the effect of each design, we employ \texttt{W4A4} \texttt{Wan} $1.3$B with VBench~\citep{huang2024vbench} for ablation studies. More ablations can be found in Sec.~\ref{sec:add-ablation}.

\begin{wraptable}{r}{0.48\linewidth}
    \setlength{\tabcolsep}{0.8pt}
     \renewcommand{\arraystretch}{1.1}
 \vspace{-0.2in}
  \centering
  \begin{minipage}[b]{\linewidth}
    \caption{Ablation results of each component. ``Naive'' denotes naive QAT in a KD-based manner. ``Rank'' signifies our \textit{rank-decay} schedule.}
    \vspace{-0.1in}
      \resizebox{\linewidth}{!}{
      \begin{tabu}[t!]{l|ccccc}
\toprule
\multirow{2}{*}{Method} & \multirow{2}{*}{\makecell{Imaging\\Quality}$\uparrow$} &  \multirow{2}{*}{\makecell{Aesthetic\\Quality}$\uparrow$} &  \multirow{2}{*}{\makecell{Dynamic \\Degree}$\uparrow$} & \multirow{2}{*}{\makecell{Scene\\Consistency}$\uparrow$} & \multirow{2}{*}{\makecell{Overall\\Consistency}$\uparrow$}\\
 & & & & & \\
 \midrule
Naive & 60.40 & 52.50 & 76.67 & 13.28 & 21.63\\
$+\Phi$ & \textbf{63.41} & \textbf{54.75} & \textbf{77.89} & \textbf{15.51} & \underline{22.98}\\
\rowcolor{mycolor!30}$+$Rank & \underline{63.08} & \underline{54.67} & \underline{77.78} & \underline{15.32} & \textbf{23.01}\\
 \bottomrule
\end{tabu}
}
        \label{tab:component-half}
    \end{minipage}
     \vspace{-0.45in}
\end{wraptable}
\noindent\textbf{Effect of different components.} We evaluate the contribution of each component in Tab.~\ref{tab:component-half}. The auxiliary module $\Phi$ (Sec.~\ref{sec:module}) yields substantial performance improvements across all metrics. Further, the \textit{rank-decay} schedule (Sec.~\ref{sec:shrink}) effectively eliminates extra inference overhead, while inducing less than a $0.6\%$ drop in most metrics. It even leads to improvement in Overall Consistency.

\noindent\textbf{Choice of the shrinking ratio $\lambda$.} To determine a proper shrinking ratio $\lambda$ in Eq.~(\ref{eq:gamma}), we conduct experiments in Tab.~\ref{tab:reg-half}. By maintaining the same training iterations for each decay phase~\footnote{We also employ a fixed total number of training epochs.}, a small ratio results in an excessively rapid descent of $u$ in Eq.~(\ref{eq:gamma}) from $1$ to $0$, potentially destabilizing the training process. On the other hand, a larger ratio may cause the premature removal of high-contributing components during each phase. An extreme scenario would involve a ratio $100\%$ (\emph{i.e.}, $\lambda=1$), in which all $\mW_\Phi$ is removed in a single decay phase, leading to huge performance drops.

\begin{table}[!ht]\setlength{\tabcolsep}{1pt}
 \vspace{-0.05in}
  \centering
  \begin{minipage}[b]{0.49\linewidth}
  \renewcommand{\arraystretch}{1.2}
    \caption{Results of different shrinking ratios $\lambda$ in Eq.~(\ref{eq:gamma}) for each decay phase. $\lambda=1$ means directly decaying the entire $\mW_\Phi$. $\lambda=\frac{1}{2}$ is used in this work.}
    \vspace{-0.1in}
    \vspace{0.04in}
      \resizebox{\linewidth}{!}{
      \begin{tabu}[t!]{c|ccccc}
\toprule
\multirow{2}{*}{$\quad\quad\lambda\quad\quad$} & \multirow{2}{*}{\makecell{Imaging\\Quality}$\uparrow$} &  \multirow{2}{*}{\makecell{Aesthetic\\Quality}$\uparrow$} &  \multirow{2}{*}{\makecell{Dynamic \\Degree}$\uparrow$} & \multirow{2}{*}{\makecell{Scene\\Consistency}$\uparrow$} & \multirow{2}{*}{\makecell{Overall\\Consistency}$\uparrow$}\\
 & & & & & \\
\midrule
$1/4$ & \underline{63.02} & 54.23 & 76.84 & \underline{15.18} & 22.85 \\
\rowcolor{mycolor!30}$1/2$ & \textbf{63.08} & \textbf{54.67} & \underline{77.78} & \textbf{15.32} & \textbf{23.01}\\
$3/4$ & 62.89 & \underline{54.62} & \textbf{77.91} & 15.04 & \underline{22.89} \\
$1$ & 61.05 & 52.48 & 76.48 & 13.82 & 21.81\\
 \bottomrule
\end{tabu}
}
        \label{tab:reg-half}
    \end{minipage}
    \begin{minipage}[b]{0.485\linewidth}
    \renewcommand{\arraystretch}{1.1}
    \caption{Results of different initial ranks $r$ for Eq.~(\ref{eq:rewrite}). $r=0$ represents ``Naive'' in Tab.~\ref{tab:component-half}. We employ $r=32$ in this work.}
    \vspace{-0.1in}
      \resizebox{\linewidth}{!}{
      \begin{tabu}[t!]{c|ccccc}
\toprule
\multirow{2}{*}{$\quad\quad r\quad\quad$} & \multirow{2}{*}{\makecell{Imaging\\Quality}$\uparrow$} &  \multirow{2}{*}{\makecell{Aesthetic\\Quality}$\uparrow$} &  \multirow{2}{*}{\makecell{Dynamic \\Degree}$\uparrow$} & \multirow{2}{*}{\makecell{Scene\\Consistency}$\uparrow$} & \multirow{2}{*}{\makecell{Overall\\Consistency}$\uparrow$}\\
 & & & & & \\
\midrule
 0 & 60.40 &52.50 &76.67  &13.28 &21.63  \\
 8 & 62.71 & 54.47 & 74.62  & 14.42 & 22.81\\
 16 & 62.99 & \underline{54.62} & 76.58  & 14.84 & \underline{23.00}\\
 \rowcolor{mycolor!30}32 & \textbf{63.08} & \textbf{54.67} & \textbf{77.78}  & \underline{15.32} & \textbf{23.01} \\
 64 & \underline{63.06} & 54.30 & \underline{76.74}  & \textbf{15.40} & 22.92\\
 \bottomrule
\end{tabu}
}
        \label{tab:rank-half}
    \end{minipage}\hfill
     \vspace{-0.1in}
\end{table}
\noindent\textbf{Choice of the initial rank $r$.} We further present the results for different initial ranks $r$ of $\mW_\Phi$ in Tab.~\ref{tab:rank-half}. As $r$ increases, the performance gains diminish and eventually deteriorate (\emph{i.e.}, at $r=64$). We attribute this trend to the same issue associated with a small shrinking ratio discussed earlier. Specifically, increasing $r$ to $2r$ introduces an additional decay phase, which shortens the training time allocated to each phase and may lead to overly rapid decay (\emph{e.g.}, at $r=64$).

\noindent\textbf{Analysis of different fine-grained decay strategies.} To further demonstrate the effectiveness of our \textit{rank-decay} strategy, we evaluate alternative decay strategies in Tab.~\ref{tab:decay-half}. \rebuttal{``Linear'' in the table denotes linearly reducing the magnitude of the entire full-rank $\mW_\Phi$ to $\mathbf{0}$ by set $\lambda=1$ with a linear schedule as $u$ for Eq.~(\ref{eq:gamma}).}  Inspired by network pruning~\citep{han2015learningweightsconnectionsefficient, han2016deepcompressioncompressingdeep}, we introduce a ``Sparse'' strategy that progressively prunes the largest values in $\mW_\Phi$ during training. Additionally, motivated by residual quantization~\citep{li2017performanceguaranteednetworkacceleration}, we design a ``Res. Q.'' strategy, which first quantizes $\mW_\Phi$ into $4\times$$4$-bit tensors with the same shape and then progressively removes them one by one. Among all these methods, the ``Rank'' strategy outperforms others across all the metrics by a large margin. Additionally, both ``Sparse'' and ``Res. Q.'' strategies require at least $1.8\times$ training hours at the same setups compared with our ``Rank'' approach.

\begin{wraptable}{r}{0.58\linewidth}
 \vspace{-0.15in}
 \renewcommand{\arraystretch}{1.1}
 \setlength{\tabcolsep}{1pt}
  \centering
  \begin{minipage}[b]{\linewidth}
        \caption{Results of different decay strategies. Details of these methods can be found in Sec.~\ref{sec:more-decay}. ``Rank'' denotes the \textit{rank-decay} strategy in this work.}
        \vspace{-0.1in}
      \resizebox{\linewidth}{!}{
      \begin{tabu}[t!]{l|ccccc}
\toprule
\multirow{2}{*}{\makecell{Decay\\Strategy}} & \multirow{2}{*}{\makecell{Imaging\\Quality}$\uparrow$} &  \multirow{2}{*}{\makecell{Aesthetic\\Quality}$\uparrow$} &  \multirow{2}{*}{\makecell{Dynamic \\Degree}$\uparrow$} & \multirow{2}{*}{\makecell{Scene\\Consistency}$\uparrow$} & \multirow{2}{*}{\makecell{Overall\\Consistency}$\uparrow$}\\
 & & & & & \\
\midrule
Linear & 60.82 & 52.81 & 73.19 & 13.34 & 21.87 \\ 
Sparse & 61.15 & 54.06 & 74.24 & 13.86 & 22.52 \\
Sparse \emph{w/} Wanda & 61.43 & 54.08 & 74.36 & 13.94 & 22.48 \\
Sparse \emph{w/} MaskLLM & 61.36 & \underline{54.12} & \underline{74.82} & 14.15 & \underline{22.57} \\
Res. Q. & \underline{61.72} & 54.01 & 72.41 & \underline{14.17} & 22.31\\
\rowcolor{mycolor!30} Rank & \textbf{63.08} & \textbf{54.67} & \textbf{77.78} & \textbf{15.32} & \textbf{23.01}\\
 \bottomrule
\end{tabu}
}
        \label{tab:decay-half}
    \end{minipage}
     \vspace{-0.2in}
\end{wraptable}
\rebuttal{In addition, we introduce two stronger baselines: ``Sparse \emph{w/} Wanda'' and ``Sparse \emph{w/} MaskLLM''. Rather than pruning the smallest magnitudes as in ``Sparse'', the former employs Wanda~\citep{sun2024simpleeffectivepruningapproach} to prune $\mW_\Phi$ using $128$ randomly selected training samples in each decay phase (see ``Sparse'' in Sec.~\ref{sec:more-decay}). Following MaskLLM~\citep{fang2024maskllmlearnablesemistructuredsparsity}, the latter applies learned $2{:}4$ structured pruning masks to $\mW_\Phi$ in each phase. All other settings match those of ``Sparse''. ``Sparse \emph{w/} Wanda'' yields a small improvement over ``Sparse'', while ``Sparse \emph{w/} MaskLLM'' provides a larger gain; however, both remain below ``Rank'' on all metrics. It is also worth noting that ``Sparse \emph{w/} Wanda'' requires $1.8\times$ the training time of ``Rank'', similar to ``Sparse'', and ``Sparse \emph{w/} MaskLLM'' requires $2.1\times$ due to mask learning.}

\subsection{Efficiency Discussion}

\noindent\textbf{Inference efficiency.} We report the per-step latency of \texttt{W4A4} DiT components on one $A800$ GPU in Fig.~\ref{fig:infer} (b). Adapted from the CUDA kernel implementation by \cite{ashkboos2024quarotoutlierfree4bitinference}, \texttt{W4A4} QVGen achieves $1.21\times$ and $1.44\times$ speedups for \texttt{Wan} $1.3$B and $14$B, respectively. 
\begin{wrapfigure}{r}{0.67\linewidth}
   \centering
   \vspace{-0.15in}
   \setlength{\abovecaptionskip}{0.2cm}
   \begin{minipage}[b]{\linewidth}
       \begin{minipage}[b]{0.33\linewidth}
            \centering
            \begin{subfigure}[tp!]{\textwidth}
            \centering
            \subcaption{Model Size (GB)$\downarrow$}
            \includegraphics[width=\linewidth]{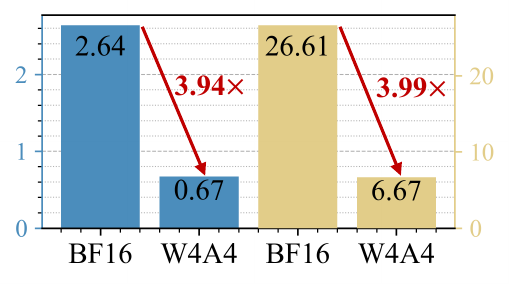}
            \end{subfigure}
       \end{minipage}\hfill
       \begin{minipage}[b]{0.33\linewidth}
            \centering
            \begin{subfigure}[tp!]{\linewidth}
            \centering
            \subcaption{Inference Latency (s)$\downarrow$}
            \includegraphics[width=\linewidth]{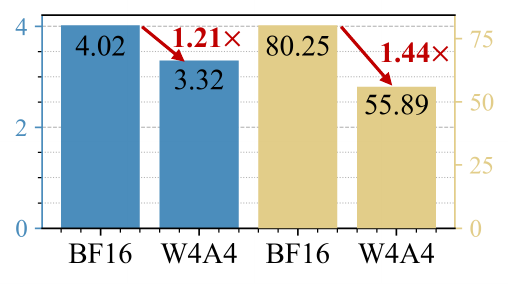}
            \end{subfigure}
       \end{minipage}\hfill
        \begin{minipage}[b]{0.33\linewidth}
            \centering
            \begin{subfigure}[tp!]{\linewidth}
            \centering
            \subcaption{Inference Latency (s)$\downarrow$}
            \includegraphics[width=\linewidth]{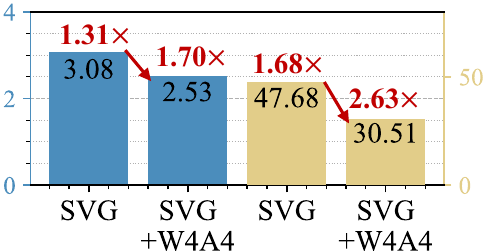}
            \end{subfigure}
       \end{minipage}
   \end{minipage}
   \vspace{-0.1in}
   \caption{Evaluation of memory compression and inference acceleration. \bl{Blue} color and \yl{yellow} color denote \texttt{Wan} $1.3$B and $14$B, respectively.}
   \label{fig:infer}
   \vspace{-0.15in}
\end{wrapfigure}
Besides, it exhibits ${\sim}4\times$ memory savings compared
to the \texttt{BF16} format, as shown in Fig.~\ref{fig:infer} (a). Nevertheless, we believe that the acceleration ratio could be further improved with advanced kernel-fusion techniques and optimization for specific tensor shapes in these models, which we leave to future work \rebuttal{(More discussion can be found in Secs.~\ref{sec:kernel_fusion}-\ref{sec:dit_block})}. In addition, it is worth noting that our QVGen adheres to standard uniform quantization, enabling drop-in deployment with existing \texttt{W4A4} kernels across various devices. 

Although previous research~\citep{zhang2025fastvideogenerationsliding} mentioned that the current $3D$ full-attention occupies significant computation in video generation, our QVGen is orthogonal to works that focus on accelerating $3D$ attention and can deliver notable speedups to models that already employ those techniques. For example, when SVG~\citep{xi2025sparsevideogenacceleratingvideo} is paired with \texttt{W4A4} QVGen, the model runs $1.70\times$ and $2.63\times$ faster, compared with $1.31\times$ and $1.68\times$ for SVG alone, as shown in Fig.~\ref{fig:infer} (c).

\begin{wraptable}{r}{0.62\linewidth}\setlength{\tabcolsep}{2pt}
    \vspace{-0.15in}
 \renewcommand{\arraystretch}{1.1}
  \centering
  \caption{Training costs across different methods and models.} 
  \vspace{-0.1in}
  \resizebox{\linewidth}{!}{
  \begin{tabu}[t!]{c|cccc|cccc}
\toprule
\multirow{2}{*}{Model} & \multicolumn{4}{c}{Train. Time (GPU days)$\downarrow$} & \multicolumn{4}{c}{Train. Mem. (GB/GPU)$\downarrow$}\\
\cmidrule(r){2-5}\cmidrule(r){6-9}
 & LSQ & Q-DM & EfficientDM & \cellcolor{mycolor!30}QVGen & LSQ & Q-DM & EfficientDM & \cellcolor{mycolor!30}QVGen\\
\midrule
CogVideoX-$2$B & \textbf{8.64} & 9.30 & \underline{8.97} & \cellcolor{mycolor!30}9.44 & \underline{62.78} & 67.26 & \textbf{44.27} & \cellcolor{mycolor!30}67.93\\
\texttt{Wan} $1.3$B & \textbf{9.92} & 10.92 & \underline{10.68} & \cellcolor{mycolor!30}11.11 & \underline{63.04} & 66.15 & \textbf{42.74} & \cellcolor{mycolor!30}66.67\\
\bottomrule
\end{tabu}
}
    \label{tab:train_cost}
     \vspace{-0.1in}
\end{wraptable}
\rebuttal{\noindent\textbf{Training efficiency.} Moreover, we conduct a comparative training efficiency analysis across different models and QAT baselines, as demonstrated in Tab.~\ref{tab:train_cost}. LSQ~\citep{esser2020learnedstepsizequantization} does not use distillation-based QAT (\emph{i.e.}, no teacher forward pass) and is therefore the fastest to train. EfficientDM~\citep{he2024efficientdm} updates only LoRA parameters, which substantially reduces optimizer-state memory on GPUs. We believe such a strategy in EfficientDM can be combined with our method to further improve training efficiency, and we plan to explore this in future work. Relative to distillation-based QAT (\emph{i.e.}, Q-DM~\citep{li2023qdm}), our method with low-rank $\Phi$ adds only ${\sim}1.02\times$ GPU-days and ${\sim}1.01\times$ peak GPU memory for Wan $1.3$B model. To be noted, all baselines greatly fall short of our method in final performance, as illustrated in Sec.~\ref{sec:performacne}. In future work, we will delve into improving QVGen's training efficiency while maintaining its strong performance.}

\section{Related Work}

\noindent\textbf{Video diffusion models.} Building upon the remarkable success of diffusion models (DMs)~\citep{NEURIPS2020_4c5Bcfec, Song2021DenoisingDI, chen2024pixartalpha} in image generation~\citep{flux2024, xie2025sana}, the exploration in the field of video generation~\citep{yang2025cogvideoxtexttovideodiffusionmodels, wanteam2025wanopenadvancedlargescale, kong2025hunyuanvideosystematicframeworklarge} is also becoming popular. In contrast to convolution‑based diffusion models~\citep{ho2022video, blattmann2023stablevideodiffusionscaling}, the success of OpenAI Sora~\citep{openai2024video} has spurred researchers to adopt the diffusion transformer (DiT)~\citep{peebles2023scalable} architecture and scale it up for high‑quality video generation. However, advanced video DiTs~\citep{wanteam2025wanopenadvancedlargescale, yang2025cogvideoxtexttovideodiffusionmodels, kong2025hunyuanvideosystematicframeworklarge} often involve billions of parameters, lengthy multi‑step denoising, and intensive computation over long frame sequences. This results in substantial time and memory overhead, which limits their practical deployment. To enable faster video generation, some works have introduced step‑distillation~\citep{yin2025causvid, lin2025diffusion} on pre-trained models to shorten the denoising trajectory. Others focus on efficient attention~\citep{zhang2025sageattention, xi2025sparsevideogenacceleratingvideo, huang2025linvideo}, feature caching~\citep{lv2025fastercache, zou2025accelerating, huangharmonica}, or parallel inference~\citep{fang2024pipefusion, fang2024xdit} to accelerate per-step computations. Moreover, to achieve memory-efficient inference, existing research has explored efficient architecture design~\citep{wu2024snapgenvgeneratingfivesecondvideo, liu2025luminavideoefficientflexiblevideo}, structure pruning~\citep{benyahia2024mobilevd, zhao2025dynamic}, and model quantization~\citep{zhao2025viditq, chen2024qditaccurateposttrainingquantization, tian2024qvdposttrainingquantizationvideo}. These methods aim to achieve both model size and computational cost reduction.

\noindent\textbf{Model quantization.} Quantization~\citep{jacob2017quantizationtrainingneuralnetworks} is a predominant technique for minimizing storage and accelerating inference. It can be categorized into post-training quantization (PTQ)~\citep{nagel2020downadaptiveroundingposttraining} and quantization-aware training (QAT)~\citep{vanbaalen2020bayesianbitsunifyingquantization}. PTQ compresses models without re-training, making it fast and data-efficient. Nevertheless, it may result in suboptimal performance, especially under ultra-low bit-width (\emph{e.g.}, $3$/$4$-bit). Conversely, QAT applies quantization during training or finetuning and typically achieves higher compression rates with less performance degradation. For DMs, previous quantization research~\citep{li2023qdiffusionquantizingdiffusionmodels, he2023ptqdaccurateposttrainingquantization, huang2024tfmqdmtemporalfeaturemaintenance, huang2025temporalfeaturemattersframework, so2023temporaldynamicquantizationdiffusion, wu2024ptq4ditposttrainingquantizationdiffusion, shang2023posttrainingquantizationdiffusionmodels, wang2024accurateposttrainingquantizationdiffusion} primarily focuses on image generation. Video DMs, which incorporate complex temporal and spatial modeling, are still challenging for low‑bit quantization. QVD~\citep{tian2024qvdposttrainingquantizationvideo} and Q-DiT~\citep{chen2024qditaccurateposttrainingquantization} first apply PTQ to convolution-based~\citep{xu2023magicanimatetemporallyconsistenthuman, guo2023animatediff} and DiT-based~\citep{opensora} video DMs, respectively. Furthermore, ViDiT‑Q~\citep{zhao2025viditq} employs mixed‑precision and fine‑grained PTQ to improve performance. However, it experiences video quality loss with 8‑bit activation quantization and has not been extended to more advanced models~\citep{yang2025cogvideoxtexttovideodiffusionmodels, wanteam2025wanopenadvancedlargescale}. Consequently, QAT for advanced video DMs is urgently needed.

In this work, we identify and address the ineffective low-bit QAT for the video DM. Our proposed method significantly enhances model performance and incurs zero inference overhead in $3$/$4$-bit settings. Moreover, our framework is orthogonal to existing QAT methods, which target gradient estimation~\citep{gong2019differentiablesoftquantizationbridging}, oscillation reduction~\citep{nagel2022overcomingoscillationsquantizationawaretraining}, \emph{etc}.

\section{Conclusions and Limitations}\label{sec:con-lim}
In this work, we are the first to explore the application of quantization-aware training (QAT) in video DMs. Specifically, we provide a theoretical analysis that identifies that lowering the gradient norm is essential to improve convergence. Then, we propose an auxiliary module ($\Phi$) to achieve this. Additionally, we design a \textit{rank-decay} schedule to progressively eliminate $\Phi$ for zero inference overhead with minimal impact on performance. Extensive experiments for $3$-bit and $4$-bit quantization validate the effectiveness of our framework, \textit{QVGen}. In terms of limitations, we focus on video generation in this work. However, we believe that our methods can be generalized to more tasks, \emph{e.g.}, natural language processing (NLP), which we will explore in the future.

\section*{Acknowledgement}
This work was supported by the National Natural Science Foundation of China (Nos. 62476018), and the Postdoctoral Fellowship Program of CPSF (No. BX20250487). This work was also supported by the Hong Kong Research Grants Council under the Areas of Excellence scheme grant AoE/E-601/22-R and NSFC/RGC Collaborative Research Scheme grant CRS\_HKUST603/22.

\bibliography{iclr2026_conference}
\bibliographystyle{iclr2026_conference}

\clearpage
\appendix
\addtocontents{toc}{\protect\setcounter{tocdepth}{2}}
\phantomsection
\begin{center}
    \Large{\textbf{Appendix}}
\end{center}

\begin{tocbox}
\begingroup
\makeatletter

\let\oldl@section\l@section
\let\oldl@subsection\l@subsection

\renewcommand{\l@section}[2]{%
  \oldl@section{\textsc{\MakeTextUppercase{#1}}}{#2}%
}
\renewcommand{\l@subsection}[2]{%
  \oldl@subsection{\textsc{\MakeTextUppercase{#1}}}{#2}%
}

\makeatother
\tableofcontents
\endgroup
\end{tocbox}

\setcounter{section}{0}
\setcounter{equation}{0}
\setcounter{figure}{0}
\setcounter{table}{0}
\renewcommand\thesection{\Alph{section}}
\renewcommand\thefigure{\Alph{figure}}
\renewcommand\thetable{\Alph{table}}
\renewcommand{\theequation}{\Alph{equation}}

\section{Alogrithm of QVGen}\label{sec:algo}
We summarize our proposed QVGen in Alg.~\ref{alg:main}. $u$ (see Eq. (\ref{eq:gamma})) in this work follows cosine annealing schedule. $\bx_0$ and $\mathcal{C}$ denote a clean video clip and its corresponding condition. $N$ is the maximum timestep for training, which is always set to $1000$~\citep{ho2020denoisingdiffusionprobabilisticmodels}.

\begin{algorithm}
\caption{Procedure of QVGen framework}
\label{alg:main}
\small
\begin{minipage}{\linewidth}
    \textsc{func} \textsc{QVGen}($\mathcal{D}$, $\bepsilon_{\theta}$, \texttt{it\_per\_decay\_phase}, $r$, $\lambda$, $b$)
\begin{algorithmic}[1]
     \Require $\mathcal{D}$ --- training dataset
     \Statex \qquad\space\space $\bepsilon_{\theta}(\cdot)$ --- full-precision DiT model
     \Statex \qquad\space\space \texttt{it\_per\_decay\_phase} --- amount of training iterations per decay phase
     \Statex \qquad\space\space $r$ --- initial rank of the introduced auxiliary module
     \Statex \qquad\space\space $\lambda$ --- shrinking ratio
     \Statex \qquad\space\space $b$ --- quantization bit-width
     \Statex
     \Statex \textbf{\bl{\tt{// Preprocess}}}
    \State Standard uniform $b$-bit quantization for $\bepsilon_{\theta}$ to get $\hat\bepsilon_{\theta}$ \gray{\Comment{See Eq. (\ref{eq:quant})}}
    \State Generate all $\mW_\Phi$ for $\hat\bepsilon_{\theta}$ \gray{\Comment{See Eq. (\ref{eq:forward})}}
    \Statex \textbf{\bl{\tt{// Start QAT}}}
    \While {$r>\frac{1}{\lambda}$}:
    \State Decompose all $\mW_\Phi$ to $\{\mL, \mR\}$ with rank $r$ by Eq.~(\ref{eq:svd-phi})
        \State Generate $\vgamma$ by Eq. (\ref{eq:gamma}) \gray{\Comment{$u$ in $\gamma$ will decay from $1$ to $0$ in following \texttt{it\_per\_decay\_phase} iterations}}
        \For {\texttt{it} in $0$ to \texttt{it\_per\_decay\_phase}}: 
            \State Get batched data pair $(\bx_0, \mathcal{C})$ from $\mathcal{D}$
            \State $\bepsilon\sim\mathcal{N}(\mathbf{0}, \mathbf{I})$
            \State $\tau\sim \text{Uniform}([1,...,N])$
            \State Generate $\bx_\tau$ by Eq. (\ref{eq:add_noise})
            \State Calculate $\mathcal{L}$ by Eq. (\ref{eq:loss}) \gray{\Comment{Eq. (\ref{eq:rank-decay}) is employed in $\hat\bepsilon_\theta$}}
            \State Update all $\mW$, $s$, $z$, and $\{\mL, \mR\}$ through back-propagation \gray{\Comment{Eq. (\ref{eq:ste}) is applied}}
        \EndFor
        \State Truncate $\{\mL, \mR\}$ to $\{\mL', \mR'\}$ and regenerate the corresponding $\mW_\Phi$ by Eq. (\ref{eq:rewrite})
        \State $r = (1-\lambda)r$
    \EndWhile
    \State Generate $\vgamma=[u]_{n\times r}$ \gray{\Comment{$u$ in $\gamma$ will decay from $1$ to $0$ in following \texttt{it\_per\_decay\_phase} iterations}}
    \State Train $\hat\bepsilon_\theta$ for \texttt{it\_per\_decay\_phase} iterations follow the recipe in Lines $6$-$13$ 
    \State Shrink all $\mW_\Phi$ to $\varnothing$\gray{\Comment{Since $\vgamma=[0]_{n\times r}$ at the end of training, all $\mW_\Phi$ can be removed}}
    \State \textbf{return} $\hat\bepsilon_\theta$
\end{algorithmic}

\end{minipage}
\end{algorithm}

\section{Proof of Thm.~\ref{theorem:regret}}\label{sec:proof}
\begin{assumption}\label{assumption:1} $f_t$ is convex;
\end{assumption}
\begin{assumption}\label{assumption:2}
    $\forall \vtheta_i, \vtheta_j\in\mathbb{S}^d, \|\vtheta_i-\vtheta_j\|_{\infty}\leq D_{\infty}$.
\end{assumption}
\begin{theorem} The average regret is upper-bounded as: $\frac{R(T)}{T} \leq \frac{dD_{\infty}^2}{2T\eta_T^m}+\frac{1}{T}\sum^T_{t=1}\frac{\eta_t^M}{2}\|\vg_t\|_2^2$.
\end{theorem}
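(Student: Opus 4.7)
The plan is to follow the standard online convex optimization regret argument (as used, e.g., in the Adam/AMSGrad analyses the authors cite). First, I would invoke convexity of $f_t$ (Assumption~\ref{assumption:1}) to bound the instantaneous regret by the linear term
\begin{equation*}
    f_t(\vtheta_t) - f_t(\vtheta^*) \leq \vg_t^{T}(\vtheta_t - \vtheta^*),
\end{equation*}
so that it suffices to control $\sum_{t=1}^{T}\vg_t^{T}(\vtheta_t - \vtheta^*)$. Next, I would unroll the update $\vtheta_{t+1} = \vtheta_t - \eta_t \vg_t$ and expand $\|\vtheta_{t+1} - \vtheta^*\|_2^{2}$ to obtain the identity
\begin{equation*}
    \vg_t^{T}(\vtheta_t - \vtheta^*) = \frac{1}{2\eta_t}\bigl(\|\vtheta_t - \vtheta^*\|_2^{2} - \|\vtheta_{t+1} - \vtheta^*\|_2^{2}\bigr) + \frac{\eta_t}{2}\|\vg_t\|_2^{2}.
\end{equation*}
The $\frac{\eta_t}{2}\|\vg_t\|_2^{2}$ term is precisely the second piece of the claimed bound, so the remaining work concentrates on the telescoping distance term.

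Second, I would sum the identity over $t = 1,\ldots,T$ and rearrange the telescoping part as
\begin{equation*}
    \sum_{t=1}^{T}\frac{1}{2\eta_t}\bigl(\|\vtheta_t - \vtheta^*\|_2^{2} - \|\vtheta_{t+1} - \vtheta^*\|_2^{2}\bigr)
    \leq \frac{\|\vtheta_1 - \vtheta^*\|_2^{2}}{2\eta_1} + \sum_{t=2}^{T}\Bigl(\frac{1}{2\eta_t} - \frac{1}{2\eta_{t-1}}\Bigr)\|\vtheta_t - \vtheta^*\|_2^{2}.
\end{equation*}
From Assumption~\ref{assumption:2} we have $\|\vtheta_t - \vtheta^*\|_\infty \leq D_\infty$, and therefore $\|\vtheta_t - \vtheta^*\|_2^{2} \leq d D_\infty^{2}$. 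Substituting this bound into every distance term and collapsing the resulting telescoping sum will yield $\frac{d D_\infty^{2}}{2\eta_T}$, after which dividing through by $T$ produces the advertised bound on $R(T)/T$.

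The main obstacle, and a place where I would want to be explicit in the final write-up, is the sign of the increments $\frac{1}{\eta_t} - \frac{1}{\eta_{t-1}}$: replacing $\|\vtheta_t - \vtheta^*\|_2^{2}$ by the uniform bound $d D_\infty^{2}$ is only a valid upper bound when these increments are nonnegative, i.e.\ when the learning-rate sequence $\{\eta_t\}$ is non-increasing. The paper describes using a cosine-annealing schedule, so this condition is naturally satisfied, but it should be flagged as an implicit assumption of the theorem (or else absorbed by taking $\eta_t$ constant). Once the schedule is monotone, the algebra is routine: the telescoping collapses, the $\eta_1$-term is dominated by the $\eta_T$-term, and the two pieces combine to give the stated right-hand side, after which the final $1/T$ normalization is immediate.
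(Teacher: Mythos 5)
Your proposal is correct and follows essentially the same route as the paper's proof: convexity to linearize the regret, the expand-and-rearrange identity for the squared distance under the gradient-descent update, telescoping, and the $\|\cdot\|_\infty \le D_\infty$ bound giving $dD_\infty^2$ (the paper merely writes the identity coordinate-wise with per-coordinate rates $\eta_{t,p}$ before assuming a shared $\eta_t$, which collapses to your vector-form argument). Your remark that replacing $\|\vtheta_t-\vtheta^*\|_2^2$ by $dD_\infty^2$ requires a non-increasing learning-rate sequence is a fair observation that applies equally to the paper's own proof, which uses this step silently.
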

\begin{proof} Considering the update for the $p$-th entry of parameters in a quantized video DM:
\begin{equation}
    \begin{array}{ll}
        \vtheta_{t+1,p} = \vtheta_{t,p} - \eta_{t,p}\vg_{t,p},
    \end{array}
\end{equation}
where $\eta_{t,p}$ is the corresponding learning rate, we have:
\begin{equation}
\begin{array}{ll}
        (\vtheta_{t+1,p}-\vtheta_p^*)^2 &= (\vtheta_{t,p} - \eta_{t,p}\ \vg_{t,p}-\vtheta_p^*)^2 \\
        &=(\vtheta_{t,p}-\vtheta_p^*)^2 - 2(\vtheta_{t,p}-\vtheta_p^*)\eta_{t,p}\vg_{t,p}+\eta_{t,p}^2\vg_{t,p}^2.
\end{array}
\end{equation}
Rearrange the equation, and divide $2\eta_{t,p}$ on both side as $\eta_{t,p}$ is none-zero,
\begin{equation}
        \label{eq:rearrange}
        \begin{array}{ll}
            \vg_{t,p}(\vtheta_{t,p}-\vtheta_p^*) = \frac{1}{2\eta_{t,p}}[(\vtheta_{t,p}-\vtheta_p^*)^2 - (\vtheta_{t+1,p}-\vtheta_p^*)^2]+\frac{\eta_{t,p}}{2}\vg_{t,p}^2.
        \end{array}
\end{equation}
According to Assm.~\ref{assumption:1},
\begin{equation}
\begin{array}{ll}
    f_t(\vtheta_t) - f_t(\vtheta^*) \leq \vg_t^T(\vtheta_t-\vtheta^*).
    \end{array}
\end{equation}
Therefore, summing over $d$ dimensions of $\vtheta$ and $T$ iterations, the regret satisfies:
\begin{equation}
\begin{array}{ll}
        R(T) &\leq \sum_{t=1}^{T} \sum_{p=1}^{d} \frac{1}{2\eta_{t,p}} [(\vtheta_{t,p}-\vtheta_p^*)^2 - (\vtheta_{t+1,p}-\vtheta_p^*)^2] + \sum_{t=1}^{T} \sum_{p=1}^{d}\frac{\eta_{t,p}}{2}\vg_{t,p}^2 \\
        &= \sum_{p=1}^{d} [\frac{1}{2\eta_{1,p}}(\vtheta_{1,p}-\vtheta_p^*)^2 - \frac{1}{2\eta_{T,p}}(\vtheta_{T+1,p}-\vtheta_p^*)^2] \\
        &\quad + \sum_{t=2}^{T}\sum_{p=1}^{d}(\frac{1}{2\eta_{t,p}}-\frac{1}{2\eta_{t-1,p}})(\vtheta_{t,p}-\vtheta_p^*)^2\\
        &\quad + \sum_{t=1}^{T} \sum_{p=1}^{d}\frac{\eta_{t,p}}{2}\vg_{t,p}^2. \\
\end{array}
\end{equation}
Considering Assm.~\ref{assumption:2}, we can further relax the above inequality to:
\begin{equation}
\begin{array}{ll}
        R(T) \leq \sum_{p=1}^{d} \frac{D_\infty^2}{2\eta_{1,p}} + \sum_{t=2}^{T}\sum_{p=1}^{d}(\frac{1}{2\eta_{t,p}}-\frac{1}{2\eta_{t-1,p}})D_\infty^2 + \sum_{t=1}^{T} \sum_{p=1}^{d}\frac{\eta_{t,p}}{2}\vg_{t,p}^2.
\end{array}
\end{equation}
Denoting the maximum and minimum values for $\{\eta_{t,p}\}_{p=1,\ldots,d}$ as $\eta_t^M$ and $\eta_t^m$, respectively, then:
\begin{equation}
\begin{array}{ll}
R(T) \leq \frac{dD_\infty^2}{2\eta_{T}^m} + \sum_{t=1}^{T} \frac{\eta_{t}^M}{2}\|\vg_t\|^2_2.
\end{array}
\end{equation}
Thus, the average regret becomes:
\begin{equation}
\begin{array}{ll}
    \frac{R(T)}{T} \leq \frac{dD_{\infty}^2}{2T\eta_T^m}+\frac{1}{T}\sum^T_{t=1}\frac{\eta_t^M}{2}\|\vg_t\|_2^2.
\end{array}
\end{equation}
\end{proof}

\section{Convergence Analysis without Requiring Convexity}
\label{sec:nonconvex}

\rebuttal{For completeness, we also provide a nonconvex convergence result for video DMs under standard smoothness assumptions. 
Here we consider a (possibly nonconvex) training objective $F:\mathbb{S}^d \to \mathbb{R}$ and the gradient descent updates
\begin{equation}
\begin{array}{ll}
    \vtheta_{t+1} = \vtheta_t - \eta_t \vg_t, 
    \qquad 
    \vg_t := \nabla F(\vtheta_t),
    \label{eq:gd-update}
\end{array}
\end{equation}
where $t$ indexes the optimization iterations. Regret analysis in Sec.~\ref{sec:proof} follows the standard online learning setting, where the per-step loss $f_t$ varies with $t$. 
In contrast, nonconvex convergence analysis~\citep{bottou2018optimizationmethodslargescalemachine, ghadimi2013stochasticfirstzerothordermethods} uses a fixed deep-learning objective $F$ across iterations, so no subscript is needed, and convergence is certified by a vanishing minimum gradient norm.
\begin{assumption}[Smoothness]\label{ass:smooth} The function $F$ is $L$-smooth, \emph{i.e.}, for all $\vtheta,\vtheta' \in \mathbb{S}^d$,
\begin{equation}
\begin{array}{ll}
    \|\nabla F(\vtheta) - \nabla F(\vtheta')\|_2 \leq L \|\vtheta - \vtheta'\|_2.
\end{array}
\end{equation}
Equivalently, $F$ satisfies the standard descent inequality:
\begin{equation}
\begin{array}{ll}
    F(\vtheta') \leq F(\vtheta) + \nabla F(\vtheta)^\top (\vtheta' - \vtheta)
    + \frac{L}{2}\|\vtheta'-\vtheta\|_2^2.
\end{array}
\end{equation}
\end{assumption}
\begin{assumption}[Learning rate bounds]\label{ass:lr}
The learning rates satisfy
\begin{equation}
\begin{array}{ll}
    0 < \eta^m \le \eta_t \le \eta^M \le \frac{1}{L}
    \quad \text{for all } t,
    \end{array}
\end{equation}
where $\eta^m$ and $\eta^M$ are the minimum and maximum learning rates\footnote{We employ the same learning rate for the entire quantized DM for simplicity.} across iterations, respectively.
\end{assumption}
\begin{assumption}[Lower-bounded objective]\label{ass:lower}
There exists $F^\star$ such that $F(\vtheta) \ge F^\star$ for all $\vtheta \in \mathbb{S}^d$.
\end{assumption}
\begin{theorem}[Convergence to a first-order stationary point]
\label{thm:nonconvex-smooth}
Under Assms.~\ref{ass:smooth}--\ref{ass:lower}, the iterates of Eq.~(\ref{eq:gd-update}) satisfy
\begin{equation}
\begin{array}{ll}
    \frac{1}{T}\sum_{t=1}^{T} \|\vg_t\|_2^2
    \;\le\;
    \frac{2\big(F(\vtheta_1) - F^\star\big)}{T\,\eta^m}.
    \label{eq:avg-grad-norm}
    \end{array}
\end{equation}
Consequently,
\begin{equation}
\begin{array}{ll}
    \min_{1\le t\le T} \|\vg_t\|_2^2
    \;\le\;
    \frac{1}{T}\sum_{t=1}^{T} \|\vg_t\|_2^2
    \xrightarrow{T\to\infty} 0.
    \end{array}
    \label{eq:non-convex-converge}
\end{equation}
Thus, the iterates converge in the standard nonconvex sense to first-order stationary points, 
in the sense that there exists an iterate with arbitrarily small gradient norm when $T$ is sufficiently large.
\end{theorem}
\begin{proof}
By $L$-smoothness (the descent inequality) and the update $\vtheta_{t+1} = \vtheta_t - \eta_t \vg_t$,
\begin{equation}
\begin{aligned}
\begin{array}{ll}
    F(\vtheta_{t+1})
    &\le 
    F(\vtheta_t)
    + \nabla F(\vtheta_t)^\top(\vtheta_{t+1}-\vtheta_t)
    + \frac{L}{2}\|\vtheta_{t+1}-\vtheta_t\|_2^2 \\
    & = 
    F(\vtheta_t)
    - \eta_t \|\vg_t\|_2^2
    + \frac{L}{2}\eta_t^2 \|\vg_t\|_2^2
    =
    F(\vtheta_t)
    - \eta_t\Big(1 - \frac{L\eta_t}{2}\Big)\|\vg_t\|_2^2.
    \end{array}
\end{aligned}
\end{equation}
Since $\eta_t \le 1/L$, we obtain $1 - L\eta_t/2 \ge 1/2$, hence
\begin{equation}
\begin{array}{ll}
    F(\vtheta_{t+1}) 
    \le 
    F(\vtheta_t) - \frac{\eta_t}{2}\|\vg_t\|_2^2.
    \end{array}
\end{equation}
Summing over $t\in\{1,\ldots,T\}$ and using $F(\vtheta_{T+1}) \ge F^\star$ yields
\begin{equation}
\begin{array}{ll}
    \sum_{t=1}^{T} \frac{\eta_t}{2}\|\vg_t\|_2^2
    \le 
    F(\vtheta_1) - F^\star.
    \end{array}
\end{equation}
With $\eta_t \ge \eta^m$, we obtain
\begin{equation}
\begin{array}{ll}
    \frac{1}{T}\sum_{t=1}^{T} \|\vg_t\|_2^2
    \le 
    \frac{2\big(F(\vtheta_1) - F^\star\big)}{T\,\eta^m},
    \end{array}
\end{equation}
which proves Eq.~(\ref{eq:avg-grad-norm}). Moreover, we have
\begin{equation}
\begin{array}{ll}
    \min_{1\le t\le T} \|\vg_t\|_2^2
    \;\le\;
    \frac{1}{T}\sum_{t=1}^{T} \|\vg_t\|_2^2
    \;\le\;
    \frac{2\big(F(\vtheta_1) - F^\star\big)}{T\,\eta^m}
    \xrightarrow{T\to\infty} 0.
    \end{array}
\end{equation}
\end{proof}
\begin{remark}[Connection to our convex analysis]
Thm.~\ref{thm:nonconvex-smooth} shows that, for any finite $T$, the convergence behavior 
(\emph{i.e.}, Eq.~(\ref{eq:non-convex-converge})) of QAT is determined by the average gradient norm $\frac{1}{T}\sum_{t=1}^{T}\|\vg_t\|_2^2$. Since this quantity admits an $\mathcal{O}(1/T)$ upper bound under smoothness and bounded learning rates, reducing $\|\vg_t\|_2$ during training directly tightens the bound and improves the finite-step convergence of QAT. This matches our convex regret analysis, where the same average gradient norm appears as the core term controlling convergence. Thus, reducing the gradient norm is essential for improving QAT optimization in both settings.
\end{remark}}

\section{More Experimental Details}\label{sec:more-imple}
In this section, we provide additional experimental setups.

\noindent\textbf{Models.} For testing, we employ DDIM~\citep{Song2021DenoisingDI} and DPM-Solver++~\citep{lu2023dpmsolverfastsolverguided} for CogVideoX-$2$B and 1.5-$5$B models~\citep{yang2025cogvideoxtexttovideodiffusionmodels}, respectively. For flow-based \texttt{Wan} models~\citep{wanteam2025wanopenadvancedlargescale}, we additionally apply UniPC~\citep{zhao2023unipc} corrector and set \texttt{flow\_shift}~\citep{lipman2023flowmatchinggenerativemodeling} to $3.0$ and $5.0$ for generating $480p$ and $720p$ videos, respectively.

\noindent\textbf{Baselines.} For QAT baselines, we employ the same settings as our QVGen, \emph{e.g.}, training iterations, batch size, and optimizer, to make a fair comparison. For PTQ baselines, we adopt bit-width in $\{2,\ldots, 8\}$ for the mixed-precision strategy proposed in  ViDiT-Q~\citep{zhao2025viditq}. For \texttt{W4A4} group-wise SVDQuant~\citep{li2025svdquant}, we retain $16$-bit precision for linear layers involved in \texttt{adaptive normalization}, \texttt{embedding layers}, and the key and value projections in \texttt{cross-attention}. For both \texttt{W4A4} and \texttt{W4A6} SVDQuant, we apply SmoothQuant~\citep{xiao2024smoothquantaccurateefficientposttraining} as a pre-processing step, and GPTQ~\citep{frantar2023gptqaccurateposttrainingquantization} as a post-processing step. Except as noted above, we only quantize all linear layers for a given video DM. For the attention module, we adopt full-precision \texttt{flash-attention}~\citep{dao2022flashattentionfastmemoryefficientexact} to speed up inference, which is a common practice. It is also worth noting that, since we only quantize linear layers and employ \texttt{flash-attention} for the attention modules, the ``Naive'' baseline (see Tab.~\ref{tab:components}) is equivalent to Q-DM~\citep{li2023qdm} in this paper.

\noindent\textbf{Training.} Before QAT, we resize and center-crop the input frames to match the evaluation resolution, except for \texttt{Wan} $14$B, which uses $480p$ during QAT to reduce training costs. We then sample video clips containing $49$ frames at equal-frame intervals. During QAT, \texttt{Wan} $14$B~\citep{wanteam2025wanopenadvancedlargescale} and CogVideoX1.5-$5$B~\citep{yang2025cogvideoxtexttovideodiffusionmodels} are trained using \texttt{PyTorch FSDP}~\citep{zhao2023pytorchfsdpexperiencesscaling}, while other diffusion models are trained with \texttt{DeepSpeed ZeRO-2}~\citep{rajbhandari2020zeromemoryoptimizationstraining}. Specifically, we adopt a warm-up phase spanning $\frac{1}{10}$ of the total training epochs, and set the global batch size to $48$ for \texttt{Wan} $1.3$B and $64$ for all other models. A cosine annealing schedule is applied to the learning rate~\footnote{This mentioned learning rate is applied to both model weights and the introduced $\Phi$.}, initialized at $3\times10^{-5}$ for moderate-sized models ($\leq$$2$B), $5\times10^{-5}$ for CogVideoX1.5-$5$B, and $10^{-5}$ for \texttt{Wan} $14$B. For weight quantization parameters, we adopt LSQ~\citep{esser2020learnedstepsizequantization} with an initial learning rate of $3\times10^{-5}$. As described in the preliminary section of the main text, we use the straight-through estimator (STE)~\citep{bengio2013estimatingpropagatinggradientsstochastic} to ensure the differentiability of the quantization process. To be noted, when the remaining rank $r < \frac{1}{\lambda}$, we directly apply a cosine annealing function (\emph{i.e.}, $\vgamma = [u]_{n \times r}$) to gradually shrink the remaining $\mW_\Phi$ to $\varnothing$. Moreover, the training days are summarized in Tab.~\ref{tab:hours}.
\begin{table}[!ht]\setlength{\tabcolsep}{3pt}
  \centering
  \renewcommand{\arraystretch}{1.1}
  \caption{\#GPU days ($H100$) across different models. We present the increased time of QVGen compared with the naive QAT in a KD-based manner in \red{red} subscripts. QVGen, which uses $r=32$ in this paper, incurs negligible time overhead but significantly higher performance (see Tab.~\ref{tab:components}) than the naive method.} 
  \vspace{-0.1in}
  \resizebox{0.58\linewidth}{!}{
  \begin{tabu}[t!]{l|cccc}
\toprule
 Method/Model & CogVideoX-$2$B & \texttt{Wan} $1.3$B & CogVideoX1.5-$5$B & \texttt{Wan} $14$B\\
\midrule
\rowcolor{mycolor!30} QVGen (Ours) & 9.44$_{\mathbf{\red{+0.14}}}$ & 11.11$_{\mathbf{\red{+0.18}}}$ & $\sim$51 & $\sim$182\\
\bottomrule
\end{tabu}
}
    \label{tab:hours}
    \vspace{-0.1in}
\end{table}

\noindent\textbf{Evaluation.} For VBench~\citep{huang2024vbench}, we test $1{\sim}2$B models on $8{\times} H100$ GPUs. For huge DMs, we evaluate CogVideoX1.5-$5$B on $64{\times} H100$ GPUs and \texttt{Wan} $14$B on $128{\times} H100$ GPUs for both VBench and VBench-2.0~\citep{zheng2025vbench20advancingvideogeneration}. The batch size is set to one per GPU, and each run completes within one day. Besides, we sample $5$ videos for each unaugmented text prompt across the $8$ dimensions in VBench. For VBench-2.0~\citep{zheng2025vbench20advancingvideogeneration}, we generate $3$ videos per augmented text prompt across all dimensions, except for the Diversity dimension, where we generate $20$ videos for each prompt.

\section{Comparison with Baselines under relatively higher bit-width}\label{sec:high-bit}
\begin{wraptable}{r}{0.6\linewidth}
 \vspace{-0.15in}
 \renewcommand{\arraystretch}{1.1}
 \setlength{\tabcolsep}{2pt}
  \centering
  \begin{minipage}[b]{\linewidth}
        \caption{Performance comparison across different methods on VBench under \texttt{W6A6} quantization for \texttt{Wan} $1.3$B.}
        \vspace{-0.1in}
      \resizebox{\linewidth}{!}{
      \begin{tabu}[t!]{l|ccccc}
\toprule
\multirow{2}{*}{Method} & \multirow{2}{*}{\makecell{Imaging\\Quality}$\uparrow$} &  \multirow{2}{*}{\makecell{Aesthetic\\Quality}$\uparrow$} &  \multirow{2}{*}{\makecell{Dynamic \\Degree}$\uparrow$} & \multirow{2}{*}{\makecell{Scene\\Consistency}$\uparrow$} & \multirow{2}{*}{\makecell{Overall\\Consistency}$\uparrow$}\\
 & & & & & \\
\midrule
Full Prec. & 64.30 & 58.21 & 70.28 & 28.05 & 24.67 \\
\midrule
SVDQuant~\citep{li2025svdquant} & 62.05 & 54.37 & 71.08 & 23.25 & 24.56\\
LSQ~\citep{esser2020learnedstepsizequantization} & 63.20 & \textbf{57.83} & \underline{76.33} & 24.86 & 24.07\\
Q-DM~\citep{li2023qdm} & 62.89 & 56.24 & 74.64 & \underline{25.38} & \underline{25.12}\\
EfficientDM~\citep{he2024efficientdm}  & \underline{63.38} & 56.42 & 68.68 & 21.47 & 23.57\\
\rowcolor{mycolor!30}QVGen (Ours) & \textbf{64.27} & \underline{57.69} & \textbf{78.02} & \textbf{26.84} & \textbf{25.53}\\

 \bottomrule
\end{tabu}
}
        \label{tab:higher-bit}
    \end{minipage}
     \vspace{-0.3in}
\end{wraptable}
Besides $3$/$4$-bit settings, we also conduct experiments under \texttt{W6A6}. As shown in Tab.~\ref{tab:higher-bit}, our QVGen achieves full-precision comparable or even better performance than \texttt{BF16} models. Since these settings are not challenging enough for QAT baselines, which also demonstrate satisfactory performance, our QVGen only shows moderate improvements.

\section{Additional Fine-Grained Decay Strategies}\label{sec:more-decay}
In this section, we detail the ``Sparse'' and ``Res. Q.'' decay strategies (see Tab.~\ref{tab:decay}), which shrink $\Phi$ to $\varnothing$, as follows:
\begin{itemize}[leftmargin=*]
    \item \textit{\underline{``Sparse'' strategy.}} Inspired by pruning techniques~\citep{han2015learningweightsconnectionsefficient, han2016deepcompressioncompressingdeep}, at the start of QAT, we sparsify $\mW_\Phi$ with a sparse ratio of $a\%=50\%$. Specifically, we set the $a\%$ smallest-magnitude values in $\mW_\Phi$ to zero and freeze them in the whole training process. The quantized model is then trained with the remaining non-zero values in $\mW_\Phi$. We divide the training process into $6$ equal-length phases. In each subsequent phase, we set an additional $\frac{a}{2}\%$ of the remaining non-zero values in $\mW_\Phi$ to zero and update $a \leftarrow \frac{a}{2}$. The resulting cumulative sparse ratios across the $6$ phases are as follows: $50\%_{(\text{$1$-st phase})} \rightarrow 75\%_{(\text{$2$-nd phase})} \rightarrow \ldots \rightarrow 96.875\%_{(\text{$5$-th phase})} \rightarrow 100\%_{(\text{$6$-th phase})}$.
    \item \textit{\underline{``Res. Q'' strategy.}} Motivated by residual quantization~\citep{li2017performanceguaranteednetworkacceleration}, we first decompose $\mW_\Phi$ into a series of $4$-bit quantized residuals. At the beginning of QAT, we quantize $\mW_\Phi$ to its $4$-bit approximation $\gQ_4(\mW_\Phi)$, and then recursively quantize the quantization-induced residuals. Specifically, we define $\mE_1 = \mW_\Phi - \gQ_4(\mW_\Phi)$ and quantize it to $\gQ_4(\mE_1)$; the remaining residual $\mE_2 = \mE_1 - \gQ_4(\mE_1)$ is quantized to $\gQ_4(\mE_2)$; and finally, $\mE_3 = \mE_2 - \gQ_4(\mE_2)$ is quantized to $\gQ_4(\mE_3)$. This yields a 4-term additive decomposition of $\mW_\Phi$:
    \begin{equation}
    \begin{array}{ll}
    \mW_\Phi = \gQ_4(\mW_\Phi) + \gQ_4(\mE_1) + \gQ_4(\mE_2) + \gQ_4(\mE_3).
    \end{array}
    \end{equation}
    These  $4\times$$4$-bit quantized tensors\footnote{The use of these $4\times$$4$-bit tensors is designed to align with the original \texttt{BF16} format, as $4\times$$4$-bit data could theoretically match a $16$-bit representation.} are jointly trained with the rest of the quantized model. During QAT, we divide the training process into $4$ equal-length phases, and apply cosine annealing within each phase to progressively decay these components in the following order: $\gQ_4(\mE_3)_{(\text{$1$-st phase})} \rightarrow \gQ_4(\mE_2)_{(\text{$2$-nd phase})}\rightarrow \gQ_4(\mE_1)_{(\text{$3$-rd phase})} \rightarrow \gQ_4(\mW_\Phi)_{(\text{$4$-th phase})}$.
\end{itemize}

Compared with our \textit{rank-decay} strategy (with $r=32$), both alternative methods require storing multiple tensors\footnote{The ``Sparse'' strategy additionally requires storing binary masks to enforce sparsity.}, each with the same shape as the corresponding linear layer's weight matrix. In particular, the ``Res. Q.'' strategy incurs substantial memory overhead and necessitates CPU offloading to avoid out-of-memory (OOM) issues. Furthermore, the ``Sparse'' strategy introduces extra computational cost due to the matrix multiplication between the mask and $\mW_\Phi$. In practice, the ``Sparse'' and ``Res. Q.'' strategies consume $20.12$ and $28.78$ GPU days, respectively, whereas our proposed \textit{rank-decay} requires only $11.11$ GPU days (see Tab.~\ref{tab:hours}). More importantly, \textit{rank-decay} also achieves significantly better performance than both alternatives, as demonstrated in Tab.~\ref{tab:decay}.

\section{Training Loss Curves}\label{sec:training-loss}
\rebuttal{In this section, we present training loss curves across different methods and models. As shown in Fig.~\ref{fig:training-loss}, QVGen and our method in Sec.~\ref{sec:module} achieve faster and more stable convergence, which supports the effectiveness of the proposed approaches. Note that LSQ~\citep{esser2020learnedstepsizequantization} uses $\mathbb{E}_{\bx_0, \mathcal{C}, \tau}[\|\bepsilon-\bepsilon_\theta(\bx_\tau, \mathcal{C}, \tau)\|_F^2]$ with $\bx_\tau=\alpha_\tau\bx_0+\sigma_\tau\bepsilon$ as its training objective, instead of Eq.~(\ref{eq:loss}) used by the remaining distillation-based methods.}

\begin{figure}[!ht]
\vspace{-0.1in}
   \centering
    \setlength{\abovecaptionskip}{0.2cm}
   \begin{minipage}[b]{0.8\linewidth}
       \begin{minipage}[b]{0.49\linewidth}
            \centering
            \begin{subfigure}[tp!]{\textwidth}
            \centering
            \subcaption{CogVideoX-$2$B~\citep{yang2025cogvideoxtexttovideodiffusionmodels}}
            \includegraphics[width=\linewidth]{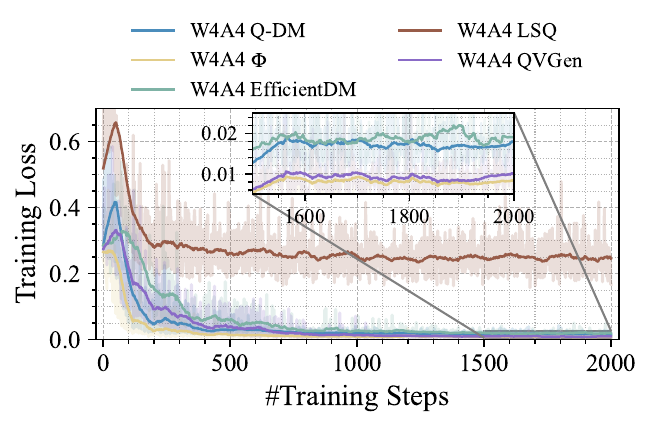}
            \end{subfigure}
       \end{minipage}\hfill
       \begin{minipage}[b]{0.49\linewidth}
            \centering
            \begin{subfigure}[tp!]{\linewidth}
            \centering
            \subcaption{\texttt{Wan} $1.3$B~\citep{wanteam2025wanopenadvancedlargescale}}
            \includegraphics[width=\linewidth]{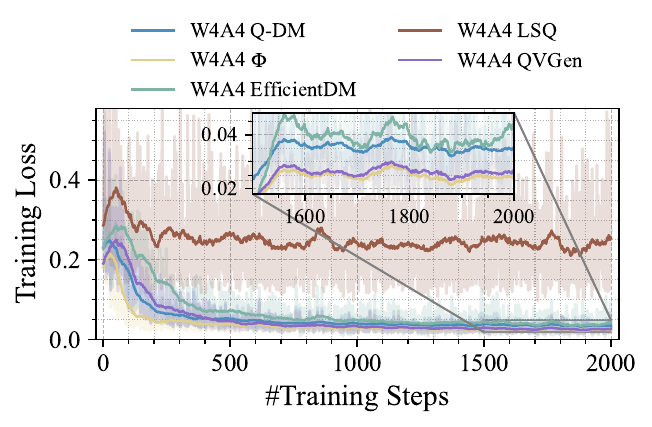}
            \end{subfigure}
       \end{minipage}
   \end{minipage}
   \vspace{-0.08in}
     \caption{Training loss \emph{vs.} \#steps across different video DMs and $4$-bit QAT methods. ``$\Phi$'' denotes our approach in Sec.~\ref{sec:module}}
    \label{fig:training-loss}
    \vspace{-0.1in}
\end{figure}

\section{Further Analyses of Gradient Norm in Video Generation QAT}\label{sec:ana-theore-video}

\subsection{Video Generation QAT \emph{vs.} Image Generation QAT}\label{sec:connect}
First, we conduct experiments to compare the gradient norm between image generation QAT and video generation QAT. In Tab.~\ref{tab:grad-comp}, we employ \texttt{W4A4} Q-DM~\citep{li2023qdm} to quantize the diffusion models under the same settings. We observe that reducing the gradient norm during QAT, ignored by previous research~\citep{li2023qdm, he2024efficientdm, esser2020learnedstepsizequantization}, is far more critical for video generation QAT than for image generation QAT. As Tab.~\ref{tab:grad-comp} shows, with a similar parameter count and the same QAT recipe, video diffusion reaches a significantly larger gradient norm than image diffusion. This leads to much more unstable optimization~\citep{xie2024overlookedpitfallsweightdecay} in training. We believe the phenomenon happens because video generation introduces complicated temporal modeling, which eventually makes quantization for video diffusion more challenging than image diffusion.
\begin{table}[!ht]\setlength{\tabcolsep}{1pt}
\vspace{-0.05in}
    \centering
    \renewcommand{\arraystretch}{1.1}
        \caption{Average gradient norm comparison. SD3-medium~\citep{esser2024scaling} is an advanced diffusion model for image generation. We train SD3-medium for $2K$ steps with $16K$ images from the LAION-$5$B dataset~\citep{schuhmann2022laion5bopenlargescaledataset}  on $8$ $H100$ GPUs.  ``Avg. $\|\vg_t\|_2$'' is the average of the gradient norm across training steps.}
        \vspace{-0.1in}
      \resizebox{0.33\linewidth}{!}{
      \begin{tabu}[t!]{l|cc}
\toprule
Model & SD3-medium & CogVideoX-2B \\
\midrule
Avg. $\|\vg_t\|_2$ & \textbf{0.2047} & 0.3283 \\
 \#Params. (B) & 2.0 & 2.0 \\
\bottomrule
\end{tabu}
}
        \label{tab:grad-comp}
     \vspace{-0.1in}
\end{table}

\subsection{Impact of Motion Dynamics on Gradient Norm}\label{sec:impact-video-gradient}
Here, we study how the motion dynamics of video generation affect the gradient norm. Specifically, using UniMatch~\citep{xu2023unifyingflowstereodepth}, we compute an optical‑flow score as a motion difference score for each clip and split the training data (Sec.~\ref{sec:implementation}) into high‑motion and low‑motion subsets, each with $8K$ videos. In Tab.~\ref{tab:motion}, the model trained on the low‑motion subset shows a much lower average gradient norm and a better Imaging Quality score, but its Dynamic Degree is lower than that of the model 
\begin{wraptable}{r}{0.6\linewidth}
 \vspace{-0.1in}
 \renewcommand{\arraystretch}{1.1}
 \setlength{\tabcolsep}{1pt}
  \centering
  \begin{minipage}[b]{\linewidth}
        \caption{Results between different training videos. We employ \texttt{W4A4} QVGen with the same configurations as those in Tab.~\ref{tab:compare-vbench} to quantize CogVideoX-$2$B~\citep{yang2025cogvideoxtexttovideodiffusionmodels}.}
        \vspace{-0.1in}
      \resizebox{\linewidth}{!}{
      \begin{tabu}[t!]{c|c|cc}
\toprule
$8K$ Training Videos &	Avg. $\|\vg_t\|_2$ &	Imaging Quality$\uparrow$ &	Dynamic Degree$\uparrow$ \\
\midrule
high-motion	& 0.1972&	57.47&	\textbf{68.21} \\
low-motion	&\textbf{0.1768}	&\textbf{60.63}	&62.54 \\
\rowcolor{mycolor!30}half high-motion $+$ half low-motion &	\underline{0.1821}&	\underline{60.12}&	\underline{67.08} \\
 \bottomrule
\end{tabu}
}
        \label{tab:motion}
    \end{minipage}
     \vspace{-0.3in}
\end{wraptable}
trained on the high‑motion subset. This pattern suggests that a high-motion training set makes QAT harder and less stable (\emph{i.e.}, higher gradient norm), lowering static quality but boosting motion quality. The reverse holds for low‑motion clips. To be noted, with a mixed dataset (containing low-motion $+$ high-motion) in the $4$-th row of Tab.~\ref{tab:motion}, the quantized model generalizes well in producing either high‑motion or low‑motion content. Therefore, although a low-motion dataset in training can cause slightly lower $\|\vg_t\|_2$ (\emph{i.e.}, better training convergence), it is necessary to properly add high-motion data to improve motion quality.

\subsection{Directly Regulate the Gradient Norm}\label{sec:regulate-grad}
\rebuttal{In this subsection, we study the effect of directly constraining the gradient norm during QAT. We use \texttt{torch.nn.utils.clip\_grad\_norm\_} to rescale the gradients. As shown in Tab.~\ref{tab:grad_clip}, reducing the clipping threshold from $1.0$ to $0.5$ improves performance, which supports the benefit of controlling gradient norms for video generation QAT. However, a threshold of $0.1$ causes clear performance degradation, likely because the quantized model is updated too weakly or aggressive gradient clipping disrupts normal QAT. This highlights the need for more principled ways to reduce the gradient norm.}

\begin{table}[!ht]\setlength{\tabcolsep}{1pt}
\vspace{-0.05in}
    \centering
    \renewcommand{\arraystretch}{1.1}
  \caption{\texttt{W4A4} results for \texttt{Wan} $1.3$B under different thresholds for gradient clipping. ``1.0'' corresponds to our baseline Q-DM~\citep{li2023qdm}. We use a clipping threshold of $1.0$ for all other experiments in the paper.}
        \vspace{-0.1in}
      \resizebox{0.45\linewidth}{!}{
      \begin{tabu}[t!]{c|ccccc}
\toprule
\multirow{2}{*}{\makecell{Grad.\\Clipping}} & \multirow{2}{*}{\makecell{Imaging\\Quality}$\uparrow$} &  \multirow{2}{*}{\makecell{Aesthetic\\Quality}$\uparrow$} &  \multirow{2}{*}{\makecell{Dynamic \\Degree}$\uparrow$} & \multirow{2}{*}{\makecell{Scene\\Consistency}$\uparrow$} & \multirow{2}{*}{\makecell{Overall\\Consistency}$\uparrow$}\\
 & & & & & \\
\midrule
1.0 & \underline{60.40} & \underline{52.50} & \textbf{76.67} & \textbf{13.28} & \underline{21.63} \\ 
0.5 & \textbf{60.58} & \textbf{52.95} & \underline{76.50} & \textbf{13.28} & \textbf{21.71} \\
0.1 & 56.68 & 51.06 & 71.00 & \underline{12.78} & 21.42\\
 \bottomrule
\end{tabu}
}
        \label{tab:grad_clip}
     \vspace{-0.1in}
\end{table}

\section{Combination with SVDQuant}\label{sec:svd_qvgen}
\rebuttal{In Tab.~\ref{tab:combine_svd}, we show that our method can be combined with the current SOTA PTQ method SVDQuant~\citep{li2025svdquant}. We first apply SVDQuant to obtain a weight-modified DM, the quantization parameters, and the low-rank matrices, which we reuse as $\Phi$. We then run QVGen, which progressively removes $\Phi$. This combination yields further gains, likely due to the strong initialization from SVDQuant. We also evaluate an option that updates only the quantization parameters and $\Phi$ within this combination. It achieves sizable improvements over SVDQuant alone, but it still falls short of QVGen when model weights are updated. This suggests that QAT, which trains model weights, is currently important for video generation quantization. We will continue to explore more efficient ways to quantize video DMs while preserving performance.}
\begin{table}[!ht]\setlength{\tabcolsep}{1pt}
\vspace{-0.05in}
    \centering
    \renewcommand{\arraystretch}{1.1}
  \caption{\texttt{W4A4} results of the combination with SVDQuant~\citep{li2025svdquant} for \texttt{Wan} 1.3B~\citep{wanteam2025wanopenadvancedlargescale}. ``$\heartsuit$'' denotes we freeze the weights of the DM and only finetune the quantization parameters and the introduced $\Phi$. ``$\clubsuit$'' means we employ a more fine-grained and performance-friendly quantization setting as in SVDQuant's paper (details can be found in Sec.~\ref{sec:more-imple}).}
        \vspace{-0.1in}
      \resizebox{0.7\linewidth}{!}{
      \begin{tabu}[t!]{l|ccccc}
\toprule
\multirow{2}{*}{Method} & \multirow{2}{*}{\makecell{Imaging\\Quality}$\uparrow$} &  \multirow{2}{*}{\makecell{Aesthetic\\Quality}$\uparrow$} &  \multirow{2}{*}{\makecell{Dynamic \\Degree}$\uparrow$} & \multirow{2}{*}{\makecell{Scene\\Consistency}$\uparrow$} & \multirow{2}{*}{\makecell{Overall\\Consistency}$\uparrow$}\\
 & & & & & \\
\midrule
Full Prec. & 64.30 & 58.21 & 70.28 & 28.05 & 24.67 \\
\midrule
SVDQuant$^\clubsuit$~\citep{li2025svdquant} & 57.57 &46.30 &72.22 & 12.73 &21.91  \\
\rowcolor{mycolor!30}QVGen & \underline{63.08} & \underline{54.67} &  \textbf{77.78} & \underline{15.32} & \underline{23.01} \\
\rowcolor{mycolor!30}QVGen \emph{w/} SVDQuant~\citep{li2025svdquant} & \textbf{63.64} & \textbf{56.23} & 77.42 & \textbf{17.65} & \textbf{23.89} \\
\rowcolor{mycolor!30}QVGen$^\heartsuit$ \emph{w/} SVDQuant~\citep{li2025svdquant} & 61.38 & 52.76 & 75.85 & 14.12 & 22.47 \\
 \bottomrule
\end{tabu}
}
        \label{tab:combine_svd}
     \vspace{-0.1in}
\end{table}

\section{Comparison with baselines on Additional Metrics}\label{sec:addition-metrics}
\begin{table}[!ht]\setlength{\tabcolsep}{1pt}
    \centering
    \renewcommand{\arraystretch}{1.1}
        \caption{Additional \texttt{W4A4} performance comparison across different quantization methods. We employ the same models as those in Tab.~\ref{tab:compare-vbench}.}
        \vspace{-0.1in}
      \resizebox{0.58\linewidth}{!}{
      \begin{tabu}[t!]{l|ccc|ccc}
\toprule
\multirow{2}{*}{Method} & \multicolumn{3}{c}{CogVideoX-$2$B} & \multicolumn{3}{c}{Wan $1.3$B}\\
\cmidrule(r){2-7}
 & PSNR$\uparrow$ & SSIM$\uparrow$ & LPIPS$\downarrow$& PSNR$\uparrow$ & SSIM$\uparrow$ & LPIPS$\downarrow$\\
\midrule
SVDQuant~\citep{li2025svdquant} &11.06&	0.3829&	0.6305& 10.14&	0.3595	&0.6907\\
LSQ~\citep{esser2020learnedstepsizequantization} &11.75&	0.4158&	\underline{0.6187} &\underline{11.65}	&\underline{0.4743}	&0.6235\\
Q-DM~\citep{li2023qdm} &\underline{12.07}&	0.4270&	0.6240 &11.22	&0.4657	&\underline{0.5942}\\
EfficientDM~\citep{he2024efficientdm}  &11.91&	\underline{0.4387}&	0.6220& 11.29	&0.3926	&0.6232\\
\rowcolor{mycolor!30}QVGen (Ours) &\textbf{16.74}&	\textbf{0.6085}	&\textbf{0.4127} &\textbf{15.94}	&\textbf{0.5782}	&\textbf{0.4887}\\

 \bottomrule
\end{tabu}
}
        \label{tab:similarity}
     \vspace{-0.1in}
\end{table}
We also evaluate the similarity between videos generated by different quantization methods and those generated by \texttt{BF16} models on VBench~\citep{huang2024vbench} captions. Specifically, we employ PSNR (Peak Signal-to-Noise Ratio), SSIM (Structural Similarity)~\citep{ssim}, and LPIPS (Learned Perceptual Image Patch Similarity)~\citep{zhang2018unreasonableeffectivenessdeepfeatures}. In Tab.~\ref{tab:similarity}, QVGen substantially outperforms the baselines on these metrics.

\section{Comparison with Baselines for Huge DMs}\label{sec:comp-huge}
\begin{wraptable}{r}{0.6\linewidth}
 \vspace{-0.2in}
 \renewcommand{\arraystretch}{1.1}
 \setlength{\tabcolsep}{1pt}
  \centering
  \begin{minipage}[b]{\linewidth}
        \caption{Performance comparison for huge DMs across different methods on VBench~\citep{huang2024vbench}. We employ \texttt{W4A4} CogVideo-X $5$B~\citep{yang2025cogvideoxtexttovideodiffusionmodels} here.}
        \vspace{-0.1in}
      \resizebox{\linewidth}{!}{
      \begin{tabu}[t!]{l|ccccc}
\toprule
\multirow{2}{*}{Method} & \multirow{2}{*}{\makecell{Imaging\\Quality}$\uparrow$} &  \multirow{2}{*}{\makecell{Aesthetic\\Quality}$\uparrow$} &  \multirow{2}{*}{\makecell{Dynamic \\Degree}$\uparrow$} & \multirow{2}{*}{\makecell{Scene\\Consistency}$\uparrow$} & \multirow{2}{*}{\makecell{Overall\\Consistency}$\uparrow$}\\
 & & & & & \\
\midrule
Full Prec. & 61.15 & \underline{54.06} & \underline{74.24} & 13.86 & \underline{22.52} \\
\midrule
Q-DM~\citep{li2023qdm} & \underline{61.72} & 54.01 & 72.41 & \underline{14.17} & 22.31\\
EfficientDM~\citep{he2024efficientdm} & \underline{61.72} & 54.01 & 72.41 & \underline{14.17} & 22.31\\
\rowcolor{mycolor!30} QVGen (Ours) & \textbf{63.08} & \textbf{54.67} & \textbf{77.78} & \textbf{15.32} & \textbf{23.01}\\
 \bottomrule
\end{tabu}
}
        \label{tab:huge_comp}
    \end{minipage}
     \vspace{-0.2in}
\end{wraptable}
We include a comparison on VBench for large-scale CogVideoX‑1.5 $5$B in Tab.~\ref{tab:huge_comp}. QVGen again surpasses all baselines, just as it does on smaller models, underscoring its strength across a wide range of model sizes. Limited resources currently prevent us from adding more baselines for these large-scale models, but we plan to do so in future work.

\section{More Ablation Studies}\label{sec:add-ablation}

\subsection{Robustness of \textit{Rank-Decay} Schedule across Different Annealing Functions}\label{sec:rob}
In this section, we test $5$ different annealing functions for $u$. The results in Tab.~\ref{tab:schedule} reveal that all the functions yield comparable performance, highlighting the robustness of our approach.
\begin{table}[!ht]\setlength{\tabcolsep}{1pt}
\vspace{-0.1in}
    \centering
    \renewcommand{\arraystretch}{1.1}
        \caption{Results of different annealing factors $u$. All of them decay from $1$ to $0$. We employ ``Cosine'' in this work.}
        \vspace{-0.1in}
      \resizebox{0.72\linewidth}{!}{
      \begin{tabu}[t!]{c|cccccccc}
\toprule
\multirow{2}{*}{$\quad u\quad$} & \multirow{2}{*}{\makecell{Imaging\\Quality}$\uparrow$} &  \multirow{2}{*}{\makecell{Aesthetic\\Quality}$\uparrow$} & \multirow{2}{*}{\makecell{Motion\\Smoothness}$\uparrow$} & \multirow{2}{*}{\makecell{Dynamic \\Degree}$\uparrow$} & \multirow{2}{*}{\makecell{Background\\Consistency}$\uparrow$}  & \multirow{2}{*}{\makecell{Subject \\Consistency}$\uparrow$} & \multirow{2}{*}{\makecell{Scene\\Consistency}$\uparrow$} & \multirow{2}{*}{\makecell{Overall\\Consistency}$\uparrow$}\\
 & & & & & & \\
\midrule
\rowcolor{mycolor!30}Cosine & 63.08 & \textbf{54.67} & 98.25 &\underline{77.78} & \underline{94.08} & 92.57 & \textbf{15.32} & \textbf{23.01}\\
Logarithmic & \textbf{63.15} & 54.46 & 98.02 & 77.52 & 93.98 & \textbf{92.59} & 14.99 & 22.87 \\
Exponential & 63.04 & 54.48 & 97.88 & \textbf{77.81} & 93.96 & 92.56 & \textbf{15.32} & 22.66\\
Square & 63.02 & 54.59 & \textbf{98.41} & 77.24 & \textbf{94.12}& 92.57 & 15.18 & \underline{22.94}\\
Linear & \underline{63.10} & \underline{54.63} & \underline{98.31} & 77.44 & 94.06 & \underline{92.58} & \underline{15.24} & 22.91\\
 \bottomrule
\end{tabu}
}
        \label{tab:schedule}
        \vspace{-0.1in}
\end{table}

\subsection{Initialization Methods for Auxiliary Modules \texorpdfstring{$\Phi$}{phi}}\label{sec:init-phi}
Besides employing $\mW - \gQ_b(\mW)$ to initialize $\mW_\Phi$, we provide an alternative initialization approach that considers both weight and activation effects. Specifically, we train each $\mathbf{W}_\Phi$ for $200$ iterations to minimize the quantization error of its corresponding linear layer's output (\emph{i.e.}, $\arg\,\min_{\mathbf{W}_{\Phi}}\|\mathbf{Y}-\hat{\mathbf{Y}}\|^2_F$). The resulting layer-wise trained $\mathbf{W}_\Phi$ is then used to initialize $\Phi$. As shown in Tab.~\ref{tab:init_phi}, both initialization strategies yield similar performance. Therefore, we believe that QVGen is not sensitive to such choices of initialization. \rebuttal{Moreover, we also consider two deliberately suboptimal initialization approaches: zero initialization (\emph{i.e.}, ``$\mathbf{0}$'') and initialization with parameters randomly sampled from a normal distribution (\emph{i.e.}, ``Random''). The ``$\mathbf{0}$'' scheme leads to a slight performance degradation, while ``Random'' causes a more noticeable performance drop. We attribute this to the fact that ``$\mathbf{0}$'' does not compensate for quantization errors, whereas ``Random'' further injects noise into the model.}
\begin{table}[!ht]\setlength{\tabcolsep}{1pt}
    \centering
    \vspace{-0.1in}
    \renewcommand{\arraystretch}{1.1}
        \caption{\texttt{W4A4} results of different initialization strategies for $\mW_\Phi$.}
        \vspace{-0.1in}
      \resizebox{0.5\linewidth}{!}{
      \begin{tabu}[t!]{c|ccccc}
\toprule
\multirow{2}{*}{\makecell{Init.\\Strategy}} & \multirow{2}{*}{\makecell{Imaging\\Quality}$\uparrow$} &  \multirow{2}{*}{\makecell{Aesthetic\\Quality}$\uparrow$} &  \multirow{2}{*}{\makecell{Dynamic \\Degree}$\uparrow$} & \multirow{2}{*}{\makecell{Scene\\Consistency}$\uparrow$} & \multirow{2}{*}{\makecell{Overall\\Consistency}$\uparrow$}\\
 & & & & & \\
 \midrule
\multicolumn{6}{c}{\cellcolor[gray]{0.92}CogVideoX-$2$B ($\texttt{CFG}=6.0, 480p, \texttt{fps}=8$)} \\
\midrule
\rowcolor{mycolor!30}$\mW-\gQ_b(\mW)$ & \textbf{60.16} &	\underline{54.61}	&\textbf{67.22}&	\textbf{31.42}&	\underline{24.61} \\
Layer-wise Train. & \underline{59.97}	&\textbf{54.84}	&\underline{66.71}	&31.14	&\textbf{25.02}\\
$\mathbf{0}$ & 59.86 & 54.47 & 65.59 & \underline{31.32} & 24.59 \\
Random & 49.42 & 37.68 & 26.57 & 6.24 & 11.68 \\
\midrule

\multicolumn{6}{c}{\cellcolor[gray]{0.92}\texttt{Wan} $1.3$B ($\texttt{CFG}=5.0, 480p, \texttt{fps}=16$)} \\

\midrule
\rowcolor{mycolor!30}$\mW-\gQ_b(\mW)$ & \textbf{63.08}	& \textbf{54.67}	& \textbf{77.78}	& \underline{15.32}	& \textbf{23.01} \\
Layer-wise Train. & \underline{63.23}	&\textbf{54.67}	& 77.56	&\textbf{15.38} & \underline{23.00}\\
$\mathbf{0}$ & 62.80 & \underline{54.59} & \underline{77.69} & 15.28 & 22.98 \\
Random & 54.41 & 44.14 & 34.44 & 3.13 & 10.17 \\
\bottomrule
\end{tabu}
}
        \label{tab:init_phi}
        \vspace{-0.1in}
\end{table}

\subsection{Complete Results of Tables in Ablation Studies}\label{sec:complete-ablation}
In Tabs.~\ref{tab:components} to \ref{tab:decay}, we present the complete ablation results across all $8$ dimensions on VBench~\citep{huang2024vbench}, corresponding to the incomplete versions shown in the ablation study of the main text. These results are consistent with the analyses provided in the main text.

\begin{table}[!ht]\setlength{\tabcolsep}{1pt}
    \centering
    \renewcommand{\arraystretch}{1.1}
        \caption{Complete ablation results of each component. ``Naive'' denotes naive QAT in a KD-based manner. ``$-$\textit{decay}'' denotes the setting where $\mW_\Phi = \mL\mR$ is initialized with $r=32$ but not eliminated during QAT. The comparable performance between ``$-$Decay'' and ``$+\Phi$'' validates that a low-rank setting with $r < d$ (as mentioned in the main text) is sufficient. Furthermore, the negligible performance loss of ``$+$Rank'' compared to ``$-$Decay'' confirms the effectiveness of our proposed decay strategy.}
        \vspace{-0.1in}
      \resizebox{0.7\linewidth}{!}{
      \begin{tabu}[t!]{l|cccccccc}
\toprule
\multirow{2}{*}{Method} & \multirow{2}{*}{\makecell{Imaging\\Quality}$\uparrow$} &  \multirow{2}{*}{\makecell{Aesthetic\\Quality}$\uparrow$} & \multirow{2}{*}{\makecell{Motion\\Smoothness}$\uparrow$} & \multirow{2}{*}{\makecell{Dynamic \\Degree}$\uparrow$} & \multirow{2}{*}{\makecell{Background\\Consistency}$\uparrow$}  & \multirow{2}{*}{\makecell{Subject \\Consistency}$\uparrow$} & \multirow{2}{*}{\makecell{Scene\\Consistency}$\uparrow$} & \multirow{2}{*}{\makecell{Overall\\Consistency}$\uparrow$}\\
 & & & & & & \\
 \midrule
Naive & 60.40 & 52.50 & 97.22 & 76.67 & 93.37 & 89.26 & 13.28 & 21.63\\
$+\Phi$ & \textbf{63.41} & \textbf{54.75} & \textbf{98.40} & \textbf{77.89} & \textbf{94.36}& \textbf{93.29} & \textbf{15.51} & \underline{22.98}\\
\rowcolor{mycolor!30}$+$Rank & \underline{63.08} & \underline{54.67} & \underline{98.25} & \underline{77.78} & \underline{94.08} & \underline{92.57} & \underline{15.32} & \textbf{23.01}\\
\midrule
\rowcolor{mygreen!30}$-$Decay& 63.32 & 54.64 & 98.34 & 77.79 & 94.15 & 92.61 & 15.40 & 23.03\\
 \bottomrule
\end{tabu}
}
        \label{tab:components}
\end{table}

\begin{table}[!ht]\setlength{\tabcolsep}{1pt}
    \centering
    \renewcommand{\arraystretch}{1.1}
        \caption{Complete results of different shrinking ratios $\lambda$ for each decay phase. $\lambda=1$ means directly decaying the entire $\mW_\Phi$.}
        \vspace{-0.1in}
      \resizebox{0.7\linewidth}{!}{
      \begin{tabu}[t!]{c|cccccccc}
\toprule
\multirow{2}{*}{$\quad\quad\lambda\quad\quad$} & \multirow{2}{*}{\makecell{Imaging\\Quality}$\uparrow$} &  \multirow{2}{*}{\makecell{Aesthetic\\Quality}$\uparrow$} & \multirow{2}{*}{\makecell{Motion\\Smoothness}$\uparrow$} & \multirow{2}{*}{\makecell{Dynamic \\Degree}$\uparrow$} & \multirow{2}{*}{\makecell{Background\\Consistency}$\uparrow$}  & \multirow{2}{*}{\makecell{Subject \\Consistency}$\uparrow$} & \multirow{2}{*}{\makecell{Scene\\Consistency}$\uparrow$} & \multirow{2}{*}{\makecell{Overall\\Consistency}$\uparrow$}\\
 & & & & & & \\
\midrule
$1/4$ & \underline{63.02} & 54.23 & 97.89 & 76.84 & \underline{94.02} & \underline{92.13} & \underline{15.18} & 22.85 \\
\rowcolor{mycolor!30}$1/2$ & \textbf{63.08} & \textbf{54.67} & \textbf{98.25} & \underline{77.78} & \textbf{94.08} & \textbf{92.57} & \textbf{15.32} & \textbf{23.01}\\
$3/4$ & 62.89 & \underline{54.62} & \underline{98.15} & \textbf{77.91} & 93.89 & 91.63 & 15.04 & \underline{22.89} \\
$1$ & 61.05 & 52.48 & 97.31 & 76.48 & 93.42 & 90.04 & 13.82 & 21.81\\
 \bottomrule
\end{tabu}
}
        \label{tab:reg}
        \vspace{-0.1in}
\end{table}

\begin{table}[!ht]\setlength{\tabcolsep}{1pt}
    \centering
    \renewcommand{\arraystretch}{1.1}
        \caption{Complete results of different initial ranks $r$. $r=0$ represents ``Naive'' in Tab.~\ref{tab:components}.}
        \vspace{-0.1in}
      \resizebox{0.7\linewidth}{!}{
      \begin{tabu}[t!]{c|cccccccc}
\toprule
\multirow{2}{*}{$\quad\quad r\quad\quad$} & \multirow{2}{*}{\makecell{Imaging\\Quality}$\uparrow$} &  \multirow{2}{*}{\makecell{Aesthetic\\Quality}$\uparrow$} & \multirow{2}{*}{\makecell{Motion\\Smoothness}$\uparrow$} & \multirow{2}{*}{\makecell{Dynamic \\Degree}$\uparrow$} & \multirow{2}{*}{\makecell{Background\\Consistency}$\uparrow$}  & \multirow{2}{*}{\makecell{Subject \\Consistency}$\uparrow$} & \multirow{2}{*}{\makecell{Scene\\Consistency}$\uparrow$} & \multirow{2}{*}{\makecell{Overall\\Consistency}$\uparrow$}\\
 & & & & & & \\
 \midrule
 0 & 60.40 &52.50 &97.22 &76.67 &93.37 &89.26 &13.28 &21.63  \\
 8 & 62.71 & 54.47 & 97.95 & 74.62 & 93.76 & 91.05 & 14.42 & 22.81\\
 16 & 62.99 & \underline{54.62} & \textbf{98.31} & 76.58 & 93.92 & \underline{91.82} & 14.84 & \underline{23.00}\\
 \rowcolor{mycolor!30}32 & \textbf{63.08} & \textbf{54.67} &\underline{98.25} &\textbf{77.78} &\textbf{94.08} & \textbf{92.57} & \underline{15.32} & \textbf{23.01}\\
 64 & \underline{63.06} & 54.30 & 98.18 & \underline{76.74} & \underline{94.01} & 91.49 & \textbf{15.40} & 22.92\\
 \bottomrule
\end{tabu}
}
        \label{tab:rank}
        \vspace{-0.1in}
\end{table}

\begin{table}[!ht]\setlength{\tabcolsep}{1pt}
    \centering
    \renewcommand{\arraystretch}{1.1}
        \caption{Complete results of different decay strategies. ``Rank'' denotes the \textit{rank-decay} strategy in this work. To be noted, the ``Sparse'' and ``Res. Q.'' strategies incur substantially $1.81\times$ and $2.60\times$ GPU days for training compared with the ``Rank'' approach, respectively (see Sec.~\ref{sec:more-decay}).}
        \vspace{-0.1in}
      \resizebox{0.7\linewidth}{!}{
      \begin{tabu}[t!]{c|cccccccc}
\toprule
\multirow{2}{*}{\makecell{Decay \\Strategy}} & \multirow{2}{*}{\makecell{Imaging\\Quality}$\uparrow$} &  \multirow{2}{*}{\makecell{Aesthetic\\Quality}$\uparrow$} & \multirow{2}{*}{\makecell{Motion\\Smoothness}$\uparrow$} & \multirow{2}{*}{\makecell{Dynamic \\Degree}$\uparrow$} & \multirow{2}{*}{\makecell{Background\\Consistency}$\uparrow$}  & \multirow{2}{*}{\makecell{Subject \\Consistency}$\uparrow$} & \multirow{2}{*}{\makecell{Scene\\Consistency}$\uparrow$} & \multirow{2}{*}{\makecell{Overall\\Consistency}$\uparrow$}\\
 & & & & & & \\
 \midrule
 Sparse  & 61.15 & \underline{54.06} & 97.45 & \underline{74.24} & 93.32 & 90.63 & 13.86 & \underline{22.52}\\
 Res. Q. & \underline{61.72} & 54.01 & \underline{97.62} & 72.41 & \underline{93.46} & \underline{91.24} & \underline{14.17} & 22.31\\
 \rowcolor{mycolor!30} Rank  & \textbf{63.08} & \textbf{54.67} &\textbf{98.25} &\textbf{77.78} &\textbf{94.08} &\textbf{92.57} &\textbf{15.32} & \textbf{23.01}  \\

\bottomrule
\end{tabu}
}
        \label{tab:decay}
\end{table}

\subsection{Additional Rank-Based Regularization \texorpdfstring{$\vgamma$}{gamma}}\label{sec:decay-comp}
In this section, we conduct experiments to validate the superiority of the proposed rank-based regularization compared with additional rank-based regularization. As shown in Tab.~\ref{tab:sing}, the setting ``$\text{concat}([1]_{n\times (1-\lambda)r}, [u]_{n\times\lambda r})$'' (adopted in this work) outperforms both ``Random$\times$3'' and ``$\text{concat}([u]_{n\times\lambda r}, [1]_{n\times (1-\lambda)r})$'' by a large margin. Moreover, the results in the table confirm our idea that removing $\Phi$ by repeatedly decaying components of $\mW_\Phi$ associated with small singular values maintains performance. This also reflects that components associated with small singular values contribute little~\citep{zhang2015acceleratingdeepconvolutionalnetworks, yang2020learninglowrankdeepneural} under the setting of this paper (\emph{i.e.}, jointly training $\Phi$ and the quantized video DM during QAT).
\begin{table}[!ht]\setlength{\tabcolsep}{1pt}
    \centering
    \renewcommand{\arraystretch}{1.1}
        \caption{Results of different $\vgamma$. ``$\text{concat}([u]_{n\times\lambda r}, [1]_{n\times (1-\lambda)r})$'' denotes the setting where components of $\mW_\Phi$ associated with the largest singular values are decayed in each phase. ``Random$\times3$'' represents the average performance over $3$ experiments, each employing a different randomly generated $\vgamma$ per decay phase. Specifically, a random index set $\mathcal{S}_{u} = \{b_1, b_2, \ldots, b_{\lambda r}\}$ is sampled such that $b_i \in \{1, 2, \ldots, r\}$ and $\forall i\neq j, b_i \neq b_j$. We then define the complementary index set as $\mathcal{S}_{1} = \{1, 2, \ldots, r\} \setminus \mathcal{S}_{u}$. The decay matrix $\vgamma \in \mathbb{R}^{n \times r}$ is constructed by setting $\vgamma_{:, \mathcal{S}_{u}} = [u]_{n \times \lambda r}$ and $\vgamma_{:, \mathcal{S}_{1}} = [1]_{n \times (1 - \lambda)r}$. The \darkgreen{green} subscripts indicate the standard deviations.}
        \vspace{-0.1in}
      \resizebox{0.85\linewidth}{!}{
      \begin{tabu}[t!]{c|cccccccc}
\toprule
\multirow{2}{*}{$\quad\quad\vgamma\quad\quad$} & \multirow{2}{*}{\makecell{Imaging\\Quality}$\uparrow$} &  \multirow{2}{*}{\makecell{Aesthetic\\Quality}$\uparrow$} & \multirow{2}{*}{\makecell{Motion\\Smoothness}$\uparrow$} & \multirow{2}{*}{\makecell{Dynamic \\Degree}$\uparrow$} & \multirow{2}{*}{\makecell{Background\\Consistency}$\uparrow$}  & \multirow{2}{*}{\makecell{Subject \\Consistency}$\uparrow$} & \multirow{2}{*}{\makecell{Scene\\Consistency}$\uparrow$} & \multirow{2}{*}{\makecell{Overall\\Consistency}$\uparrow$}\\
 & & & & & & \\
\midrule
\rowcolor{mycolor!30}$\text{concat}([1]_{n\times(1-\lambda)r}, [u]_{n\times\lambda r})$& \textbf{63.08} & \textbf{54.67} & \textbf{98.25} & \textbf{77.78} & \textbf{94.08} & \textbf{92.57} & \textbf{15.32} & \textbf{23.01}\\
Random $\times 3$ & \underline{60.96}$_{\mathbf{\darkgreen{\pm0.41}}}$ & \underline{53.13}$_{\mathbf{\darkgreen{\pm0.67}}}$ & \underline{97.40}$_{\mathbf{\darkgreen{\pm0.24}}}$ & \underline{76.45}$_{\mathbf{\darkgreen{\pm0.08}}}$ & \underline{93.40}$_{\mathbf{\darkgreen{\pm0.03}}}$ & \underline{90.76}$_{\mathbf{\darkgreen{\pm0.30}}}$ & \underline{13.77}$_{\mathbf{\darkgreen{\pm0.22}}}$ & 21.92$_{\mathbf{\darkgreen{\pm0.51}}}$\\
$\text{concat}([u]_{n\times\lambda r}, [1]_{n\times(1-\lambda)r})$& 60.56 & 52.61 & 97.28 & 75.36 & 93.35 & 89.47 & 13.46 & \underline{22.24}\\
 \bottomrule
\end{tabu}
}
        \label{tab:sing}
        \vspace{-0.1in}
\end{table}

\subsection{Duration of Each Decay for \texorpdfstring{$\vgamma$}{gamma}}\label{sec:duration}
\rebuttal{Here, we discuss the situation if $\Phi $’s rank is diminished too rapidly relative to the schedule $\mathbf{\gamma}$. In this case, we believe that keeping the redundant parts (that is, the rank-diminished components in $\Phi$) during QAT does not harm performance and can even lead to a small improvement. This is mainly because the slow change of $u$ in $\mathbf{\gamma}$ ($1 \rightarrow 0$) increases the training time by making each decay phase longer. The results in the following table support this intuition.}

\begin{table}[!ht]\setlength{\tabcolsep}{1pt}
    \centering
    \renewcommand{\arraystretch}{1.1}
        \caption{\texttt{W4A4} quantization results across different durations of each decay phase for Wan $1.3$B. We control the duration to determine the changing speed of the schedule $\mathbf{\gamma}$. When applying a long duration, we suggest $\Phi $’s rank is diminished (as an intrinsic behavior) rapidly relative to the schedule $\mathbf{\gamma}$.}
        \vspace{-0.1in}
      \resizebox{0.45\linewidth}{!}{
      \begin{tabu}[t!]{c|ccccc}
\toprule
\multirow{2}{*}{\makecell{Duration\\ (Epoch)}} & \multirow{2}{*}{\makecell{Imaging\\Quality}$\uparrow$} &  \multirow{2}{*}{\makecell{Aesthetic\\Quality}$\uparrow$} & \multirow{2}{*}{\makecell{Dynamic \\Degree}$\uparrow$} & \multirow{2}{*}{\makecell{Scene\\Consistency}$\uparrow$} & \multirow{2}{*}{\makecell{Overall\\Consistency}$\uparrow$}\\
& & & & \\
 \midrule
 \rowcolor{mycolor!30} $3/4$  & \underline{63.08} & \underline{54.67} &\underline{77.78} &\underline{15.32} & \underline{23.01}  \\
 $1$ & 63.07	& \textbf{55.09}	& 77.54	& 15.29	&\textbf{23.03}\\
 $3/2$ & \textbf{63.11}& 54.58 &	\textbf{78.15}	& \textbf{15.36} &	\underline{23.01}\\

\bottomrule
\end{tabu}
}
        \label{tab:duration}
        \vspace{-0.1in}
\end{table}

\subsection{Weight-Only Quantization \emph{vs.} Activation-Only Quantization}\label{sec:wq_vs_aq}
\rebuttal{As demonstrated in Fig.~\ref{fig:wq_vs_aq}, activation-only quantization causes severe degradation in video generation quality compared with weight-only quantization under the $4$-bit setting. This indicates that the activations of video generation models are much harder to quantize than the weights. Similar observations have also been reported in previous studies~\citep{zhao2025viditq,tian2024qvdposttrainingquantizationvideo}.}

\begin{figure}[!ht]
\vspace{-0.1in}
   \centering
   \setlength{\abovecaptionskip}{0.2cm}
   \begin{minipage}[b]{\linewidth}
       \begin{minipage}[b]{0.461\linewidth}
            \centering
            \begin{subfigure}[tp!]{\textwidth}
            \centering
            \subcaption{\texttt{BF16}}
            \includegraphics[width=\linewidth]{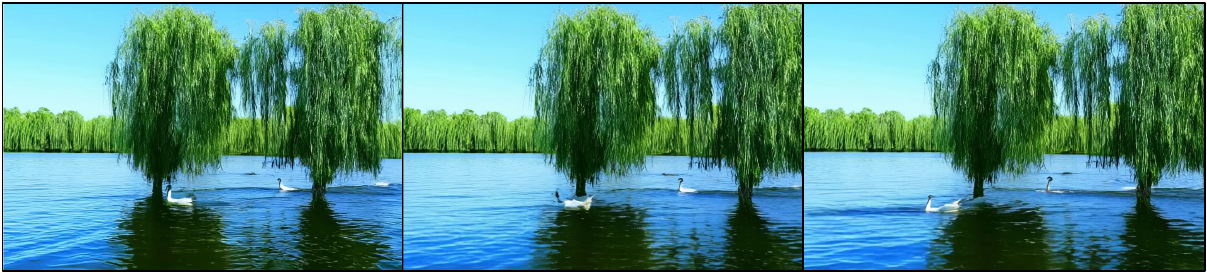}
            \end{subfigure}
       \end{minipage}\hfill
       \begin{minipage}[b]{0.535\linewidth}
            \centering
            \begin{subfigure}[tp!]{\linewidth}
            \centering
            \subcaption{\texttt{BF16}}
            \includegraphics[width=\linewidth]{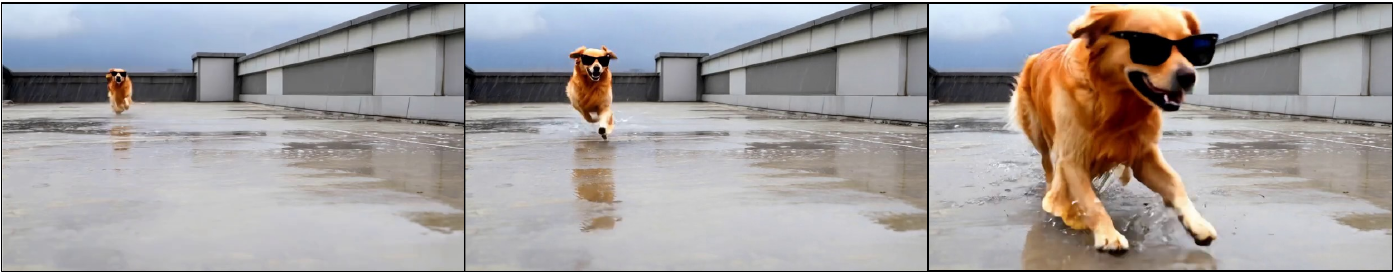}
            \end{subfigure}
       \end{minipage}

       \begin{minipage}[b]{0.461\linewidth}
            \centering
            \begin{subfigure}[tp!]{\textwidth}
            \centering
            \subcaption{\texttt{W4A16}}
            \includegraphics[width=\linewidth]{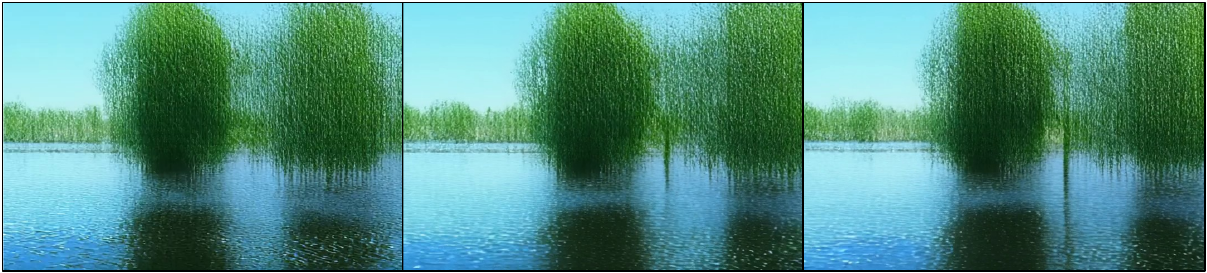}
            \end{subfigure}
       \end{minipage}\hfill
       \begin{minipage}[b]{0.535\linewidth}
            \centering
            \begin{subfigure}[tp!]{\linewidth}
            \centering
            \subcaption{\texttt{W4A16}}
            \includegraphics[width=\linewidth]{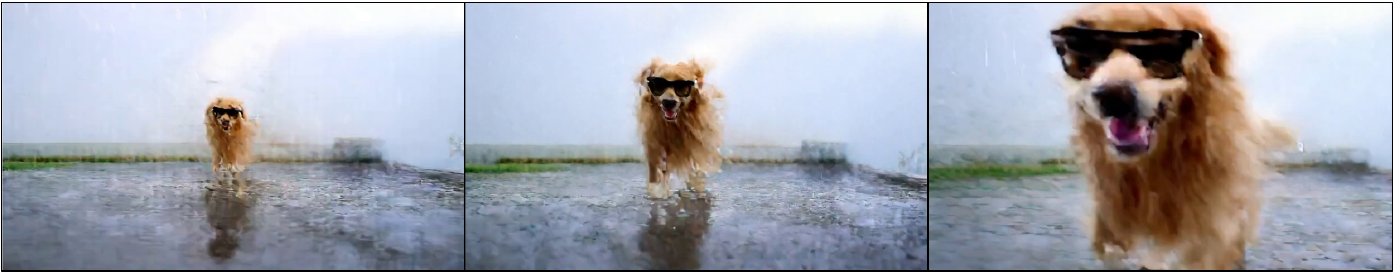}
            \end{subfigure}
       \end{minipage}

        \begin{minipage}[b]{0.461\linewidth}
            \centering
            \begin{subfigure}[tp!]{\textwidth}
            \centering
            \subcaption{\texttt{W16A4}}
            \includegraphics[width=\linewidth]{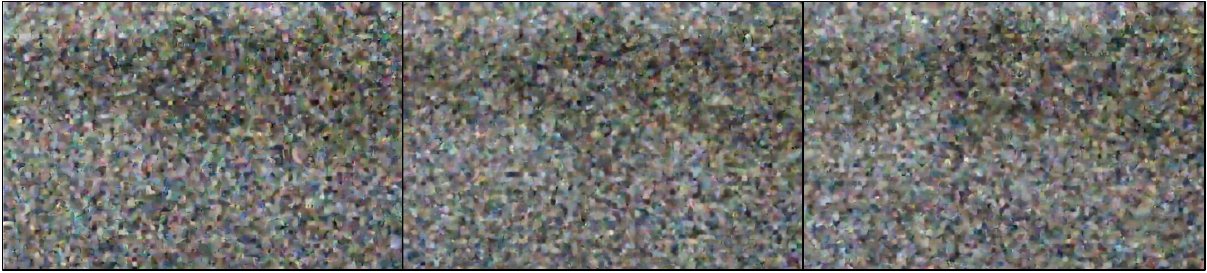}
            \end{subfigure}
       \end{minipage}\hfill
       \begin{minipage}[b]{0.535\linewidth}
            \centering
            \begin{subfigure}[tp!]{\linewidth}
            \centering
            \subcaption{\texttt{W16A4}}
            \includegraphics[width=\linewidth]{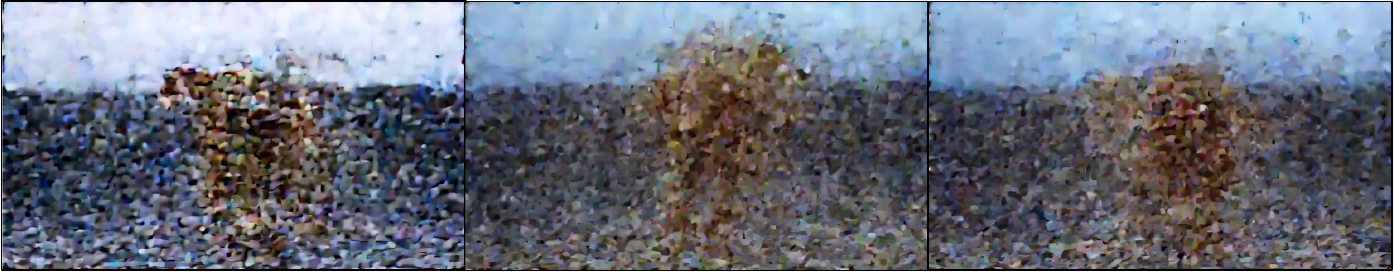}
            \end{subfigure}
       \end{minipage}
   \end{minipage}
   \caption{\label{fig:wq_vs_aq}Performacne for Weight-only quantization \emph{vs.} activation-only quantization. We employ Min-max~\citep{nagel2021whitepaperneuralnetwork} \textit{per-channel} weight quantization and \textit{per-token} activation quantization for (\textit{Left}) CogVideoX-$2$B~\citep{yang2025cogvideoxtexttovideodiffusionmodels} and (\textit{Right}) \texttt{Wan} $1.3$B~\citep{wanteam2025wanopenadvancedlargescale}.}
\end{figure}

\section{Results for Image Generation}\label{sec:image-gen}
Our theoretical insight is broadly applicable to QAT. Additionally, the proposed QAT strategy is independent of model architecture and data type, so it can be transferred to other domains. However, further studies are needed to confirm whether the same strategy and insight can deliver similar significant improvements in other tasks. To be specific, the large gains we report for video generation rely on the observed behavior of the gradient norm and the singular values during QAT for video diffusion. Here, we report the initial results for image generation in Tab.~\ref{tab:image-gen}, which show that our approach can also achieve non-negligible performance enhancement without additional inference overhead. We will explore more about this in the future.
\begin{table}[!ht]\setlength{\tabcolsep}{1pt}
    \centering
    \vspace{-0.1in}
    \renewcommand{\arraystretch}{1.1}
        \caption{\texttt{W4A4} quantization results for SD3-medium~\citep{esser2024scalingrectifiedflowtransformers}. We evaluate FID~\citep{heusel2018ganstrainedtimescaleupdate} and CLIP score~\citep{hessel2022clipscorereferencefreeevaluationmetric} on the MJHQ-$30$K~\citep{li2024playgroundv25insightsenhancing} dataset and employ GenEval~\citep{ghosh2023genevalobjectfocusedframeworkevaluating} to further measure text-image alignment.}
        \vspace{-0.1in}
      \resizebox{0.4\linewidth}{!}{
      \begin{tabu}[t!]{l|ccc}
\toprule
Method & FID$\downarrow$ & CLIP Score$\uparrow$ & GenEval$\uparrow$ \\
\midrule
Full Prec. & 11.92 &27.83&	0.62 \\
\midrule
LSQ~\citep{esser2020learnedstepsizequantization} & 14.87 & 27.72 & 0.56 \\
EfficientDM~\citep{he2024efficientdm}  & 15.23 & 27.11 & \textbf{0.61} \\
Q-DM~\citep{li2023qdm} &\underline{13.82}	&\underline{27.68}&	\underline{0.59}\\
\rowcolor{mycolor!30}QVGen (Ours) & \textbf{12.24}	&\textbf{27.85}	&\textbf{0.61}\\
\bottomrule
\end{tabu}
}
        \label{tab:image-gen}
        \vspace{-0.1in}
\end{table}

\section{Profiling and Projected Gains from Kernel Fusion}\label{sec:kernel_fusion}
\begin{table}[!ht]\setlength{\tabcolsep}{2pt}
    \centering
    \vspace{-0.05in}
    \renewcommand{\arraystretch}{1.1}
        \caption{Latency breakdown (ms) and \texttt{INT4} GEMM throughput on $A800$. For ease of analysis, we adopt the shapes from \texttt{Wan} $1.3$B. A DiT block for the model is composed of Self-Attention, Cross-Attention, and FFN. ``\texttt{q1/k1/v1/o1}'' and ``\texttt{q2/k2/v2/o2}'' denote projections in Self-Attention and Cross-Attention, respectively.}
        \vspace{-0.1in}
      \resizebox{0.9\linewidth}{!}{
      \begin{tabu}{l|c|ccc|c|ccc|c}
\toprule
\multirow{2}{*}{Op.} & \multirow{2}{*}{$(M, N, K)$} & \multirow{2}{*}{\texttt{Quant}} & \multirow{2}{*}{\texttt{INT4GEMM}} & \multirow{2}{*}{\texttt{DeQuant}} & \multirow{2}{*}{Total} & \multirow{2}{*}{\makecell{\texttt{Quant}\\(\%)}} & \multirow{2}{*}{\makecell{\texttt{INT4GEMM}\\(\%)}} & \multirow{2}{*}{\makecell{\texttt{DeQuant}\\(\%)}} & \multirow{2}{*}{TOPS} \\
&&&&&&&&& \\
\midrule
\midrule
\texttt{q1/k1/v1/o1/q2/o2} & $(32760, 1536, 1536)$ & 0.041	&0.186&	0.114&	0.341	&12.0	&54.5&	33.5	&831.08 \\
\texttt{k2/v2}                 & $(512, 1536, 1536)$   & 0.001	&0.007&0.002&	0.010&	11.2&	72.3	&16.5&	345.13 \\
\texttt{up\_proj}              & $(32760, 8960, 1536)$ & 0.041	&1.246&	0.661&	1.948&	2.10&	64.0&	33.9&	721.76 \\
\texttt{down\_proj}            & $(32760, 1536, 8960)$ & 0.232	&1.031	&0.117	&1.380	&16.8	&74.7	&8.48	&872.27 \\
\bottomrule
\end{tabu}
}
        \label{tab:kernel}
        \vspace{-0.05in}
\end{table}

\rebuttal{To better understand the efficiency bottlenecks in our current non-fused \texttt{INT4} implementation, we profile representative transformer operators on an NVIDIA $A800$ (SM$80$) by annotating stages such as activation quantization (\texttt{Quant}), \texttt{INT4} general matrix multiplication (\texttt{INT4GEMM}), and dequantization (\texttt{DeQuant}) with NVTX ranges (\texttt{torch.cuda.nvtx.range\_push}/\texttt{pop}). For each operator in Tab.~\ref{tab:kernel}, we report its GEMM dimensions $(M, K, N)$ (\emph{i.e.}, $[M,K]\!\times\![K,N]$), per-stage latency, the fraction of the total operator time, and the effective \texttt{INT4} GEMM throughput computed as
\begin{equation}
\begin{array}{cc}
     \text{TOPS} = \frac{2MNK}{t_{\texttt{INT4GEMM}}}\,\div\,10^{12},
\end{array}
    \label{eq:kernel}
\end{equation}
where $t_{\texttt{INT4GEMM}}$ is the measured \texttt{INT4GEMM} time in seconds. Across all tested shapes, the \texttt{INT4} GEMM kernels achieve $345\text{--}872$ TOPS, within the same order of magnitude as the $A800$’s \texttt{INT4} tensor-core peak ($1248$ TOPS). However, the surrounding non-GEMM stages (activation quantization and dequantization) still account for $25\text{--}45\%$ of the operator latency, primarily due to extra global memory traffic and the absence of fused epilogues. Given the measured GEMM time fraction $G$ and assuming kernel fusion removes a fraction $r$ of non-GEMM overhead (\emph{e.g.}, fusing quant/dequant and avoiding intermediate reads/writes), the achievable speedup is approximated by
\begin{equation}
\begin{array}{ll}
    \rho \approx \frac{1}{G + (1-G)(1-r)}.
\end{array}
\end{equation}
Using our NVTX-derived $G$ values, $r=0.6$ yields $\rho\!\approx\!1.18{\times}$ (\texttt{down\_proj}), $1.20{\times}$ (\texttt{k2/v2}), $1.28{\times}$ (\texttt{up\_proj}), $1.38{\times}$ (\texttt{q1/k1/v1/o1/q2/o2}); and $r\approx0.8$ yields $\rho\!\approx\!1.25{\times}$, $1.28{\times}$, $1.40{\times}$, and $1.57{\times}$, respectively. These results indicate that a $1.2\text{--}1.6{\times}$ per-layer speedup is a realistic target once fusion is introduced.}

\section{Breakdown Latency Analysis}\label{sec:dit_block}
\rebuttal{Because a video DiT is implemented as a stack of identical blocks, we report the latency breakdown of a single DiT block to estimate its end-to-end impact (see Tabs.~\ref{tab:dit_block_breakdown}-\ref{tab:dit_attn_breakdown}). Within this block, attention computation ($51.8\%$) is the dominant cost, while linear projections ($24.5\%$) account for a large share of the remaining latency. To be noted, components other than linear projections can be accelerated by orthogonal strategies:
\begin{itemize}[leftmargin=*, nosep]
    \item \textit{Attention}: Sparse attention, such as SVG~\citep{xi2025sparsevideogenacceleratingvideo}, achieves a $1.73\times$ speedup for Self-Attention computation ($31.48$ \emph{vs.} $18.23$).
    \item \textit{Other}: This category is largely composed of memory-bound operations, including \texttt{RoPE}, \texttt{norm}, and \texttt{reshape}, \emph{etc}. These operations often launch many small kernels, so techniques such as CUDA Graphs and \texttt{torch.compile} can reduce dispatch overhead and enable more effective kernel fusion. Additionally, combined with fused and layout-aware kernels~\citep{xi2025sparsevideogenacceleratingvideo}, the runtime of this category can be reduced by $5.59\times$ ($14.64$ \emph{vs.} $2.620$).
\end{itemize}
With these strategies applied, linear projections occupy a non-trivial $41.38\%$ of the block runtime. Therefore, reducing the latency of the linear projections is an important step toward further end-to-end speedups. In this work, \texttt{W4A4} quantization achieves a $2.52\times$ speedup for these linear projections ($15.14$ \emph{vs.} $5.991$). In addition, we plan to extend QVGen to \texttt{W4A4} attention quantization, which can further accelerate attention computation.}

\begin{table}[!ht]
\vspace{-0.05in}
    \centering
    \setlength{\tabcolsep}{4pt}
    \begin{minipage}{0.31\linewidth}
        \centering
        \caption{Latency breakdown (ms) for a DiT block (implemented in \texttt{torch}) on $A800$ (Wan $1.3$B).}
        \label{tab:dit_block_breakdown}
        \vspace{-0.1in}
        \resizebox{0.9\linewidth}{!}{
      \begin{tabu}{l|cc}
    \toprule
    Component & \makecell{Time\\(ms)} & \makecell{Share\\(\%)} \\
    \midrule
    \midrule
    Attention & 32.08 & 51.8 \\
    Linear Projections & 15.14 & 24.5 \\
    Other & 14.64 & 23.7 \\
    \bottomrule
\end{tabu}
}
    \end{minipage}%
    \hfill
    \begin{minipage}{0.31\linewidth}
        \centering
        \caption{Latency breakdown (ms) for linear projections.}
        \label{tab:dit_linear_breakdown}
        \vspace{-0.1in}
        \resizebox{0.9\linewidth}{!}{
      \begin{tabu}{l|c}
    \toprule
    Linear projections & \makecell{Time\\(ms)} \\
    \midrule
    \midrule
    \texttt{q1/k1/v1/o1/q2/o2} & 0.977 \\
    \texttt{k2/v2} & 0.038 \\
    \texttt{up\_proj} & 5.283 \\
    \texttt{down\_proj} & 3.913 \\
    \bottomrule
\end{tabu}
}
    \end{minipage}%
    \hfill
    \begin{minipage}{0.22\linewidth}
        \centering
        \caption{Latency breakdown (ms) for attention.}
        \label{tab:dit_attn_breakdown}
        \vspace{-0.1in}
        \resizebox{0.9\linewidth}{!}{
      \begin{tabu}{l|c}
    \toprule
    Attention & \makecell{Time\\(ms)} \\
    \midrule
    \midrule
    Self-Attention & 31.48 \\
    Cross-Attention & 0.597 \\
    \bottomrule
\end{tabu}
}
    \end{minipage}
    \vspace{-0.05in}
\end{table}

\section{Qualitative Results}\label{sec:vis}
In this section, we present random samples generated by video DMs without cherry-picking, as exhibited from Figs.~\ref{fig:3bit-cog}-\ref{fig:4bit-wan-$14$B}. For a detailed comparison, \textbf{zoom in} to closely examine the relevant frames.

\textbf{$3$-bit quantization.} As shown in Figs.~\ref{fig:3bit-cog} and \ref{fig:3bit-wan}, our method QVGen far outperforms other baselines under $3$-bit quantization. Although $3$-bit quantization still introduces noticeable performance degradation, especially for huge DMs (see Figs.~\ref{fig:3bit-cog-$5$B} and \ref{fig:3bit-wan-$14$B}), we believe QVGen represents a promising step toward practical ultra-low-bit video DMs.

\textbf{$4$-bit quantization.} As depicted in Figs.~\ref{fig:4bit-cog} and \ref{fig:4bit-wan}, previous QAT methods fail to deliver satisfactory results. In contrast, our method QVGen achieves video quality that closely approaches that of the full-precision model. Furthermore, for huge models (see Figs.~\ref{fig:4bit-cog-$5$B} and \ref{fig:4bit-wan-$14$B}), QVGen consistently maintains high visual fidelity and effectively preserves generation quality.

\section{The Use of Large Language Models}\label{sec:llm-usage}
We acknowledge the use of large language models (LLMs), such as OpenAI's GPT-5, as a writing-assistance tool in this work. Their role was strictly limited to proofreading and rephrasing sentences to enhance linguistic quality, without any contribution to the research ideation or experimental results.

\begin{figure}[!ht]
   \centering
   \setlength{\abovecaptionskip}{0.2cm}
   \begin{minipage}[b]{\linewidth}
       \begin{minipage}[b]{0.498\linewidth}
            \centering
            \begin{subfigure}[tp!]{\textwidth}
            \centering
            \subcaption{\texttt{BF16}}
            \includegraphics[width=\linewidth]{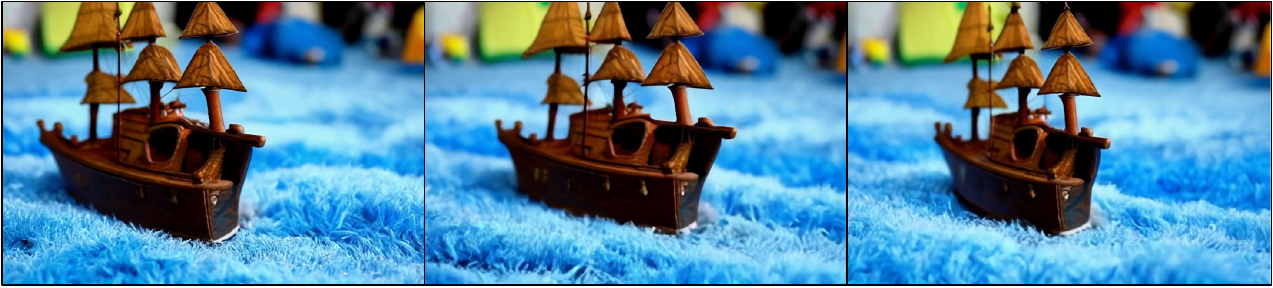}
            \end{subfigure}
       \end{minipage}\hfill
       \begin{minipage}[b]{0.498\linewidth}
            \centering
            \begin{subfigure}[tp!]{\linewidth}
            \centering
            \subcaption{\texttt{W3A3} QVGen (Ours)}
            \includegraphics[width=\linewidth]{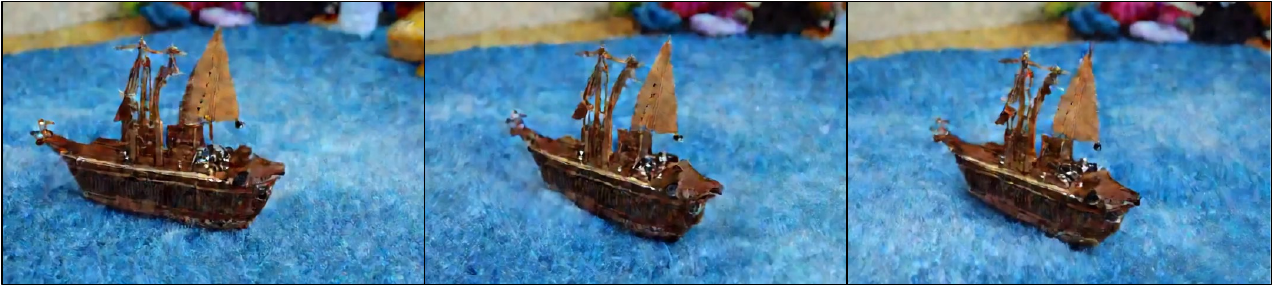}
            \end{subfigure}
       \end{minipage}

       \begin{minipage}[b]{0.498\linewidth}
            \centering
            \begin{subfigure}[tp!]{\textwidth}
            \centering
            \subcaption{\texttt{W3A3} EfficientDM~\citep{he2024efficientdm}}
            \includegraphics[width=\linewidth]{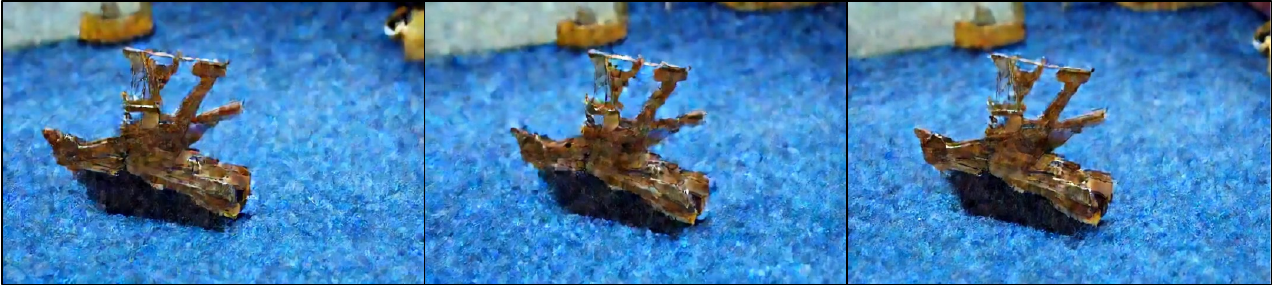}
            \end{subfigure}
       \end{minipage}\hfill
       \begin{minipage}[b]{0.498\linewidth}
            \centering
            \begin{subfigure}[tp!]{\linewidth}
            \centering
            \subcaption{\texttt{W3A3} Q-DM~\citep{li2023qdm}}
            \includegraphics[width=\linewidth]{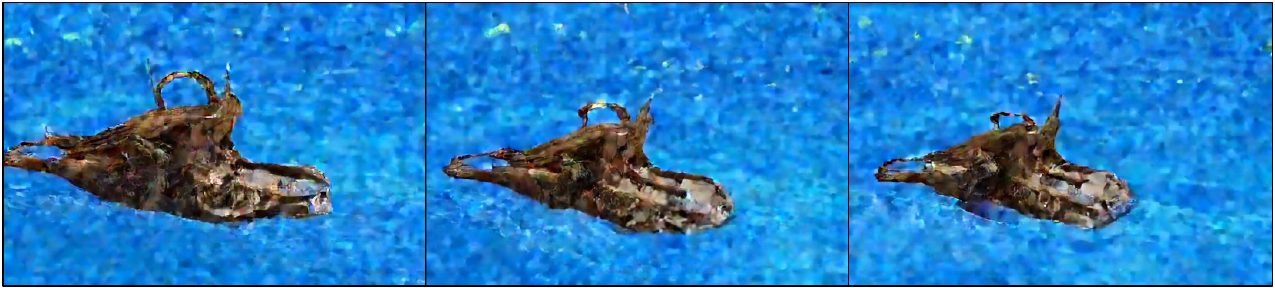}
            \end{subfigure}
       \end{minipage}
        \centering
       \begin{minipage}[b]{0.498\linewidth}
            \centering
            \begin{subfigure}[tp!]{\textwidth}
            \centering
            \subcaption{\texttt{W3A3} LSQ~\citep{esser2020learnedstepsizequantization}}
            \includegraphics[width=\linewidth]{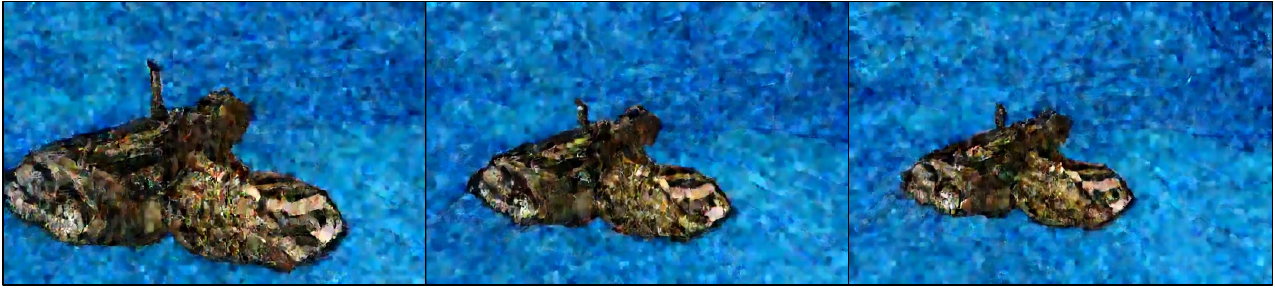}
            \end{subfigure}
       \end{minipage}
   \end{minipage}
   \begin{spacing}{0.7}
       {\tiny Text prompt: \textit{``A detailed wooden toy ship with intricately carved masts and sails is seen gliding smoothly over a plush, blue carpet that mimics the waves of the sea. The ship's hull is painted a rich brown, with tiny windows. The carpet, soft and textured, provides a perfect backdrop, resembling an oceanic expanse. Surrounding the ship are various other toys and children's items, hinting at a playful environment. The scene captures the innocence and imagination of childhood, with the toy ship's journey symbolizing endless adventures in a whimsical, indoor setting.''}}
    \end{spacing}
   \caption{\label{fig:3bit-cog} Comparison of samples generated by full-precision and $3$-bit CogVideoX-$2$B~\citep{yang2025cogvideoxtexttovideodiffusionmodels}.}
\end{figure}

\begin{figure}[!ht]
\vspace{-0.1in}
   \centering
   \setlength{\abovecaptionskip}{0.2cm}
   \begin{minipage}[b]{\linewidth}
       \begin{minipage}[b]{0.498\linewidth}
            \centering
            \begin{subfigure}[tp!]{\textwidth}
            \centering
            \subcaption{\texttt{BF16}}
            \includegraphics[width=\linewidth]{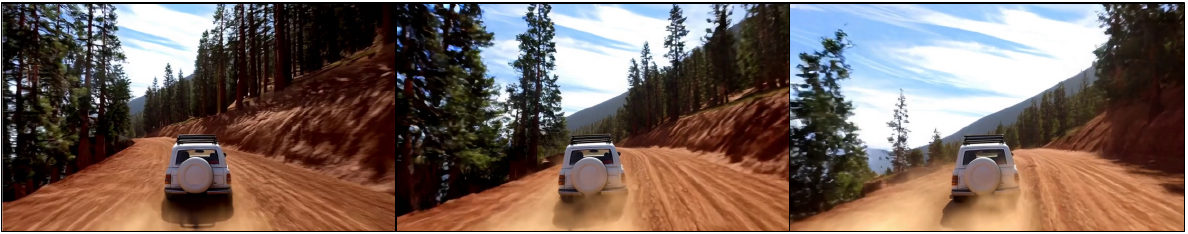}
            \end{subfigure}
       \end{minipage}\hfill
       \begin{minipage}[b]{0.498\linewidth}
            \centering
            \begin{subfigure}[tp!]{\linewidth}
            \centering
            \subcaption{\texttt{W3A3} QVGen (Ours)}
            \includegraphics[width=\linewidth]{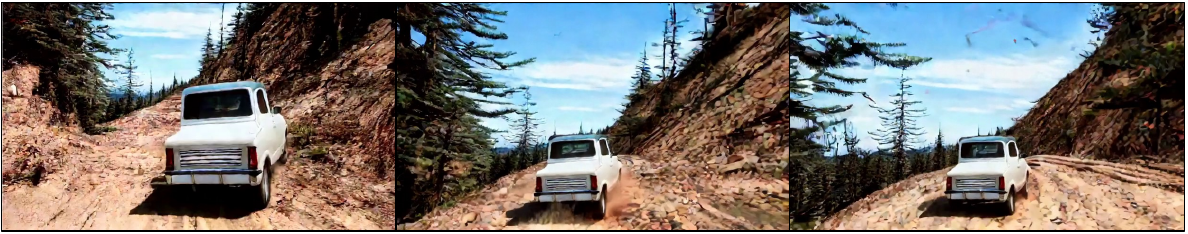}
            \end{subfigure}
       \end{minipage}

       \begin{minipage}[b]{0.498\linewidth}
            \centering
            \begin{subfigure}[tp!]{\textwidth}
            \centering
            \subcaption{\texttt{W3A3} EfficientDM~\citep{he2024efficientdm}}
            \includegraphics[width=\linewidth]{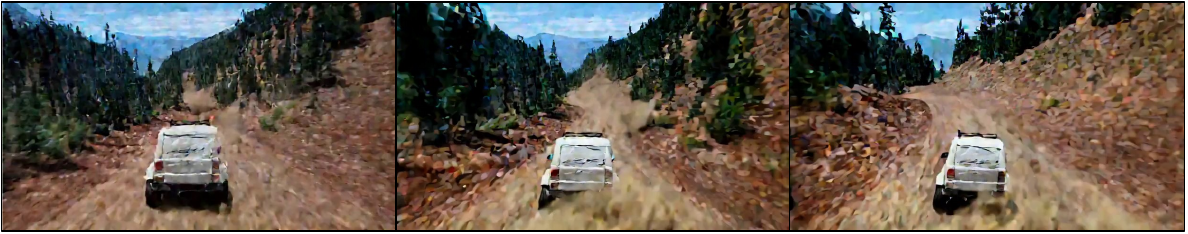}
            \end{subfigure}
       \end{minipage}\hfill
       \begin{minipage}[b]{0.498\linewidth}
            \centering
            \begin{subfigure}[tp!]{\linewidth}
            \centering
            \subcaption{\texttt{W3A3} Q-DM~\citep{li2023qdm}}
            \includegraphics[width=\linewidth]{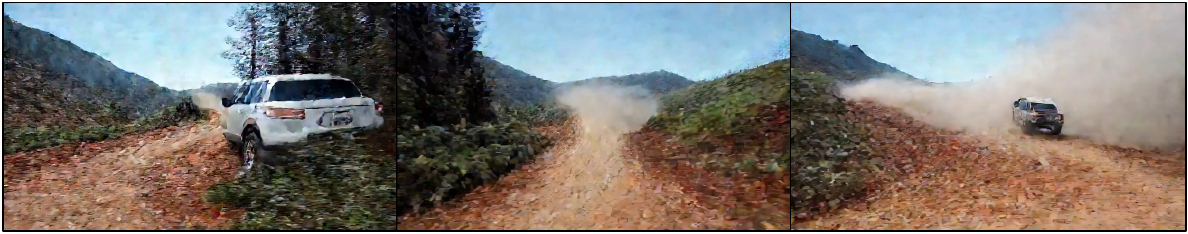}
            \end{subfigure}
       \end{minipage}
        \centering
       \begin{minipage}[b]{0.498\linewidth}
            \centering
            \begin{subfigure}[tp!]{\textwidth}
            \centering
            \subcaption{\texttt{W3A3} LSQ~\citep{esser2020learnedstepsizequantization}}
            \includegraphics[width=\linewidth]{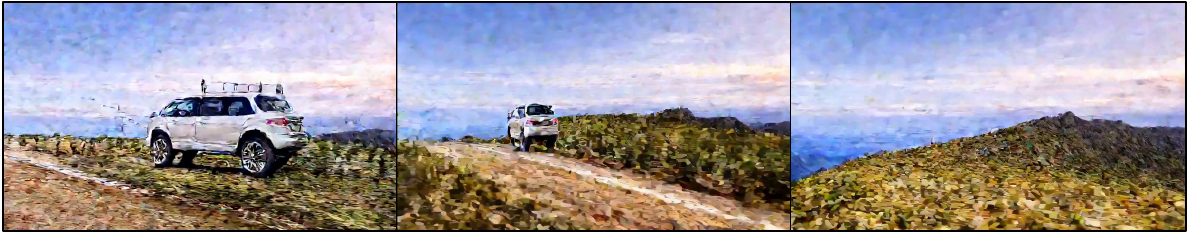}
            \end{subfigure}
       \end{minipage}
   \end{minipage}
   \begin{spacing}{0.7}
       {\tiny Text prompt: \textit{``The camera follows behind a white vintage SUV with a black roof rack as it speeds up a steep dirt road surrounded by pine trees on a steep mountain slope, dust kicks up from its tires, the sunlight shines on the SUV as it speeds along the dirt road, casting a warm glow over the scene. The dirt road curves gently into the distance, with no other cars or vehicles in sight. The trees on either side of the road are redwoods, with patches of greenery scattered throughout. The car is seen from the rear following the curve with ease, making it seem as if it is on a rugged drive through the rugged terrain. The dirt road itself is surrounded by steep hills and mountains, with a clear blue sky above with wispy clouds.''}}
    \end{spacing}
   \caption{\label{fig:3bit-wan} Comparison of samples generated by full-precision and $3$-bit \texttt{Wan} $1.3$B~\citep{wanteam2025wanopenadvancedlargescale}.}
\end{figure}

\begin{figure}[!ht]
\vspace{-0.1in}
   \centering
   \setlength{\abovecaptionskip}{0.2cm}
   \begin{minipage}[b]{\linewidth}
       \begin{minipage}[b]{\linewidth}
            \centering
            \begin{subfigure}[tp!]{\textwidth}
            \centering
            \subcaption{\texttt{BF16}}
            \includegraphics[width=\linewidth]{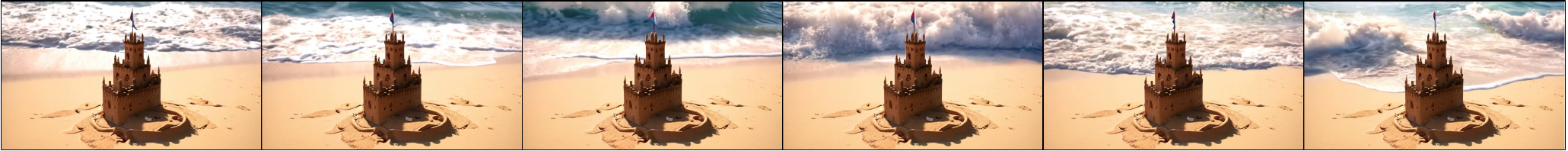}
            \end{subfigure}
       \end{minipage}\hfill
       \begin{minipage}[b]{\linewidth}
            \centering
            \begin{subfigure}[tp!]{\linewidth}
            \centering
            \subcaption{\texttt{W3A3} QVGen (Ours)}
            \includegraphics[width=\linewidth]{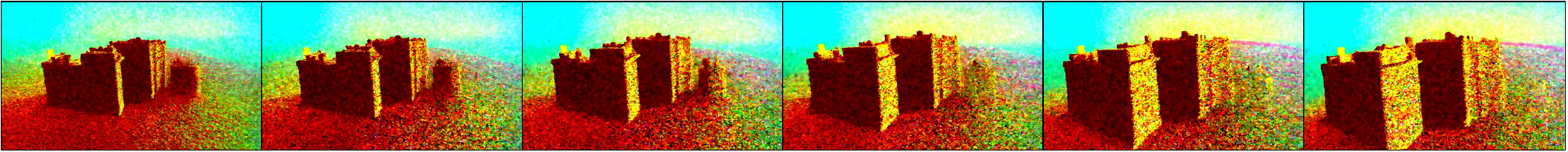}
            \end{subfigure}
       \end{minipage}
   \end{minipage}
   \begin{spacing}{0.7}
       {\tiny Text prompt: \textit{``On a sunlit beach, a small, intricately detailed sandcastle stands near the shoreline, its turrets and walls casting delicate shadows on the golden sand. As the gentle waves lap nearby, the castle begins to transform, growing taller and more elaborate with each passing moment. Its towers stretch skyward, adorned with seashells and seaweed, while intricate patterns emerge on its expanding walls. The sun casts a warm glow, highlighting the castle's evolving grandeur. Finally, it stands as a majestic fortress, complete with a moat and flags fluttering in the breeze, a testament to the magic of imagination and the sea's timeless beauty.''}}
    \end{spacing}
   \caption{\label{fig:3bit-cog-$5$B} Comparison of samples generated by full-precision and $3$-bit CogVideoX1.5-$5$B~\citep{yang2025cogvideoxtexttovideodiffusionmodels}.}
   \vspace{-0.1in}
\end{figure}

\begin{figure}[!ht]
\vspace{-0.1in}
   \centering
   \setlength{\abovecaptionskip}{0.2cm}
   \begin{minipage}[b]{\linewidth}
       \begin{minipage}[b]{\linewidth}
            \centering
            \begin{subfigure}[tp!]{\textwidth}
            \centering
            \subcaption{\texttt{BF16}}
            \includegraphics[width=\linewidth]{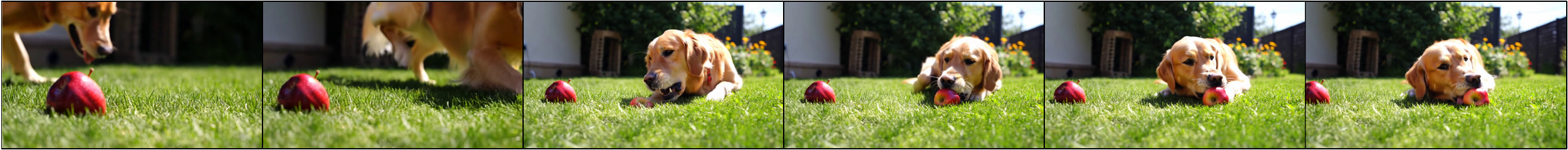}
            \end{subfigure}
       \end{minipage}\hfill
       \begin{minipage}[b]{\linewidth}
            \centering
            \begin{subfigure}[tp!]{\linewidth}
            \centering
            \subcaption{\texttt{W3A3} QVGen (Ours)}
            \includegraphics[width=\linewidth]{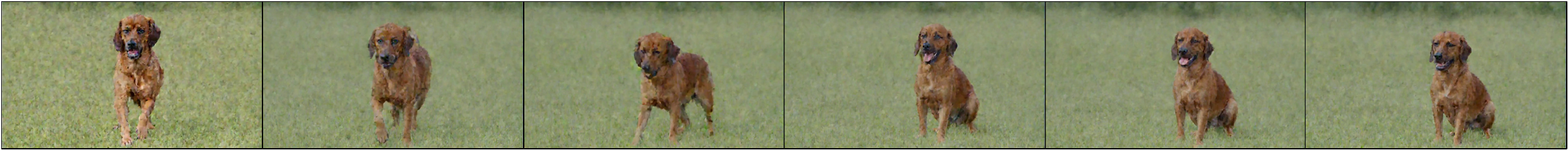}
            \end{subfigure}
       \end{minipage}
   \end{minipage}
   \begin{spacing}{0.7}
       {\tiny Text prompt: \textit{``A medium-sized golden retriever is initially positioned to the left of a juicy red apple. The dog, wagging its tail, notices something interesting and begins to run towards the apple, eventually coming to a playful stop right in front of it. The background is a sunny backyard with green grass and some flowers in the distance. The scene transitions smoothly, capturing the dog's curious and lively nature. The video is filmed in a dynamic style, with close-ups and medium shots to highlight the dog's movements and expressions. The lighting is bright and natural, emphasizing the textures of the dog's fur and the apple.''}}
    \end{spacing}
   \caption{\label{fig:3bit-wan-$14$B} Comparison of samples generated by full-precision and $3$-bit \texttt{Wan} $14$B~\citep{wanteam2025wanopenadvancedlargescale}.}
   \vspace{-0.1in}
\end{figure}

\begin{figure}[!ht]
\vspace{-0.1in}
   \centering
   \setlength{\abovecaptionskip}{0.2cm}
   \begin{minipage}[b]{\linewidth}
       \begin{minipage}[b]{0.498\linewidth}
            \centering
            \begin{subfigure}[tp!]{\textwidth}
            \centering
            \subcaption{\texttt{BF16}}
            \includegraphics[width=\linewidth]{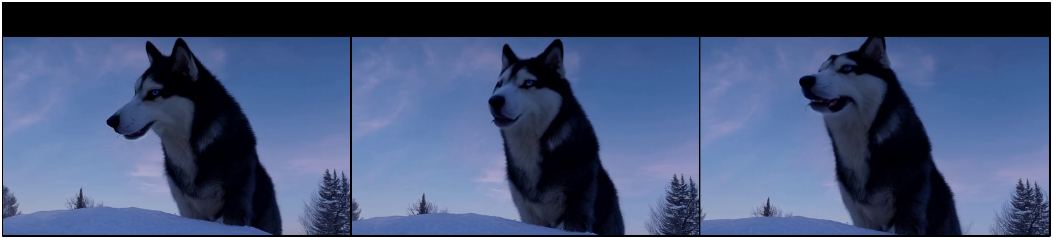}
            \end{subfigure}
       \end{minipage}\hfill
       \begin{minipage}[b]{0.498\linewidth}
            \centering
            \begin{subfigure}[tp!]{\linewidth}
            \centering
            \subcaption{\texttt{W4A4} QVGen (Ours)}
            \includegraphics[width=\linewidth]{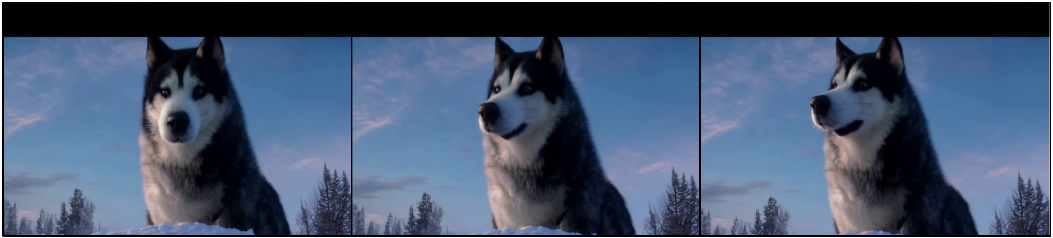}
            \end{subfigure}
       \end{minipage}

       \begin{minipage}[b]{0.498\linewidth}
            \centering
            \begin{subfigure}[tp!]{\textwidth}
            \centering
            \subcaption{\texttt{W4A4} EfficientDM~\citep{he2024efficientdm}}
            \includegraphics[width=\linewidth]{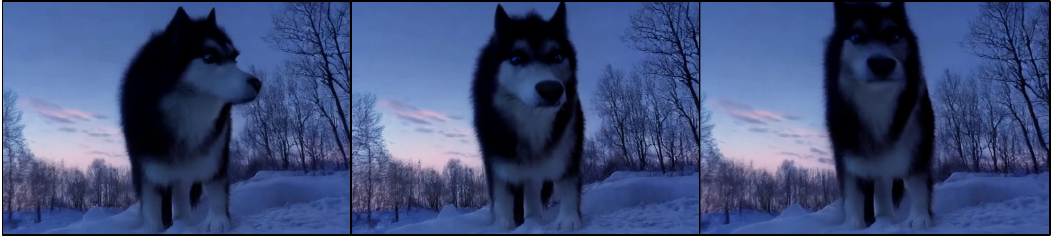}
            \end{subfigure}
       \end{minipage}\hfill
       \begin{minipage}[b]{0.498\linewidth}
            \centering
            \begin{subfigure}[tp!]{\linewidth}
            \centering
            \subcaption{\texttt{W4A4} Q-DM~\citep{li2023qdm}}
            \includegraphics[width=\linewidth]{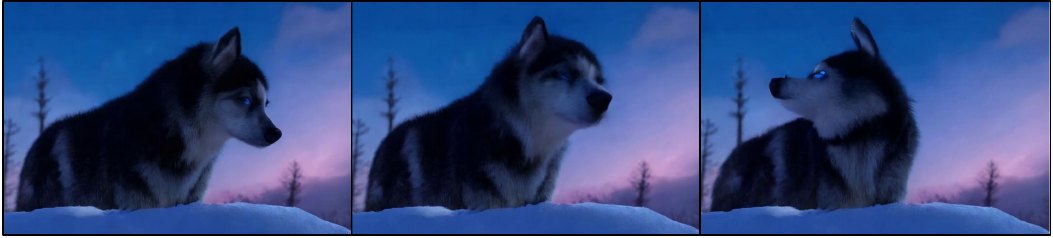}
            \end{subfigure}
       \end{minipage}
        \centering
       \begin{minipage}[b]{0.498\linewidth}
            \centering
            \begin{subfigure}[tp!]{\textwidth}
            \centering
            \subcaption{\texttt{W4A4} LSQ~\citep{esser2020learnedstepsizequantization}}
            \includegraphics[width=\linewidth]{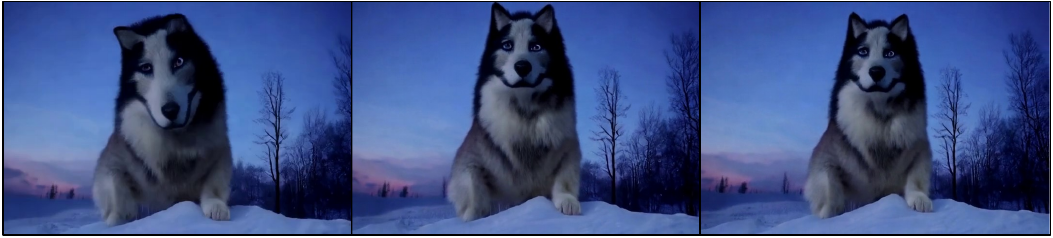}
            \end{subfigure}
       \end{minipage}%
   \end{minipage}
   \begin{spacing}{0.7}
       {\tiny Text prompt: \textit{``A majestic Siberian Husky stands atop a snowy mound, its sharp blue eyes filled with intelligence. The dog’s thick black-and-white fur contrasts against the soft twilight sky, where wisps of clouds drift peacefully. Bare trees surround the scene, standing as quiet guardians in the fading light.''}}
    \end{spacing}
    \vspace{0.02in}
    \begin{minipage}[b]{\linewidth}
       \begin{minipage}[b]{0.498\linewidth}
            \centering
            \begin{subfigure}[tp!]{\textwidth}
            \centering
            \subcaption{\texttt{BF16}}
            \includegraphics[width=\linewidth]{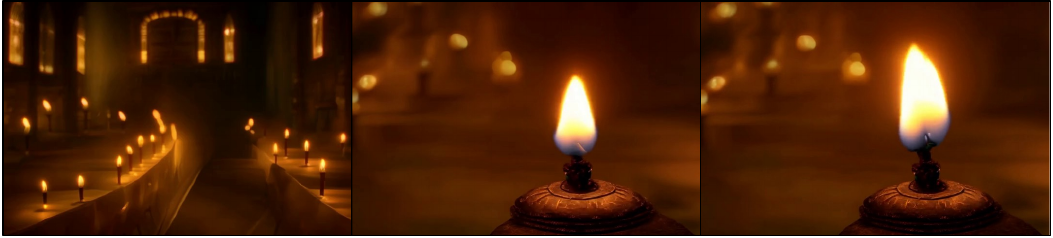}
            \end{subfigure}
       \end{minipage}\hfill
       \begin{minipage}[b]{0.498\linewidth}
            \centering
            \begin{subfigure}[tp!]{\linewidth}
            \centering
            \subcaption{\texttt{W4A4} QVGen (Ours)}
            \includegraphics[width=\linewidth]{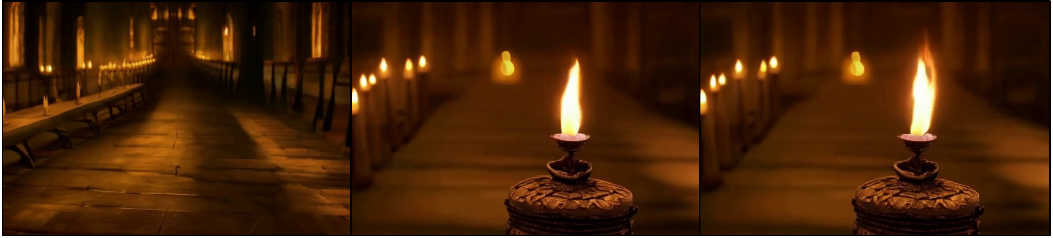}
            \end{subfigure}
       \end{minipage}

       \begin{minipage}[b]{0.498\linewidth}
            \centering
            \begin{subfigure}[tp!]{\textwidth}
            \centering
            \subcaption{\texttt{W4A4} EfficientDM~\citep{he2024efficientdm}}
            \includegraphics[width=\linewidth]{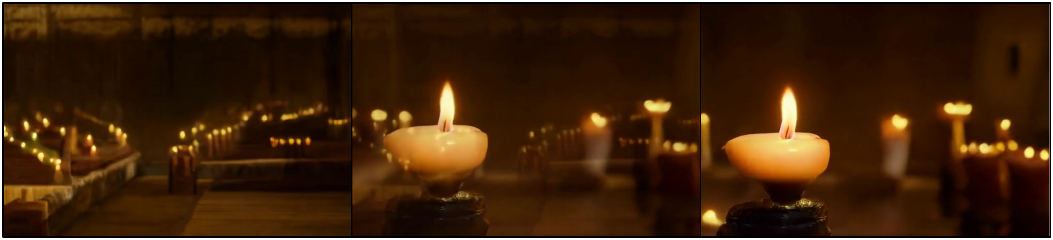}
            \end{subfigure}
       \end{minipage}\hfill
       \begin{minipage}[b]{0.498\linewidth}
            \centering
            \begin{subfigure}[tp!]{\linewidth}
            \centering
            \subcaption{\texttt{W4A4} Q-DM~\citep{li2023qdm}}
            \includegraphics[width=\linewidth]{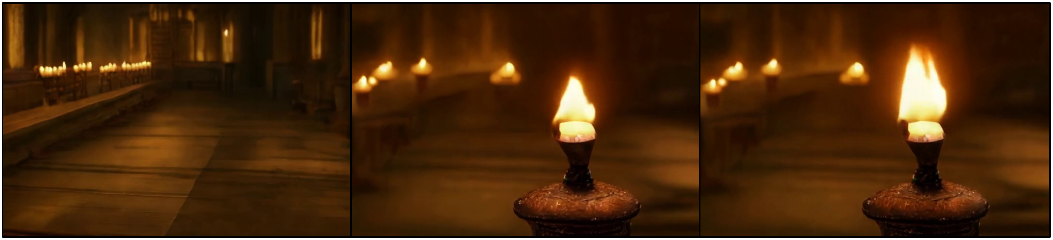}
            \end{subfigure}
       \end{minipage}
        \centering
       \begin{minipage}[b]{0.498\linewidth}
            \centering
            \begin{subfigure}[tp!]{\textwidth}
            \centering
            \subcaption{\texttt{W4A4} LSQ~\citep{esser2020learnedstepsizequantization}}
            \includegraphics[width=\linewidth]{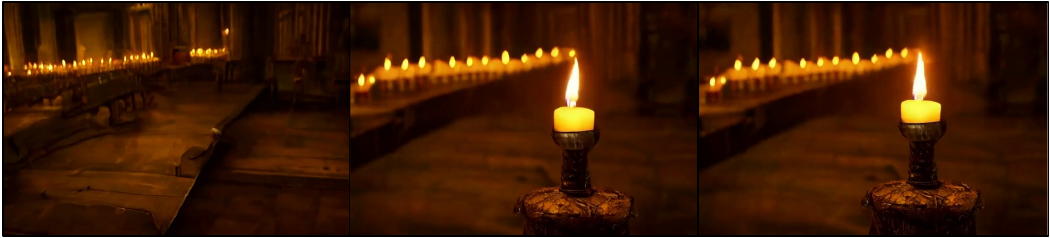}
            \end{subfigure}
       \end{minipage} %
   \end{minipage}
   \begin{spacing}{0.7}
       {\tiny Text prompt: \textit{``In the heart of a grand, medieval hall, the scene is bathed in a warm, golden glow. A long wooden table stretches into the distance, adorned with flickering candles that cast a soft, inviting light. The air is filled with a sense of timelessness and reverence. A single, vibrant candle burns brightly in the foreground, its flame dancing gently atop a small, ornate holder.''}}
    \end{spacing}
   \caption{\label{fig:4bit-cog} Comparison of samples generated by full-precision and $4$-bit CogVideoX-$2$B~\citep{yang2025cogvideoxtexttovideodiffusionmodels}.}
\end{figure}

\begin{figure}[!ht]
\vspace{-0.1in}
   \centering
   \setlength{\abovecaptionskip}{0.2cm}
   \begin{minipage}[b]{\linewidth}
       \begin{minipage}[b]{0.498\linewidth}
            \centering
            \begin{subfigure}[tp!]{\textwidth}
            \centering
            \subcaption{\texttt{BF16}}
            \includegraphics[width=\linewidth]{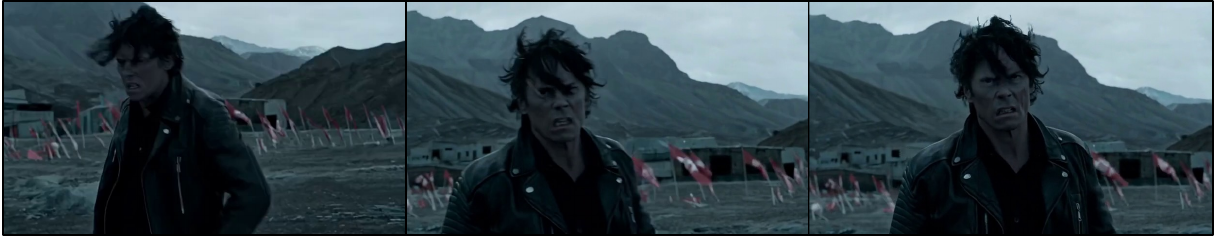}
            \end{subfigure}
       \end{minipage}\hfill
       \begin{minipage}[b]{0.498\linewidth}
            \centering
            \begin{subfigure}[tp!]{\linewidth}
            \centering
            \subcaption{\texttt{W4A4} QVGen (Ours)}
            \includegraphics[width=\linewidth]{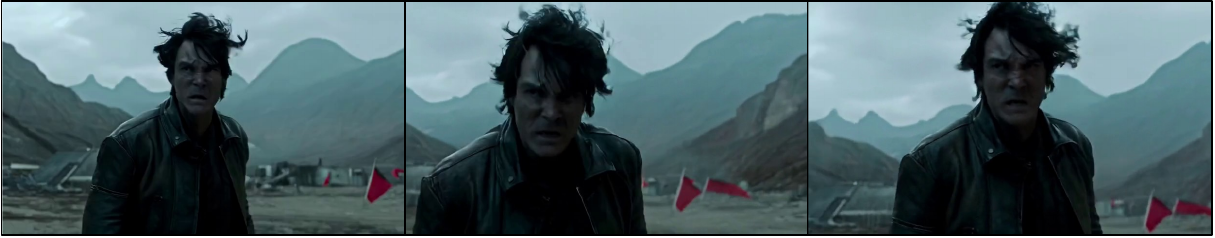}
            \end{subfigure}
       \end{minipage}

       \begin{minipage}[b]{0.498\linewidth}
            \centering
            \begin{subfigure}[tp!]{\textwidth}
            \centering
            \subcaption{\texttt{W4A4} EfficientDM~\citep{he2024efficientdm}}
            \includegraphics[width=\linewidth]{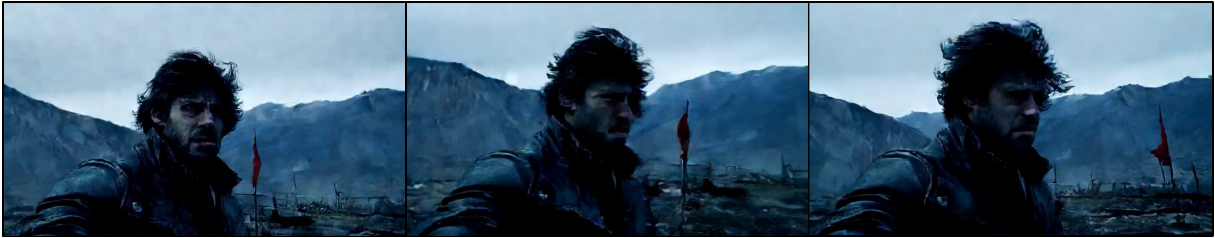}
            \end{subfigure}
       \end{minipage}\hfill
       \begin{minipage}[b]{0.498\linewidth}
            \centering
            \begin{subfigure}[tp!]{\linewidth}
            \centering
            \subcaption{\texttt{W4A4} Q-DM~\citep{li2023qdm}}
            \includegraphics[width=\linewidth]{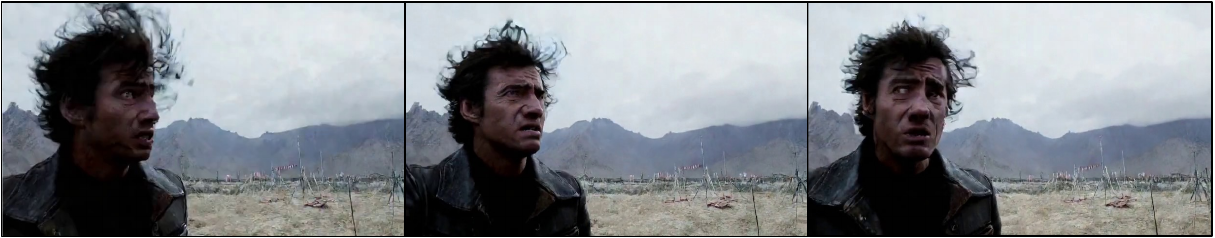}
            \end{subfigure}
       \end{minipage}
        \centering
       \begin{minipage}[b]{0.498\linewidth}
            \centering
            \begin{subfigure}[tp!]{\textwidth}
            \centering
            \subcaption{\texttt{W4A4} LSQ~\citep{esser2020learnedstepsizequantization}}
            \includegraphics[width=\linewidth]{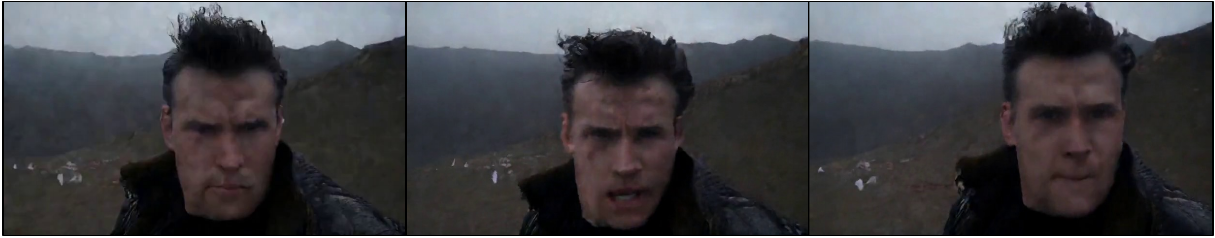}
            \end{subfigure}
       \end{minipage} %
   \end{minipage}
   \begin{spacing}{0.7}
       {\tiny Text prompt: \textit{``A man with tousled dark hair stands in a dramatic landscape, his eyes blazing with fury as he surveys the chaotic scene around him. Clad in a rugged leather jacket, he turns slightly, revealing a determined posture amid a backdrop of crumbling mountains and a valley littered with abandoned structures and scattered flags. The sky is overcast, adding a somber tone to the atmosphere, accentuating his emotional intensity. The camera captures a medium shot, focusing on his tense expression and the desolation surrounding him. The visual style is cinematic with high contrast, enhancing the grim and powerful mood of the moment.''}}
    \end{spacing}
    \vspace{0.02in}
    \begin{minipage}[b]{\linewidth}
       \begin{minipage}[b]{0.498\linewidth}
            \centering
            \begin{subfigure}[tp!]{\textwidth}
            \centering
            \subcaption{\texttt{BF16}}
            \includegraphics[width=\linewidth]{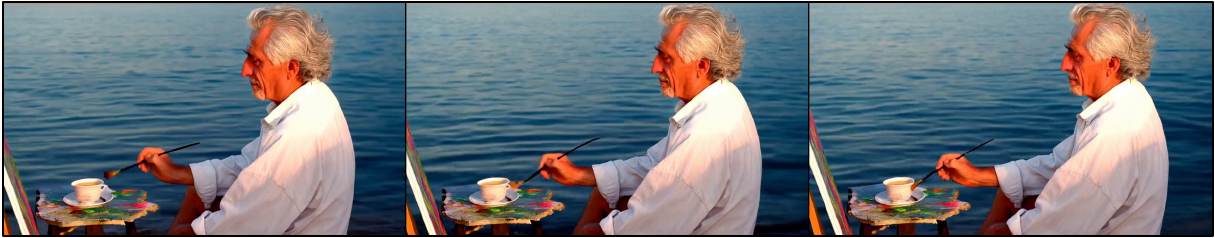}
            \end{subfigure}
       \end{minipage}\hfill
       \begin{minipage}[b]{0.498\linewidth}
            \centering
            \begin{subfigure}[tp!]{\linewidth}
            \centering
            \subcaption{\texttt{W4A4} QVGen (Ours)}
            \includegraphics[width=\linewidth]{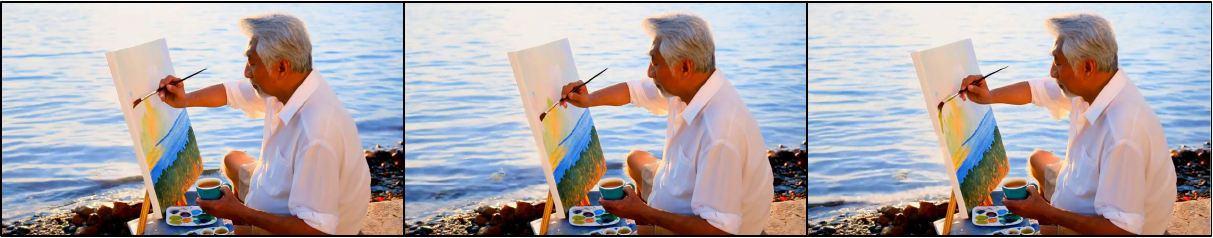}
            \end{subfigure}
       \end{minipage}

       \begin{minipage}[b]{0.498\linewidth}
            \centering
            \begin{subfigure}[tp!]{\textwidth}
            \centering
            \subcaption{\texttt{W4A4} EfficientDM~\citep{he2024efficientdm}}
            \includegraphics[width=\linewidth]{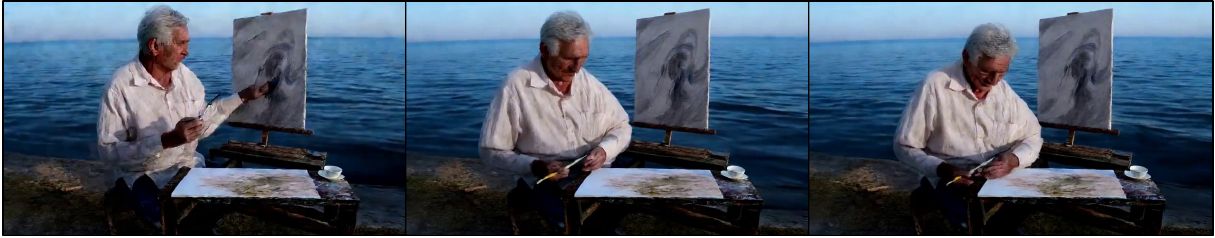}
            \end{subfigure}
       \end{minipage}\hfill
       \begin{minipage}[b]{0.498\linewidth}
            \centering
            \begin{subfigure}[tp!]{\linewidth}
            \centering
            \subcaption{\texttt{W4A4} Q-DM~\citep{li2023qdm}}
            \includegraphics[width=\linewidth]{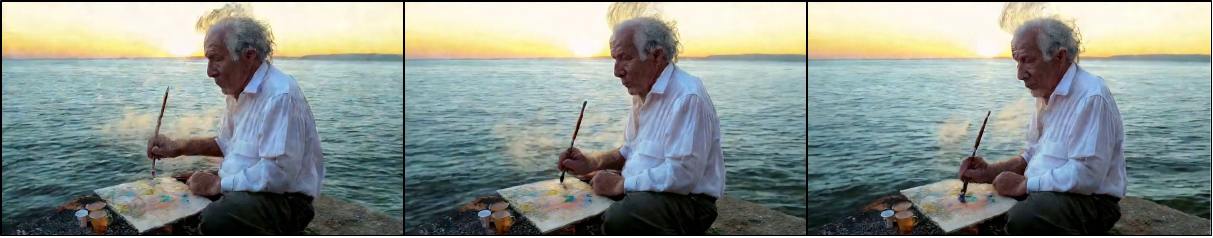}
            \end{subfigure}
       \end{minipage}
        \centering
       \begin{minipage}[b]{0.498\linewidth}
            \centering
            \begin{subfigure}[tp!]{\textwidth}
            \centering
            \subcaption{\texttt{W4A4} LSQ~\citep{esser2020learnedstepsizequantization}}
            \includegraphics[width=\linewidth]{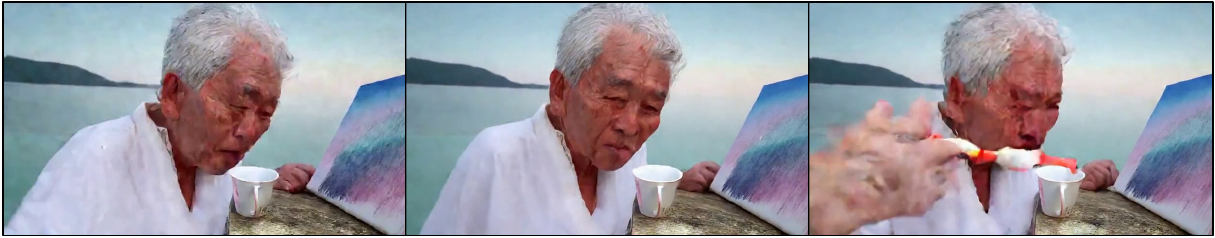}
            \end{subfigure}
       \end{minipage} %
   \end{minipage}
   \begin{spacing}{0.7}
       {\tiny Text prompt: \textit{``An elderly gentleman, with a serene expression, sits at the water's edge, a steaming cup of tea by his side. He is engrossed in his artwork, brush in hand, as he renders an oil painting on a canvas that's propped up against a small, weathered table. The sea breeze whispers through his silver hair, gently billowing his loose-fitting white shirt, while the salty air adds an intangible element to his masterpiece in progress. The scene is one of tranquility and inspiration, with the artist's canvas capturing the vibrant hues of the setting sun reflecting off the tranquil sea.''}}
    \end{spacing}
   \caption{\label{fig:4bit-wan} Comparison of samples generated by full-precision and $4$-bit \texttt{Wan} $1.3$B~\citep{wanteam2025wanopenadvancedlargescale}.}
\end{figure}

\begin{figure}[!ht]
\vspace{-0.1in}
   \centering
   \setlength{\abovecaptionskip}{0.2cm}
   \begin{minipage}[b]{\linewidth}
       \begin{minipage}[b]{\linewidth}
            \centering
            \begin{subfigure}[tp!]{\textwidth}
            \centering
            \subcaption{\texttt{BF16}}
            \includegraphics[width=\linewidth]{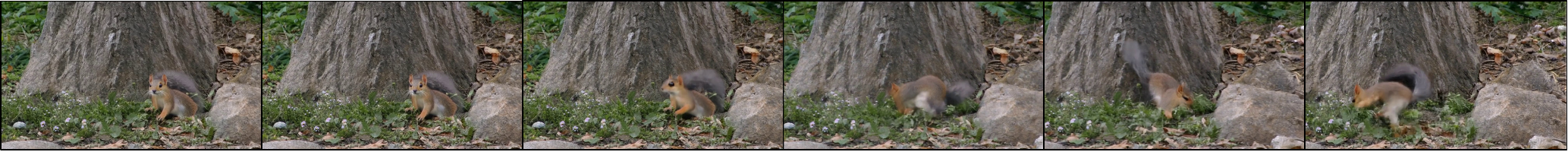}
            \end{subfigure}
       \end{minipage}\hfill
       \begin{minipage}[b]{\linewidth}
            \centering
            \begin{subfigure}[tp!]{\linewidth}
            \centering
            \subcaption{\texttt{W4A4} QVGen (Ours)}
            \includegraphics[width=\linewidth]{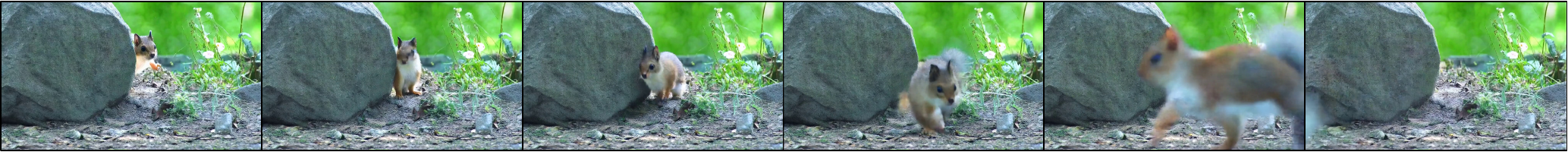}
            \end{subfigure}
       \end{minipage}
   \end{minipage}
   \begin{spacing}{0.7}
       {\tiny Text prompt: \textit{``A playful squirrel, its fur a mix of brown and gray, stands behind a large rock, peering around it cautiously. The rock sits at the base of a tall tree, surrounded by a patch of wildflowers. With a swift motion, the squirrel darts out from behind the rock, running to the left with its tail raised high. It quickly scurries across the forest floor, moving smoothly over the rocks and fallen leaves, eager to continue its journey.''}}
    \end{spacing}
    \vspace{0.02in}
    \begin{minipage}[b]{\linewidth}
       \begin{minipage}[b]{\linewidth}
            \centering
            \begin{subfigure}[tp!]{\textwidth}
            \centering
            \subcaption{\texttt{BF16}}
            \includegraphics[width=\linewidth]{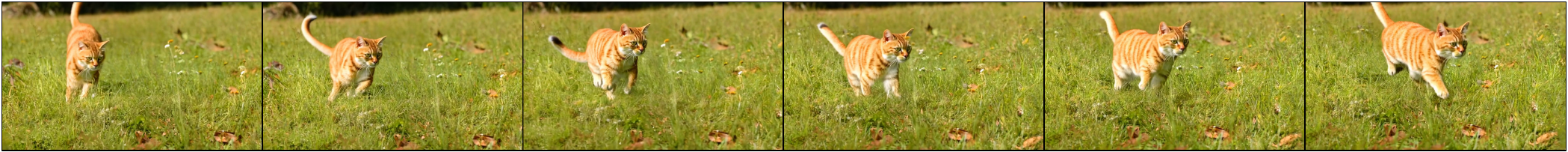}
            \end{subfigure}
       \end{minipage}\hfill
       \begin{minipage}[b]{\linewidth}
            \centering
            \begin{subfigure}[tp!]{\linewidth}
            \centering
            \subcaption{\texttt{W4A4} QVGen (Ours)}
            \includegraphics[width=\linewidth]{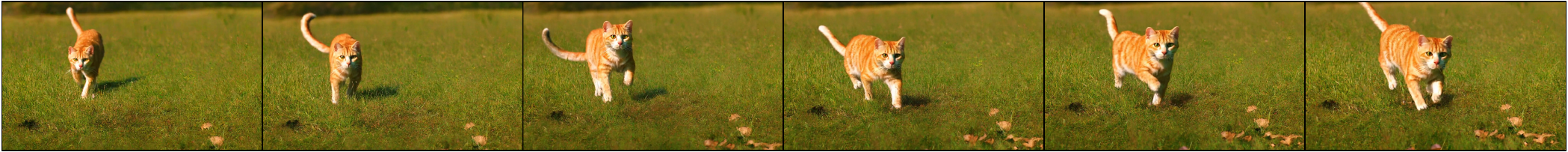}
            \end{subfigure}
       \end{minipage}
   \end{minipage}
   \begin{spacing}{0.7}
       {\tiny Text prompt: \textit{``A lively orange tabby cat with striking green eyes dashes across a sunlit meadow, its fur gleaming in the golden light. The cat's agile movements create a blur of orange against the lush green grass, as it leaps over small wildflowers and navigates around scattered fallen leaves. Its tail flicks with excitepment, and its ears are perked up, capturing every sound in the serene environment. The scene captures the essence of freedom and playfulness, with the cat's paws barely touching the ground, leaving a trail of gentle rustling in its wake.''}}
    \end{spacing}
   \caption{\label{fig:4bit-cog-$5$B} Comparison of samples generated by full-precision and $4$-bit CogVideoX1.5-$5$B~\citep{yang2025cogvideoxtexttovideodiffusionmodels}.}
\end{figure}

\begin{figure}[!ht]
\vspace{-0.1in}
   \centering
   \setlength{\abovecaptionskip}{0.2cm}
   \begin{minipage}[b]{\linewidth}
       \begin{minipage}[b]{\linewidth}
            \centering
            \begin{subfigure}[tp!]{\textwidth}
            \centering
            \subcaption{\texttt{BF16}}
            \includegraphics[width=\linewidth]{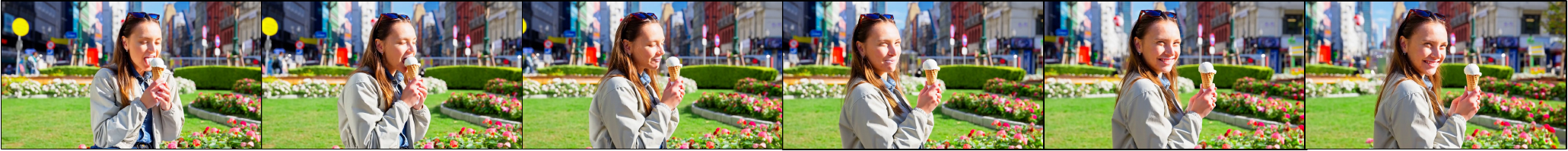}
            \end{subfigure}
       \end{minipage}\hfill
       \begin{minipage}[b]{\linewidth}
            \centering
            \begin{subfigure}[tp!]{\linewidth}
            \centering
            \subcaption{\texttt{W4A4} QVGen (Ours)}
            \includegraphics[width=\linewidth]{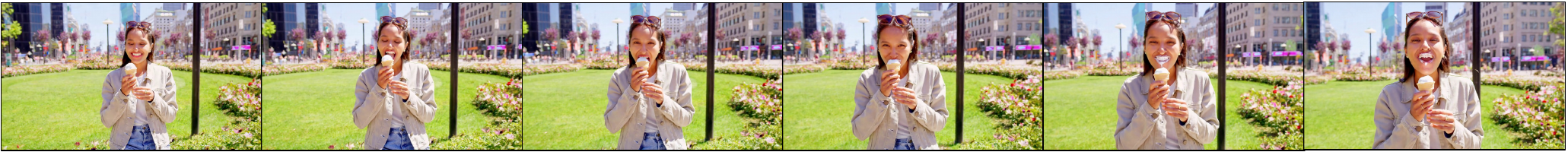}
            \end{subfigure}
       \end{minipage}
   \end{minipage}
   \begin{spacing}{0.7}
       {\tiny Text prompt: \textit{``A person is enjoying a delicious ice cream cone on a sunny day. They are standing in a bustling city park, surrounded by blooming flowers and green grass. The person has a friendly smile on their face, taking small bites of the ice cream as they savor each lick. Their casual attire includes a light jacket and jeans, with a pair of sunglasses perched on their nose. The background shows a vibrant cityscape with tall buildings and colorful street signs. The camera pans slightly from the person to capture the lively atmosphere of the park. Close-up medium shot, showing the person's joyful expression and the melting ice cream.''}}
    \end{spacing}
    \vspace{0.02in}
    \begin{minipage}[b]{\linewidth}
       \begin{minipage}[b]{\linewidth}
            \centering
            \begin{subfigure}[tp!]{\textwidth}
            \centering
            \subcaption{\texttt{BF16}}
            \includegraphics[width=\linewidth]{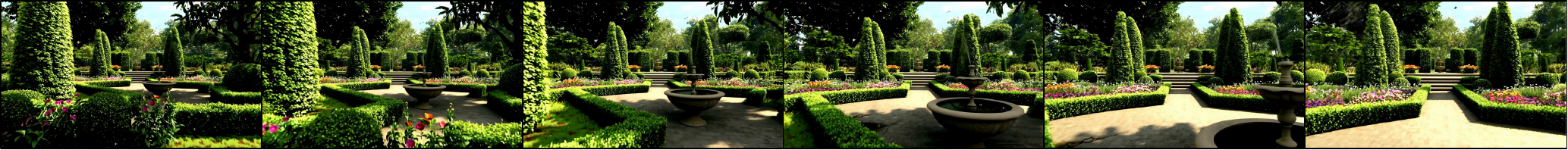}
            \end{subfigure}
       \end{minipage}\hfill
       \begin{minipage}[b]{\linewidth}
            \centering
            \begin{subfigure}[tp!]{\linewidth}
            \centering
            \subcaption{\texttt{W4A4} QVGen (Ours)}
            \includegraphics[width=\linewidth]{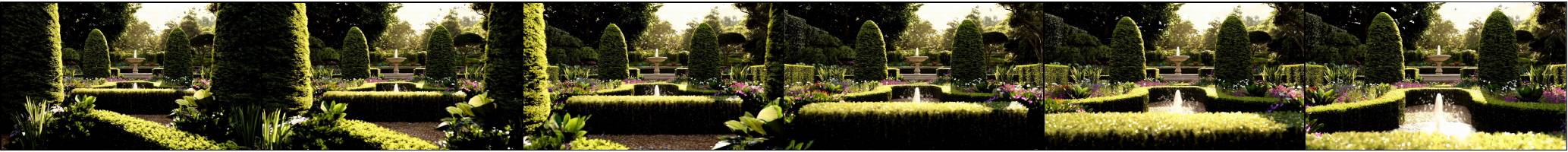}
            \end{subfigure}
       \end{minipage}
   \end{minipage}
   \begin{spacing}{0.7}
       {\tiny Text prompt: \textit{``A smooth, sweeping camera circle around a lush garden. The garden is filled with vibrant flowers, tall green bushes, and neatly trimmed hedges. Sunlight filters through the leaves, casting dappled shadows on the ground. A small fountain sits in the center, gently spraying water into the air. Birds chirp and flutter among the branches. The camera gradually moves from a wide shot of the entire garden to closer views of individual plants and the intricate details of the landscape. The overall atmosphere is serene and inviting, with a soft, natural lighting style. Wide to medium shot, pans smoothly around the garden.''}}
    \end{spacing}
   \caption{\label{fig:4bit-wan-$14$B} Comparison of samples generated by full-precision and $4$-bit \texttt{Wan} $14$B~\citep{wanteam2025wanopenadvancedlargescale}.}
   \vspace{-0.1in}
\end{figure}

\end{document}